\numberwithin{equation}{section}
\title{Riemannian Manifold Learning for Stackelberg Games with Neural Flow Representations}
\author{
    Larkin Liu\textsuperscript{\rm 1,2, *} 
    Kashif Rasul\textsuperscript{\rm 3},
    Yutong Chao\textsuperscript{\rm 1},
    Jalal Etesami\textsuperscript{\rm 1}
}
\begin{document}

\maketitle

\begin{abstract}
We present a novel framework for online learning in Stackelberg general-sum games, where two agents, the leader and follower, engage in sequential turn-based interactions. At the core of this approach is a learned diffeomorphism that maps the joint action space to a smooth spherical Riemannian manifold, referred to as the \textit{Stackelberg manifold}. This mapping, facilitated by neural normalizing flows, ensures the formation of tractable isoplanar subspaces, enabling efficient techniques for online learning. Leveraging the linearity of the agents' reward functions on the Stackelberg manifold, our construct allows the application of linear bandit algorithms. We then provide a rigorous theoretical basis for regret minimization on the learned manifold and establish bounds on the simple regret for learning Stackelberg equilibrium. This integration of manifold learning into game theory uncovers a previously unrecognized potential for neural normalizing flows as an effective tool for multi-agent learning. We present empirical results demonstrating the effectiveness of our approach compared to standard baselines, with applications spanning domains such as cybersecurity and economic supply chain optimization.
\end{abstract}

\section{Introduction}


A Stackelberg game consists of a sequential decision-making process involving two agents, a leader and a follower. This framework, introduced in \citep{stackelberg:1934} models hierarchical strategic interactions where the leader moves first, anticipating the follower's best response, and then the follower reacts accordingly. These games have become central to understanding interactions in various fields, from economics to societal security, providing a formal method for analyzing situations where one party commits to a strategy before the other, affecting the subsequent decision-making process and reward outcomes. Over time, Stackelberg games have evolved to address more complex environments, incorporating factors like imperfect information and no-regret learning of system parameters. The solution revolves around finding a Stackelberg equilibrium, where the leader optimizes her strategy assuming the follower type, which affects how the follower optimizes his utility based on the leader's action. \citep{Korzhyk:2010,Tambe:2015_behave_stack}.


Practical applications of Stackelberg games often face key challenges, primarily due to uncertainty about the follower's rationality or preferences, and imperfect information on reward outcomes. These issues complicate the leader's decision-making. In security domains, randomized strategies and robust optimization mitigate risks from incomplete information, as seen in systems like ARMOR and PROTECT \citep{Jiang:2013_monotonic_maxmin, Tambe:2015_behave_stack, kar:2017_trends_ssg, Jain:2011_double_oracle, Shieh:2012}. Stackelberg games are also used in supply chain optimization, addressing uncertainties like demand \citep{liu:2024_stacknews_adt, cesa:2022-supp-chain-games}, and in conversational AI, where agents anticipate user behaviour to adjust responses \citep{Nguyen:2014}. For non-cooperative multi-agent games with additive noise, sublinear regret is achievable via gradient-based methods like AdaGrad \citep{duchi:2011-adagrad}, though constrained by noise magnitude \citep{hsieh:2023-convexgames}. These settings often extend to unlimited players, with regret worsening as the player count grows. We focus on two-player Stackelberg games with well-defined best response functions, common in economics and adversarial machine learning \citep{zhou:2016_stack_model, wang:2024_stackRL}. 


\textbf{Problem Setting:} We consider a two-player Stackelberg game where player A leads and player B responds. Stackelberg games are sequential, meaning that the players take turns and, the follower can \textit{best respond} to the leader's action, given information available to him. The best response of player B lies on a manifold within a subspace of the joint action space $\setA \cross \setB$. We define this Stackelberg game setting in the framework of optimal transport, where the structure of the best response function $\bR{\cdot}$ gleans simplifications to the solution methodology to obtain Stackelberg regret. This research focuses on applying multi-armed bandit (MAB) methods, particularly in Stackelberg equilibrium settings, to achieve sublinear regret. It explores the utilization of geometric topologies to better understand agent behaviour and simplify computations in a game theoretic manner. 


\textbf{Key Contributions:} We introduce a novel algorithm that advances Stackelberg learning under imperfect information, akin to \citep{balcan:2015-stack-learn-sec} and \citep{haghtalab:2022-stack-non-myo}, providing a systematic framework for efficiently solving equilibrium in such settings. Central to our work is a feature map using neural normalizing flows, transforming the joint action space into a tractable embedding, the \textit{Stackelberg manifold}. Leveraging its geodesic properties and exploiting the linearity of the agents' reward functions on the manifold, our approach enables efficient computation of Stackelberg equilibria under no-regret learning, especially with parameter uncertainty. We also establish a rigorous theoretical foundation for optimizing Stackelberg games on spherical manifolds. Empirical simulations in supply chain management and cybersecurity validate our method, showing superior computational efficiency and regret minimization compared to standard baselines.

\section{Formal Definitions}

In a Stackelberg game, two players take turns executing their actions. Player A is the leader, she acts first with action $\aB$ selected from her action space $\setA$. Player B is the follower, he acts second with action $\bB\in\setB$. The follower acts in response to the leader's action, and both players earn a joint payoff as a function of their actions. 

\subsection{Repeated Stackelberg Games}

In a repeated Stackelberg game, the leader chooses actions $\mathbf{a}^t \in \mathcal{A}$, and the follower reacts with actions $\mathbf{b}^t \in \mathcal{B}$ at each round $t\!=\! 1, \dots, T$. 
The leader's strategy $\pi_A(\cdot|\mathcal{H}_t)$ is a probability distribution over the action space $\mathcal{A}$ which selects $\mathbf{a}^t$ based on past joint actions up to time $t$, i.e., $\mathcal{H}_t\!:=\!\{(\mathbf{a}^\tau, \mathbf{b}^\tau)| \tau\!<\!t\}$. 
Similarly, the follower's strategy $\pi_B(\cdot|\mathcal{H}_t)$ is a conditional probability distribution over $\mathcal{B}$ which determines $\mathbf{b}^t$ given the full history,   $\mathcal{H}_t:=\mathcal{H}_t\cup\{\mathbf{a}^t\}$. 

\textbf{Best Response Strategy of the Follower:} To be specific, the follower selects his best response strategy at round $t$ by maximizing his expected reward function $\utlB(\aB, \bB):\mathcal{A}\times\mathcal{B}\rightarrow\mathbb{R}$ given that the leader has played action $\aB^t$. 
Since we assume that the reward function solely depends on the most recent pairs of actions, the follower's best strategy is first order Markov, i.e., $\pi_B(\cdot|\mathcal{H}_t)=\pi_B(\cdot|\aB^t)$. 
Formally, the follower's best response at round $t$ is given by,

\vspace{-0.4cm}

\begin{align}
    \pi^*_{B}(\bB|\aB^t) &\equiv \underset{ \polB \in \Pi_{\setB} }{\mathrm{argmax}} \ \expeC_{\pi_B} [\utlB(\aB, \bB)| \aB=\aB^t], \label{eq:best_str_b} \\
    \mathfrak{B}(\aB^t) &\equiv \{\bB\in\setB| \pi^*_{B}(\bB|\aB^t)>0\}. \label{eq:br-def}
\end{align}

\vspace{-0.1cm}




where $\Pi_{\setB}$ is the space of probability distributions over the action space $\mathcal{B}$ and the expectation is taken with respect to the strategy of the follower. 
In this case, we can define the set of follower's best responses in Eq. \eqref{eq:br-def}. Analogously, the leader aims at maximizing the expected utility $\expeC [\utlA(\aB^t, \bB^t)]:\mathcal{A}\times\mathcal{B}\rightarrow\mathbb{R}$ that is a deterministic function solely driven by her action $\aB^t$ followed by the reaction of the follower $\bB^t$.





\textbf{Stackelberg Equilibrium:} Consider a follower whose best response is optimal. We denote this scenario as Stackelberg Oracle (SOC) learning. From the leader's perspective, the uncertainty is not necessarily over the system, but rather the strategy of the follower $\polB(\cdot)$. \textit{Stackelberg equilibrium} $(\pi^*_A,\pi_B^*)$ is achieved when the follower is best responding, according to Eq. \eqref{eq:br-def}, and the leader acts with an optimal policy given the best response of the follower,

\vspace{-0.25cm}
\begin{align}\label{eq:opt_leader_mu_a}
    \pi^*_A &\equiv \arg\max_{\pi_A \in \Pi_{\setA}} \expeC_{\pi_A,\pi^*_{B}}[\mu_A(\aB, \bB)],
 \\ \notag
\expeC_{\pi_A, \pi^*_B}[\mu_A] &= \int_{\setA} \pi_A(\aB) \int_{\setB} \mu_A(\aB, \bB) \pi^*_B(\bB | \aB) \, d\bB \, d\aB.
\end{align}

\vspace{-0.2cm}

\begin{figure}
    \centering
    \includegraphics[width=0.9\linewidth]{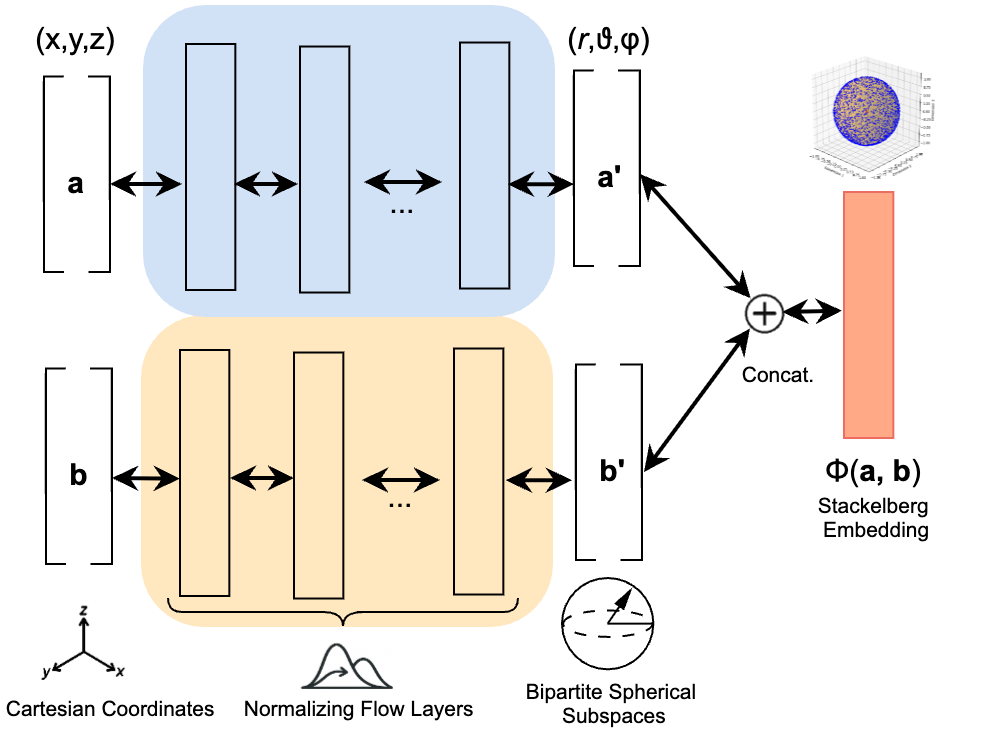}
    \caption{\textbf{Bipartite \& Bijective Neural Flow Architecture:} We illustrate the bipartite structure of the normalizing flow architecture. Two players present joint actions $(\aB, \bB)$, where each vector is mapped separately through a series of bijective transforms consisting of normalizing flow layers. Each player's action independently controls one subspace of the spherical manifold. The sequence of bijective transformations retain a fully bijective network from the ambient joint action space to the manifold space $\stEmb(\aB, \bB)$. The network is invertible by design, and features a bipartite input.}
    \label{fig:bipartite_nf}
    \vspace{-11.8pt}
\end{figure}

\subsection{The Stackelberg Manifold} \label{sec:stack_embedding}

To address the complexity of solving for Stackelberg equilibrium under uncertainty, we propose mapping actions from the ambient joint action space onto a well behaved spherical manifold $\stEmb$. This approach offers key advantages. First it simplifies the problem by optimizing on an intrinsic geometric structure (e.g., a unit sphere) enabling faster computation and convenient constraint enforcement. The manifold's smoothness also allows for efficient optimization via methods like Riemannian gradient descent \citep{bonnabel:2013_riemman_sgd}. The core idea is to shift the complexity of learning in Stackelberg games by transforming the action space into a more tractable representation. Rather than relying on classical multi-agent learning, we instead learn a neural representation for $\stEmb$ to enable simplified equilibrium learning.



This concept of mapping the data from the ambient space, in our case defined by the joint action space $\setA \cross \setB$, onto a latent space $\stEmb$ has been explored in several prior works. For a well defined manifold, the typical approach is to learn a diffeomorphism between the ambient data space, and the objective manifold, which is a subspace of the ambient data space \citep{rezende:2020_tori_sphere} \citep{gemici:2016_riemann_mani}. Suppose the manifold is not given, or there lies flexibility in defining the structure of such a manifold, certain manifold learning techniques could be devised \citep{brehmer:2020_manifold_flows}. These approaches typically define invertible probability density maps between the ambient data space, the latent space, and the manifold space.



\subsection{Normalizing Flows for Action Space Projection}

We leverage \textit{normalizing flows} to map a compact joint action space $\mathcal{A} \times \mathcal{B} \subseteq [0, K]^D$ with arbitrarily large $K \in \mathbb{R}$, onto a manifold, $\stEmb$ embedded in $\mathbb{R}^D$ \citep{rezende:2015variational, dinh:2016density, papamakarios:2021normalizing_flows}. Normalizing flows are a class of generative models that transform a high dimensional simple distribution into a complex one through a series of invertible bijective mappings using neural networks that are computationally tractable. The joint action space consists of actions taken by two agents, denoted as $\aB \in \mathcal{A}$ and $\bB \in \mathcal{B}$, modelled via normalizing flows to ensure bijectivity and a tractable density estimate. Let $x \in \setA \cross \setB$, the model density $p_X(x)$ for a data point $x \in \mathbb{R}^D$ is given by,
\begin{equation}
    p_X(x) = p_Z(\funcNf(x)) \left| \det \left( \frac{\partial \funcNf(x)}{\partial x} \right) \right|. \label{eq:jacobian_norm_flow}
\end{equation}

Here, $Z$ represents the latent space with a simple distribution, and $\left| \det \left( \partial \funcNf(x) / \partial x \right) \right|$ is the Jacobian determinant of the transformation $\funcNf: \mathbb{R}^D \rightarrow \mathbb{R}^D$. Several open-source methodologies and codebases have been developed to address this manifold mapping problem via normalizing flows \citep{brehmer:2020_manifold_flows}. We adopt the \texttt{nflows} package from \citep{durkan:2020_nflows} into our approach. The key contribution of our application is the isolation of the input heads into two separate partitions of normalizing flows followed by concatenation of the outputs (see Fig. \ref{fig:bipartite_nf}). This allows us to control the subspace induced by the leader's action, $\aB \in \setA$, independently. (We provide detailed model specifications in Appendix \ref{sec:nn_arch_details}.) The neural flow network is designed to be invertible, but could  experience some reconstruction error due to numerical instability or rounding errors - which we aim to minimize. Our empirical results demonstrate that this error is negligible (see Table \ref{tab:flow_results}).

\subsection{Specifications of the Feature Map $\phi(\aB,\bB)$} \label{sec:stack_emb_ass}

\paragraph{Feature Map $\phi(\cdot)$:} Common in the linear bandit literature, we propose a \textit{feature map} $\phi: \mathcal{A} \cross \mathcal{B} \mapsto \mathbb{R}^D$ which maps the joint action spaces of the agents, $\mathcal{A} \cross \mathcal{B}$, to $ \mathbb{R}^D$  \citep{zanette:2021-doe_bandit, moradipari:2022feature_map, amani:2019linear_bandit_safety}. Further, we introduce a concept known as the \textit{Stackelberg manifold}, denoted by $\stEmb$, which is defined as the image of $\phi$ over the joint action space domain $\mathcal{A} \cross \mathcal{B}$,
\begin{align}
    \stEmb\equiv Im(\phi)=\{\phi(\aB,\bB) | \aB\in\mathcal{A}, \bB\in\mathcal{B}\}.
\end{align}

In principle, the mapping $\phi$ can be constructed via any means. In our case, a normalizing neural flow network (but possibly any other architecture), but should abide by imposed characteristics described later in this Section. To be precise, $\hat{\phi}$ should denote our learned representation of the \textit{ideal} map $\phi$. Provided that we only have access to $\hat{\phi}$, purely for notational convenience, we will use $\phi$ to represent $\hat{\phi}$ moving forward. 



\begin{definition} \label{def:bipartite_sphere_map}
    \textbf{Bipartite Spherical Map $\phi(\aB, \bB)$:} Let $\mathbf{a} \in \setA$ and $\mathbf{b} \in \setB$, and define a mapping $\phi: \setA \times \setB \to \mathcal{S}^{(D-1)}$ from Cartesian coordinates to spherical coordinates on the $D$-dimensional unit sphere $\mathcal{S}^{(D-1)}$. The spherical coordinates are partitioned such that , $\mathbf{a}$ parametrizes a subset of the spherical coordinates $\nu_\mathbf{a}(\mathbf{a})$, and $\mathbf{b}$ parametrizes the remaining coordinates $\nu_\mathbf{b}(\mathbf{b})$. Also, $\nu_\mathbf{a} \cap \nu_\mathbf{b} = \emptyset$, meaning the partitions are disjoint. Thus, the full mapping is given by, 
    \[
    \phi(\mathbf{a}, \mathbf{b}) := \big(
    \nu_\mathbf{a}(\mathbf{a}), \nu_\mathbf{b}(\mathbf{b})\big)^\intercal\in\mathcal{S}^{(D-1)}.
    \]
\end{definition}

\textbf{Mapping to a Spherical Manifold:} The transformation from spherical coordinates to Cartesian coordinates is used to map input features onto an $D$-dimensional spherical manifold. Therefore, in addition to the properties of our feature map $\phi$, we also enforce $\phi$ as a bipartite spherical map from Def. \ref{def:bipartite_sphere_map}. This bipartite spherical map constructs a disjoint spherical mapping to parameterize two subspaces in $\stEmb$. Given two heads in the neural architecture, the head from A specifically controls the azimuthal spherical coordinate and the head from B controls other coordinates. The justification for this mapping involves trade-off between learning an optimal embedding and reducing the complexity inherent in the native multi-agent  problem. When specific multi-agent optimization problems are too complex to solve in their native forms (see Section \ref{sec:empirical_experiments_main} for examples), we leverage normalizing flows as an enabling link \citep{brehmer:2020_manifold_flows, durkan:2020_nflows} to transform the problem into a simpler representation. (A visualization of the empirical mapping results, showcasing the learned bipartite mapping to $\stEmb$ as a 3D spherical surface, is provided in Appendix \ref{sec:isoplanes_computational_viz} and \ref{sec:ambient_to_stemb_viz}. This visualization is generated by varying $\aB$ or $\bB$ to create longitudinal or latitudinal subspaces.)



\textbf{Manifold Learning:} To construct the Stackelberg manifold $\stEmb$, data is first sampled uniformly from the ambient Cartesian space. Given the architecture (see Fig. \ref{fig:bipartite_nf}), the Cartesian data is fed through the neural flow architecture to the corresponding image. We train the model parameters such that the image satisfies properties of invertibility for accurate reconstruction. In addition, we train the model to produce an ideal $\stEmb$, which should be a smooth Riemannian manifold, be  measurable, compact, and forms Lipschitz-continuous image corresponding to the domain $\setA \cross \setB$, naturally admitting geometric analysis (see Appendix \ref{sec:ideal_stack_manifold} for details). Furthermore, we would like to ensure maximal spread across the surface, and ensure stability under small perturbations. We construct this manifold by minimizing a loss function $\mathcal{L}(\phi)$, denoted in Eq. \eqref{eq:manifold_total_loss}.

\vspace{-0.2cm}

\begin{align} 
    \mathcal{L}(\phi) &= \alpha_N \nfLoss 
    + \alpha_R \repuLoss   + \alpha_L \, \underbrace{\Big| \norm{\nabla_{\aB}\phi} + \norm{\nabla_{\bB}\phi} - C \Big|}_{\text{Lipschitz Loss:} \, \lipschitzLoss} \nonumber \\
    &+ \alpha_P \, \underbrace{\text{Var}\Big(\phi(\aB, \bB) - \phi(\mathfrak{J}_\sigma(\aB, \bB)) \Big)}_{\text{Perturbation Loss:} \, \perturbLoss}. \label{eq:manifold_total_loss}
\end{align}

\vspace{-0.1cm}


\textbf{Loss Function Descriptions:} $\mathcal{L}(\phi)$ is an aggregate loss function composed of a convex combination of separate loss functions designed to achieve the ideal manifold behaviour. The normalizing flow loss $\nfLoss$ penalizes misalignment of transformed data with the base distribution while accounting for volume changes from invertible transformations, per Eq. \eqref{eq:jacobian_norm_flow}. Minimizing $\nfLoss$ enables efficient bijective mapping from complex data to simpler distributions (see Appendix \ref{sec:nll_loss_defn}). $\repuLoss$ is the geodesic repulsion loss, penalizing close pairwise elements to maximize coverage over the target manifold (see Appendix \ref{sec:geodesic_repul_loss_defn}). The Lipschitz loss, $\lipschitzLoss$, penalizes large gradient deviations with respect to $\aB$ and $\bB$, keeping the sum of absolute gradients near target $C \in \mathbb{R}$. $\mathfrak{J}_\sigma(\aB, \bB): \setA \times \setB \mapsto \setA \times \setB$ is a Gaussian perturbation function on the joint action space (defined in Appendix \ref{sec:perturb_function_defn}). The perturbation loss, $\perturbLoss$, denoted as the variance between $\phi(\aB, \bB)$ and $\phi(\mathfrak{J}\sigma(\aB, \bB))$, should be minimized further ensure Lipschitzness. (We present detailed justification of the loss function design in Appendix \ref{sec:ideal_stack_manifold}.)

\begin{table}[h]
\centering
\small
\begin{tabular}{
    >{\centering\arraybackslash}p{0.4cm}  
    >{\centering\arraybackslash}p{0.6cm}  
    >{\centering\arraybackslash}p{0.5cm}  
    >{\centering\arraybackslash}p{0.5cm}  
    >{\centering\arraybackslash}p{0.2cm}  
    >{\centering\arraybackslash}p{1.7cm}  
    >{\centering\arraybackslash}p{1.5cm}  
}
\toprule
\textbf{NFL} & $N$ & $\mathsf{Dim}_\setA$ & $\mathsf{Dim}_\setB$ & $D$ & $\nfLoss$ & $\mathcal{L}(\phi)$ \\
\midrule
2  & 1,000  & 2  & 2  & 4  & $1.78 \times 10^{-7}$  & $2.9 \times 10^{-2}$ \\
4  & 10,000 & 10 & 5  & 15 & $8.68 \times 10^{-3}$  & $4.5 \times 10^{-1}$ \\
6  & 50,000 & 10 & 10 & 30 & $9.66 \times 10^{-2}$  & $1.1 \times 10^{-3}$ \\
4  & 10,000 & 3  & 3  & 7  & $9.38 \times 10^{-8}$  & $5.6 \times 10^{-2}$ \\
6  & 50,000 & 5  & 5  & 10 & $9.86 \times 10^{-2}$  & $6.8 \times 10^{-1}$ \\
2  & 1,000  & 10 & 10 & 20 & $1.08 \times 10^{-6}$  & $2.9 \times 10^{-3}$ \\
\bottomrule
\end{tabular}
\caption[Flow layer configuration results]{Reconstruction error across neural flow configurations and joint action dimensions. All samples, with size $N$, were trained on a space uniformly sampled from $[0,1]$, on dimensions $\mathsf{Dim}_\setA$ and $\mathsf{Dim}_\setB$, varying the number of NF layers (NFL), and Stackelberg manifold dimension $D$. All samples trained on 20,000 epochs, and reconstruction error conducted on separately sampled test set of the same sample size.}
\label{tab:flow_results}
\end{table}

\vspace{-0.4cm}

\subsection{Reward Function} \label{sec:reward_function}

\paragraph{Reward Mechanisms:} A Stackelberg game provides two reward functions $\utlA(\aB, \bB)$ and $\utlB(\aB, \bB)$. Both of which are linearizable with sub-Gaussian noises, $\subA$ and $\subB$, i.e.,

\vspace{-0.2cm}

\begin{align}
    \utlA(\aB, \bB) = \langle \theta^*_A, \phi(\aB, \bB) \rangle + \subA, \label{eq:A_reward_inner_prod} \\
    \utlB(\aB, \bB) = \langle \theta^*_B, \phi(\aB, \bB) \rangle + \subB. \label{eq:B_reward_inner_prod}
\end{align}




We assume zero-mean sub-Gaussian distribution for both $\subA$ and $\subB$. The objective is to learn the parameters $\theta^*_A\in\mathbb{R}^D$, and possibly as an extension problem $\theta^*_B$. The parameters of the model can be estimated via parameterized regression,

\begin{align}
    \hat{\theta}_t = (\phi_{1:t} \phi_{1:t}^\top + \lambda_{\text{reg}} I)^{-1} \phi_{1:t}^\top \, \mu_{1:t},  \label{eq:theta_estimation_xtx}
\end{align}

where, for A and B, respectively, $\phi_{1:t}$ represents the sequence of $\phi(\cdot)$ values via the feature map given the action sequences $\aB_{1:t}$ and $\bB_{1:t}$, $\lambda_{\text{reg}}$ serves as a regularization parameter, $I$ is the identity matrix, and $\mu_{1:t}$ are the historical rewards of players A or B (depending on the subscript). Here, we extend the reward structure of classical linear bandits in \citep{abbasi:2011ofu} to a setting where two players jointly decide on the action sequence. (We provide the conditions for which the estimator is consistent in Appendix \ref{sec:ass_lin_reward}.)

\begin{lemma} \label{lem:existence_of_theta}
    \textbf{Linear Relation for Smooth Invertible Maps:} Suppose that \(Y\) can be expressed as \(Y = \langle \tilde{\theta}, \tilde{\phi}(X) \rangle\), where \(\tilde{\phi}: \mathbb{R}^d \to \mathbb{R}^d\) is smooth and bijective, and \(\tilde{\theta} \in \mathbb{R}^d\). Then, for any \(k \geq d\), there exists an alternative set of parameters \(\theta \in \mathbb{R}^k\) and corresponding map  \(\phi: \mathbb{R}^d \to \mathbb{R}^k\) such that, $Y = \langle \theta, \phi(X) \rangle$. (Please see Appendix \ref{prf:existence_of_theta} for proof.)
\end{lemma}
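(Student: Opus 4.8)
The plan is to prove this by direct construction, since the statement is an existence claim that imposes no structural requirement on the target map $\phi$ beyond reproducing the scalar identity $Y = \langle \theta, \phi(X)\rangle$. The guiding idea is that raising the feature dimension from $d$ to any $k \geq d$ amounts to post-composing the given map $\tilde\phi$ with a linear embedding of $\mathbb{R}^d$ into $\mathbb{R}^k$, and transporting $\tilde\theta$ through the adjoint of a left inverse of that embedding so that the inner product is preserved exactly.

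Concretely, I would first fix any matrix $A \in \mathbb{R}^{k \times d}$ of full column rank $d$; such a matrix exists precisely because $k \geq d$, and the canonical choice is the zero-padding embedding $v \mapsto (v, 0, \dots, 0)^{\top}$. Full column rank guarantees a left inverse $A^{+} \in \mathbb{R}^{d \times k}$ with $A^{+} A = I_d$ (for zero-padding, $A^{+} = A^{\top}$, which simply discards the appended coordinates). I then set the lifted feature map $\phi := A\,\tilde\phi : \mathbb{R}^d \to \mathbb{R}^k$ and the lifted parameter vector $\theta := (A^{+})^{\top}\tilde\theta \in \mathbb{R}^k$. The verification reduces to a one-line computation,
\[
\langle \theta, \phi(X)\rangle = \big\langle (A^{+})^{\top}\tilde\theta,\ A\,\tilde\phi(X)\big\rangle = \tilde\theta^{\top} A^{+} A\, \tilde\phi(X) = \tilde\theta^{\top}\tilde\phi(X) = Y,
\]
where the decisive third equality invokes $A^{+}A = I_d$. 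Smoothness of $\phi$ is then automatic, as it is the composition of the smooth map $\tilde\phi$ with the linear (hence smooth) map $A$.

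I do not anticipate a genuine obstacle here: the entire content of the lemma is captured by the existence of a left inverse for a tall, full-rank matrix. The only subtlety worth flagging is that the zero-padding embedding is not surjective onto $\mathbb{R}^k$, so the constructed $\phi$ fails to be a bijection when $k > d$; this is harmless because the lemma asks only for a map, not a diffeomorphism. It is also worth remarking that bijectivity of $\tilde\phi$ is never actually used in the argument --- only that $Y$ is a fixed linear functional of $\tilde\phi(X)$ --- so the hypothesis could in fact be weakened. In the setting of the reward models of Eqs. \eqref{eq:A_reward_inner_prod} and \eqref{eq:B_reward_inner_prod}, this argument justifies that the linear-in-features representation on the $D$-dimensional Stackelberg manifold entails no loss of generality relative to any lower-dimensional linear representation.
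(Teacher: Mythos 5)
Your construction is correct and settles the lemma exactly as stated: zero-padding $\tilde\phi$ via a full-column-rank $A$ and transporting $\tilde\theta$ through $(A^{+})^{\top}$ preserves the inner product, and your observations that bijectivity of $\tilde\phi$ is never needed and that the resulting $\phi$ need not be surjective are both accurate. Your opening move is in fact identical to the first step of the paper's proof, which appends a function $g(X)$ to $\tilde\phi(X)$ and extends $\tilde\theta$ by zeros (your $A$ with $g\equiv 0$). Where the two arguments part ways is that you stop there, while the paper continues: it takes an \emph{arbitrary} smooth bijection $\phi$, builds the transition map $T(y)=\phi(\tilde\phi^{-1}(y))$ between the two images, verifies $T$ is a bijection, and then sets $\theta=(J_T^{\top})^{-1}\tilde\theta$ via the Jacobian of $T$. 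The reason for this extra machinery is that the paper needs the freedom to choose $\phi$ to be a learned normalizing flow onto the Stackelberg manifold --- not the degenerate zero-padding embedding --- so the version of the claim it actually uses downstream is ``for \emph{any} alternative smooth bijection $\phi$ there is a matching $\theta$,'' which is stronger than the bare existence statement you prove. (It is worth noting that the paper's Jacobian step only preserves the inner product exactly when $T$ is affine; for nonlinear $T$ it is an approximation, which the paper partially acknowledges in its follow-up discussion of approximation error. Your argument avoids this issue entirely but, in exchange, does not justify substituting a nontrivial feature map.) In short: your proof is the minimal, airtight proof of the literal lemma; the paper's proof is an attempt at the stronger statement the surrounding text relies on.
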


\textit{Proof Sketch:} The result follows by embedding $\tilde{\phi}(X)$ into $\mathbb{R}^k$ ($k \geq d$) via an alternative map while preserving the inner product structure. Then we construct a diffeomorphism $T(y) = \phi(\tilde{\phi}^{-1}(y))$ between $\tilde{\phi}(X)$ and any alternative smooth bijection $\phi(X)$. By defining $\theta = (J_T^\top)^{-1}\tilde{\theta}$ via the Jacobian of $T$ to ensure $Y = \langle \theta, \phi(X) \rangle$ holds.

\paragraph{Linearity by Design:} Lemma \ref{lem:existence_of_theta} provides us a theoretical basis for exchanging one feature map $\tilde{\phi}$, to another $\phi$, so long as the original feature map, $\tilde{\phi}$, is smooth and bijective. This preserves the functional representation of the primal expression $\innerP{\tilde{\theta}, \tilde{\phi}}$ under $\innerP{\theta, \phi}$. Therefore, in principle, an infinite amount of valid alternative mappings of $\phi$ exist, allowing us to construct a mapping with desired properties. Subsequently, we could adopt the linear bandit framework \citep{chu:2011-linucb, cesa:2006_pred_learn_games, lattimore:2020_bandit_book} for our Stackelberg manifold $\stEmb$, ensuring that the structure of the reward function remains equivalent for any alterative feature map $\phi$.

\section{Optimization of Stackelberg Games} \label{sec:optim_of_stackelberg_games}

\paragraph{Optimization under Parameter Uncertainty:} In general we can solve Stackelberg games as a bilevel optimization problem. Suppose that after observing $t$ samples under  parameter uncertainty, for some no-regret learning algorithm, the uncertainty among parameters $\theta$, is characterized by,
\begin{align}\label{eq:param_uncertainty}
    \Ball(\theta^*,\uncerBall{t}):=\Big\{\theta: \norm{\theta^* - \theta} \leq \uncerBall{t}\Big\},
\end{align}

with probability at least $1-\delta(t)$. In this formulation, $||\cdot||$ denotes some norm in the space of parameters. Assuming a \textit{pessimistic} leader, the optimization problem under parameter uncertainty at round $t$ can be expressed as,


\vspace{-0.4cm}

\begin{align}
     &\pi_A^* \equiv \arg\max_{\polA\in \Pi_A} \min_{\theta_A}\, \mathbb{E}_{\polA,\polB^*(\polA)}\big[\innerP{\theta_A, \, \phi(\aB, \bB) }\big],\label{eq:pi_a_star}\\
     &\pi_B^*(\polA) \equiv \argmaX{\polB\in \Pi_B} \max_{\theta_B}\,\mathbb{E}_{\polA,\polB}\big[\innerP{\theta_B, \, \phi(\aB, \bB) }\big], \label{eq:pi_b_star} 
 \end{align}



Given $\pi_B^*(\cdot)$ in Eq. \eqref{eq:pi_b_star}, let us define,

\vspace{-0.4cm}

\begin{align}
   & \underline{\mathcal{H}}(\theta_A^*,t) \equiv \max_{\pi_A\in\Pi_A}\min_{\theta_A}\, \mathbb{E}_{\polA,\polB^*(\polA)}\big[\innerP{\theta_A, \, \phi(\aB, \bB) }\big], \label{eq:min_H} \\
     & \overline{\mathcal{H}}(\theta_A^*,t) \equiv \max_{\pi_A\in\Pi_A}\max_{\theta_A}\, \mathbb{E}_{\polA,\polB^*(\polA)}\big[\innerP{\theta_A, \, \phi(\aB, \bB) }\big]. \label{eq:max_H}
\end{align}

\vspace{-0.4cm}

such that, $\quad  \theta_A \in \Ball(\theta^*_A,\uncerBall{t})$, and $\theta_B \in \Ball(\theta^*_B,\uncerBall{t})$. 

The above expressions represent the pessimistic and optimistic leader's estimates of her average reward. We can see from the structure of Eq. \eqref{eq:pi_a_star} to Eq. \eqref{eq:max_H}, the resemblance to a bi-level optimization problem. The solutions to such problems are often computationally demanding and/or complex to formulate \citep{beck:2023_bilevel_survey, sinha:2017_bilevel_review}. (We further provide a discussion of such optimization methods in Appendices \ref{sec:stack_games_opt_app} and \ref{sec:bilevel_extra_discuss}.)

\section{Geodesic Online Learning}

To enable efficient multi-agent online learning on the Stackelberg manifold, $\stEmb$, we enforce $\stEmb$ to be a \textit{locally convex manifold}. Our definition of a locally convex manifold is a one where the geodesic between any two points on the manifold is contained within or forms a geodesically convex set (see Appendix Def. \ref{def:geodesic_convex_subsets}). More specifically, we employ the use of a \textit{spherical manifold}, which is locally convex by nature. Under geodesic convexity, the follower's optimal best response strategy is uniquely deterministic (see Appendix Lemma \ref{lem:pure_strategy_convex_manifold}). 






\subsection{Bilevel Optimization}

Provided that we can transform data from the joint action space (or ambient space) onto a spherical manifold, we can leverage the properties of the $D$-sphere to determine the best response for the follower and minimize the corresponding Stackelberg regret. Consider the reward function structure outlined in Section \ref{sec:reward_function}. In general, the reward of each agent has the form $\mu = \innerP{\theta, \phi(\aB,\bB)}$, where $\theta$ represents a $D$-dimensional vector, and $\phi(\aB,\bB) \in \stEmb$ encodes the joint action representation. In the Stackelberg game, $\theta_A$ and $\theta_B$, referred to as \textit{objective vectors}, characterize each agent's reward parameters. The leader's first-mover advantage defines a restricted subspace on $\stEmb$, within which the follower must then optimize his reward. Specifically, he must find the element in $\stEmb$ that maximizes the inner product subject to the leader's constraints given the follower's best response, $\bR{\aB}$, defined in Eq. \eqref{eq:br-def}, thereby attaining a Stackelberg equilibrium.



We denote the \textit{geodesic distance} between two vectors, denoted as $\geoDis(\theta_A, \theta_B)$, for a unit-spherical manifold, as,
\vspace{-0.1cm}

\begin{align}
    \geoDis(\theta_A, \theta_B) = \arccos\Big( \frac{\innerP{\theta_A, \theta_B}}{\|\theta_A\| \|\theta_B\|} \Big). \label{eq:div_angle_cos_defn}
\end{align}

In a $D$-dimensional sphere, a fully cooperative game exhibits co-directional objective vectors (i.e. without divergence), implying that the inner-product-maximizing solution in $\stEmb$ must be collinear with $\theta_A$, mutatis mutandis for $\theta_B$. Moving forward, we use the convention $\xi_{\theta_A}$ and $\xi_{\theta_B}$ to denote the projection of the $\theta_A$ and $\theta_B$ onto $\stEmb$.





\begin{lemma} \label{lem:geodesic_and_closeness_phi}
    \textbf{Geodesic Distance and Closeness:} Let $\Phi \subset \mathbb{R}^D$ be a manifold serving as a boundary of a convex set in $\mathbb{R}^D$. Given $\theta$, let $\xi_\theta \in \Phi$ be the point on the manifold that maximizes $\langle \theta, \xi_\theta \rangle$, and is orthogonal to $\stEmb$ at the point of intersection. For any two points on the manifold $\xi_{\theta_A}, \xi_{\theta_B} \in \Phi$, if the geodesic distance between $\xi_\theta$ and $\xi_{\theta_A}$ is greater than the geodesic distance between $\xi_\theta$ and $\xi_{\theta_B}$, $\geoDis(\xi_\theta, \xi_{\theta_A}) > \geoDis(\xi_\theta, \xi_{\theta_B})$, then the dot product satisfies $\langle \theta, \xi_{\theta_A} \rangle < \langle \theta, \xi_{\theta_B} \rangle$. (Proof in Appendix \ref{prf:geodesic_and_closeness_phi}.)
\end{lemma}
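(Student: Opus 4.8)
The plan is to reduce the statement to the canonical spherical case that underlies the whole construction, where $\Phi$ is the unit sphere $\mathcal{S}^{(D-1)}$ and $\geoDis$ is the angular distance from Eq.~\eqref{eq:div_angle_cos_defn}. First I would pin down the point $\xi_\theta$: it is defined as the maximizer of $\innerP{\theta, \xi}$ over $\Phi$ at which $\theta$ is orthogonal to $\Phi$. On $\mathcal{S}^{(D-1)}$ the outward normal at a point $\xi$ is $\xi$ itself (the radial direction), so the orthogonality hypothesis forces $\theta \parallel \xi_\theta$, and unit norm together with maximization of the inner product gives $\xi_\theta = \theta/\norm{\theta}$. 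This is consistent with Cauchy--Schwarz: $\innerP{\theta, \xi} \leq \norm{\theta}$ for $\xi \in \mathcal{S}^{(D-1)}$, with equality exactly at $\theta/\norm{\theta}$. This step fixes the geometry and identifies $\xi_\theta$ as the pole from which geodesic distances are measured.

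Next I would express the inner product of $\theta$ with an arbitrary manifold point as a single monotone function of the geodesic distance to $\xi_\theta$. For any $\xi \in \mathcal{S}^{(D-1)}$, since $\xi_\theta$ is a unit vector, the geodesic (central-angle) distance is $\geoDis(\xi_\theta, \xi) = \arccos \innerP{\xi_\theta, \xi} \in [0, \pi]$, and therefore
\[
\innerP{\theta, \xi} = \norm{\theta}\,\innerP{\xi_\theta, \xi} = \norm{\theta}\,\cos\!\big(\geoDis(\xi_\theta, \xi)\big).
\]
The content of the lemma then reduces to the strict monotonicity of this expression in the geodesic distance.

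Finally I would invoke that $\cos$ is strictly decreasing on $[0, \pi]$ and that $\norm{\theta} > 0$. Applying the displayed identity to $\xi = \xi_{\theta_A}$ and $\xi = \xi_{\theta_B}$, the hypothesis $\geoDis(\xi_\theta, \xi_{\theta_A}) > \geoDis(\xi_\theta, \xi_{\theta_B})$ yields $\cos(\geoDis(\xi_\theta, \xi_{\theta_A})) < \cos(\geoDis(\xi_\theta, \xi_{\theta_B}))$, hence $\innerP{\theta, \xi_{\theta_A}} < \innerP{\theta, \xi_{\theta_B}}$, which is exactly the claim.

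The hard part will not be the spherical computation, which is essentially Cauchy--Schwarz plus monotonicity of cosine, but rather justifying the passage from the general hypothesis (``$\Phi$ the boundary of a convex set'') to the clean monotone relationship above. For a general convex body this monotonicity in geodesic distance can fail, so the argument genuinely leans on the spherical structure enforced throughout the paper: the outward normal is radial (making the orthogonality condition identify $\xi_\theta = \theta/\norm{\theta}$), and geodesic distance coincides exactly with the central angle whose cosine is the normalized inner product. I would therefore either state the lemma for $\Phi = \mathcal{S}^{(D-1)}$ explicitly, or isolate as the key hypothesis that $\geoDis(\xi_\theta, \cdot)$ be an increasing function of the angle $\arccos \innerP{\xi_\theta, \cdot}$, which is precisely where the convex-body generality would have to be tightened.
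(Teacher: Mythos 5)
Your proof takes essentially the same route as the paper's: relate geodesic distance from $\xi_\theta$ to the angle $\alpha$, then use $\langle \theta, \xi \rangle = \|\theta\|\|\xi\|\cos(\alpha)$ and the monotonicity of cosine to conclude. The difference is one of rigor rather than strategy. The paper's proof asserts, without justification, that ``for convex manifolds, the geodesic distance behaves similarly to the distance on the surface of a sphere'' and implicitly treats $\|\xi\|$ as constant across the manifold, both of which only hold cleanly when $\Phi$ is (a rescaling of) the sphere. You make these two points explicit: you identify $\xi_\theta = \theta/\|\theta\|$ from the orthogonality-plus-maximization hypothesis via the radial normal, you derive the exact identity $\langle \theta, \xi \rangle = \|\theta\|\cos(\geoDis(\xi_\theta,\xi))$ on $\mathcal{S}^{(D-1)}$, and you correctly flag that the monotone relation between geodesic distance and inner product can fail for a general convex boundary. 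That caveat is a genuine gap in the lemma as stated for arbitrary convex bodies, one the paper's own proof also glosses over; your suggestion to either restate the lemma for the sphere or isolate the angle-monotonicity as an explicit hypothesis is the right fix, and is consistent with how the lemma is actually used downstream (only on the spherical manifold $\stEmb$).
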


\textit{Sketch of Proof:} We establish that on a spherical manifold, the dot product with $\xi_{\theta}$ decreases as geodesic distance from any point on the manifold to the $\xi_\theta$ increases, linking vector alignment to geodesic distance, and establish the inverse relation between dot product $\langle \xi_\theta, \xi \rangle$ and the cosine relationship. This allows us to frame our optimization problem as a geodesic distance minimization problem.

\subsection{Regret Definitions}

\begin{definition} \label{def:stack_regret}
    \textbf{Stackelberg Regret:} We define Stackelberg regret, denoted as $R_A^T$ for the leader, measuring the difference in cumulative rewards between a best responding follower and an optimal leader in a perfect information setting, against best responding follower and leader exhibiting \textit{bounded rationality}, the leader acts rationally given the estimates of her expected reward function. Specifically,
\end{definition}

\begin{align}\label{eq:leader_regret_def_generic}
    R_A^T = \sum_{t = 1}^{T} \mathbb{E}\Big[\underset{\aB \in \mathcal{A} }{\mathrm{max}} \ \utlA(\aB, \mathfrak{B}(\aB)) -  \, \utlA(\aB^t, \bR{\aB^t}) \Big] \nonumber
\end{align}
The leader selects $\aB^t$ from policy $\pi_A$ based on their estimates $\hat{\theta}_A$ and $\hat{\theta}_B$, following Eq. \eqref{eq:pi_a_star} and Eq. \eqref{eq:pi_b_star}. Committing to $\pi_A$, the leader aims to maximize their reward while accounting for uncertainty in the follower's response, which is estimated rationally within a confidence interval. Our algorithm ensures a no-regret learning process by minimizing \textit{Stackelberg regret} bounded by the proposed learning algorithm (Alg. \ref{alg:gisa}). To evaluate this, we derive a closed-form expression for the gap between the leader's expected reward under the optimal policy and any algorithm.



\begin{definition}
    \textbf{Simple Regret:} We define the simple regret, where with probability $1\!\!-\!\delta$ at time $t$,
    \begin{align}
        \text{reg}(t) &\equiv \innerP{\theta_A^*, \phi(\aB^*, \bR{\aB^*})} - \innerP{\theta_A^*, \phi(\aB^t, \bR{\aB^t})} \\
        &\leq \bar{\mathcal{H}}(\theta_A^*,t) - \underline{\mathcal{H}}(\theta_A^*,t) \label{eq:simple_regret_defn}
    \end{align}
    This assumes that the leader is acting under the bounded rationality assumption. 
\end{definition}





\begin{figure*}
\minipage{0.5\textwidth}
\centering
    
    \tikzset{every picture/.style={line width=0.75pt}} 
    
    \begin{tikzpicture}[x=0.75pt,y=0.75pt,yscale=-0.58,xscale=0.58]
    
    \draw   (719.4,172) .. controls (719.4,101.86) and (776.26,45) .. (846.4,45) .. controls (916.54,45) and (973.4,101.86) .. (973.4,172) .. controls (973.4,242.14) and (916.54,299) .. (846.4,299) .. controls (776.26,299) and (719.4,242.14) .. (719.4,172) -- cycle ;
    \shade[ball color = gray!40, opacity = 0.3] (846.4,172) circle (3.37cm);
    \draw  [color={rgb, 255:red, 208; green, 2; blue, 27 }  ,draw opacity=1 ] [fill={rgb, 255:red, 208; green, 2; blue, 27 }  ,fill opacity=0.01 ](751.81,172) .. controls (751.81,101.86) and (794.16,45) .. (846.4,45) .. controls (898.64,45) and (940.99,101.86) .. (940.99,172) .. controls (940.99,242.14) and (898.64,299) .. (846.4,299) .. controls (794.16,299) and (751.81,242.14) .. (751.81,172) -- cycle ;
    \draw  [color={rgb, 255:red, 74; green, 144; blue, 226 }  ,draw opacity=1 ][fill={rgb, 255:red, 74; green, 144; blue, 226 }  ,fill opacity=0.07 ][dash pattern={on 4.5pt off 4.5pt}] (722.5,199.5) .. controls (722.5,187.83) and (778.02,178.37) .. (846.5,178.37) .. controls (914.98,178.37) and (970.5,187.83) .. (970.5,199.5) .. controls (970.5,211.17) and (914.98,220.62) .. (846.5,220.62) .. controls (778.02,220.62) and (722.5,211.17) .. (722.5,199.5) -- cycle ;

    \draw  [color={rgb, 255:red, 74; green, 144; blue, 226 }, draw opacity=1 ] (722.5,199.5) .. controls (722.5,211.17) and (778.02,220.62) .. (846.5,220.62) .. controls (914.98,220.62) and (970.5,211.17) .. (970.5,199.5);

    \draw  [color={rgb, 255:red, 74; green, 144; blue, 226 }, draw opacity=1, dash pattern={on 4.5pt off 4.5pt} ] (722.5,199.5) .. controls (722.5,187.83) and (778.02,178.37) .. (846.5,178.37) .. controls (914.98,178.37) and (970.5,187.83) .. (970.5,199.5);
        
    \draw  [color={rgb, 255:red, 74; green, 144; blue, 226 }  ,draw opacity=1 ][fill={rgb, 255:red, 74; green, 144; blue, 226 }  ,fill opacity=0.07 ][dash pattern={on 4.5pt off 4.5pt}] (726.5,129.87) .. controls (726.5,118.21) and (780,108.75) .. (846,108.75) .. controls (912,108.75) and (965.5,118.21) .. (965.5,129.87) .. controls (965.5,141.54) and (912,151) .. (846,151) .. controls (780,151) and (726.5,141.54) .. (726.5,129.87) -- cycle ;

    \draw  [color={rgb, 255:red, 74; green, 144; blue, 226 }, draw opacity=1 ] (726.5,129.87) .. controls (726.5,141.54) and (780,151) .. (846,151) .. controls (912,151) and (965.5,141.54) .. (965.5,129.87);
    
    \draw  [color={rgb, 255:red, 74; green, 144; blue, 226 }, draw opacity=1, dash pattern={on 4.5pt off 4.5pt} ] (726.5,129.87) .. controls (726.5,118.21) and (780,108.75) .. (846,108.75) .. controls (912,108.75) and (965.5,118.21) .. (965.5,129.87);
    
    \draw [color={rgb, 255:red, 165; green, 0; blue, 20 }  ,draw opacity=1 ]   (928.83,236.83) .. controls (890.75,246.68) and (806.74,246.18) .. (767.27,236.61) ;
    \draw [shift={(765.5,236.17)}, rotate = 14.5] [color={rgb, 255:red, 165; green, 0; blue, 20 }  ,draw opacity=1 ][line width=0.75]    (10.93,-3.29) .. controls (6.95,-1.4) and (3.31,-0.3) .. (0,0) .. controls (3.31,0.3) and (6.95,1.4) .. (10.93,3.29)   ;
    \draw [color={rgb, 255:red, 126; green, 211; blue, 33 }  ,draw opacity=1 ][fill={rgb, 255:red, 65; green, 117; blue, 5 }  ,fill opacity=1 ]   (846.4,172) -- (786,126.45) ;
    \draw [shift={(784.4,125.25)}, rotate = 37.02] [color={rgb, 255:red, 126; green, 211; blue, 33 }  ,draw opacity=1 ][line width=0.75]    (10.93,-3.29) .. controls (6.95,-1.4) and (3.31,-0.3) .. (0,0) .. controls (3.31,0.3) and (6.95,1.4) .. (10.93,3.29)   ;
    \draw [color={rgb, 255:red, 245; green, 166; blue, 35 }  ,draw opacity=1 ]   (846.4,172) -- (909.46,187.76) ;
    \draw [shift={(911.4,188.25)}, rotate = 194.04] [color={rgb, 255:red, 245; green, 166; blue, 35 }  ,draw opacity=1 ][line width=0.75]    (10.93,-3.29) .. controls (6.95,-1.4) and (3.31,-0.3) .. (0,0) .. controls (3.31,0.3) and (6.95,1.4) .. (10.93,3.29)   ;
    \draw    (958.67,79.5) .. controls (925.21,78.21) and (911.01,130.16) .. (905.56,147.28) ;
    \draw [shift={(905,149)}, rotate = 288.43] [color={rgb, 255:red, 0; green, 0; blue, 0 }  ][line width=0.75]    (10.93,-3.29) .. controls (6.95,-1.4) and (3.31,-0.3) .. (0,0) .. controls (3.31,0.3) and (6.95,1.4) .. (10.93,3.29)   ;
    \draw  [draw opacity=0] (838.37,165.05) .. controls (842.13,161.98) and (847.53,161.11) .. (852.36,163.25) .. controls (857.52,165.54) and (860.55,170.6) .. (860.41,175.79) -- (847.03,175.22) -- cycle ; \draw   (838.37,165.05) .. controls (842.13,161.98) and (847.53,161.11) .. (852.36,163.25) .. controls (857.52,165.54) and (860.55,170.6) .. (860.41,175.79) ;  
    \draw [color={rgb, 255:red, 29; green, 95; blue, 173 }  ,draw opacity=1 ]   (738.33,208.83) .. controls (733.81,190.08) and (734.61,159.88) .. (738.01,141.2) ;
    \draw [shift={(738.33,139.5)}, rotate = 101.21] [color={rgb, 255:red, 29; green, 95; blue, 173 }  ,draw opacity=1 ][line width=0.75]    (10.93,-3.29) .. controls (6.95,-1.4) and (3.31,-0.3) .. (0,0) .. controls (3.31,0.3) and (6.95,1.4) .. (10.93,3.29)   ;
    
    \draw (966.33,58.5) node [anchor=north west][inner sep=0.75pt]   [align=left] {$\IsoPL{\bB}$};
    \draw (863,156.17) node [anchor=north west][inner sep=0.75pt]   [align=left] {$\dvAng$};
    \draw (779.67,99.17) node [anchor=north west][inner sep=0.75pt]   [align=left] {$\theta_B'$};
    \draw (913.67,180.5) node [anchor=north west][inner sep=0.75pt]   [align=left] {$\theta_A'$};
    \draw (680.67,158.17) node [anchor=north west][inner sep=0.75pt]   [align=left] {};
    \draw (867.67,249.17) node [anchor=north west][inner sep=0.75pt]   [align=left] {$\IsoPL{\aB}$};
    
    \fill[fill=black] (846.4,172) circle (2pt);
    \end{tikzpicture}
    \caption{Isoplanar subspaces for players A and B.} \label{fig:isoplane}
\endminipage\hfill
\minipage{0.5\textwidth}
\centering

    \tikzset{every picture/.style={line width=0.75pt}} 
    
    \begin{tikzpicture}[x=0.75pt,y=0.75pt,yscale=-0.58,xscale=0.58]
    
    \draw [color={rgb, 255:red, 126; green, 211; blue, 33 }  ,draw opacity=1 ][fill={rgb, 255:red, 65; green, 117; blue, 5 }  ,fill opacity=1 ]   (494.4,164) -- (434,118.45) ;
    \draw [shift={(432.4,117.25)}, rotate = 37.02] [color={rgb, 255:red, 126; green, 211; blue, 33 }  ,draw opacity=1 ][line width=0.75]    (10.93,-3.29) .. controls (6.95,-1.4) and (3.31,-0.3) .. (0,0) .. controls (3.31,0.3) and (6.95,1.4) .. (10.93,3.29)   ;
    \draw [color={rgb, 255:red, 245; green, 166; blue, 35 }  ,draw opacity=1 ]   (494.4,164) -- (557.46,179.76) ;
    \draw [shift={(559.4,180.25)}, rotate = 194.04] [color={rgb, 255:red, 245; green, 166; blue, 35 }  ,draw opacity=1 ][line width=0.75]    (10.93,-3.29) .. controls (6.95,-1.4) and (3.31,-0.3) .. (0,0) .. controls (3.31,0.3) and (6.95,1.4) .. (10.93,3.29)   ;
    \draw  [color={rgb, 255:red, 245; green, 166; blue, 35 }  ,draw opacity=1 ] [fill={rgb, 255:red, 245; green, 166; blue, 35 }  ,fill opacity=0.3 ] (534.06,177) .. controls (537.96,165.49) and (550.94,158.18) .. (563.06,160.67) .. controls (575.18,163.15) and (581.84,174.49) .. (577.94,186) .. controls (574.04,197.51) and (561.06,204.82) .. (548.94,202.33) .. controls (536.82,199.85) and (530.16,188.51) .. (534.06,177) -- cycle ;
    \draw  [color={rgb, 255:red, 126; green, 211; blue, 33 }  ,draw opacity=1 ] [fill={rgb, 255:red, 184; green, 233; blue, 134 }  ,fill opacity=0.47 ] (413.44,110.89) .. controls (419.21,99.47) and (432.38,93.05) .. (442.86,96.56) .. controls (453.33,100.07) and (457.14,112.18) .. (451.36,123.61) .. controls (445.59,135.03) and (432.42,141.45) .. (421.94,137.94) .. controls (411.47,134.43) and (407.66,122.32) .. (413.44,110.89) -- cycle ;
    \draw [color={rgb, 255:red, 208; green, 2; blue, 27 }  ,draw opacity=1 ]   (494.4,37) .. controls (540.8,50.6) and (550.4,257.4) .. (494.4,291) ;
    \draw [color={rgb, 255:red, 208; green, 2; blue, 27 }  ,draw opacity=1 ]  (494.4,37) .. controls (575.2,34.6) and (636,245) .. (494.4,291) ;
    \draw [color={rgb, 255:red, 74; green, 144; blue, 226 }  ,draw opacity=1 ]  (374.5,124.5) .. controls (461.1,155.9) and (543.1,158) .. (612.7,120.4) ;
    \draw [color={rgb, 255:red, 74; green, 144; blue, 226 }  ,draw opacity=1 ]  (397,83.6) .. controls (433,102) and (548.2,106.7) .. (591.4,82) ;
    \draw  [draw opacity=0][fill={rgb, 255:red, 189; green, 16; blue, 224 }  ,fill opacity=0.09 ] (527.67,97.83) -- (546.33,95.17) -- (562.33,92.5) -- (571,115.83) -- (576.33,135.83) -- (555.67,141.83) -- (533,146.5) -- (531,123.83) -- cycle ;
    \draw   (367.4,164) .. controls (367.4,93.86) and (424.26,37) .. (494.4,37) .. controls (564.54,37) and (621.4,93.86) .. (621.4,164) .. controls (621.4,234.14) and (564.54,291) .. (494.4,291) .. controls (424.26,291) and (367.4,234.14) .. (367.4,164) -- cycle ;
    \shade[ball color = gray!40, opacity = 0.3] (494.4,164) circle (3.37cm);
    \draw [color={rgb, 255:red, 165; green, 0; blue, 20 }  ,draw opacity=1 ] [dash pattern={on 0.84pt off 2.51pt}]  (571,115.83) .. controls (562.84,120.79) and (548.69,123.76) .. (532.97,123.84) ;
    \draw [shift={(531,123.83)}, rotate = 0.58] [color={rgb, 255:red, 165; green, 0; blue, 20 }  ,draw opacity=1 ][line width=0.75]    (10.93,-3.29) .. controls (6.95,-1.4) and (3.31,-0.3) .. (0,0) .. controls (3.31,0.3) and (6.95,1.4) .. (10.93,3.29)   ;
    \draw [color={rgb, 255:red, 165; green, 0; blue, 20 }  ,draw opacity=1 ] [dash pattern={on 0.84pt off 2.51pt}]  (531,123.83) .. controls (547.71,123.99) and (557.13,121.72) .. (569.27,116.58) ;
    \draw [shift={(571,115.83)}, rotate = 156.45] [color={rgb, 255:red, 165; green, 0; blue, 20 }  ,draw opacity=1 ][line width=0.75]    (10.93,-3.29) .. controls (6.95,-1.4) and (3.31,-0.3) .. (0,0) .. controls (3.31,0.3) and (6.95,1.4) .. (10.93,3.29)   ;
    
    \draw (427.67,100.17) node [anchor=north west][inner sep=0.75pt]   [align=left] {$\theta_B'$};
    \draw (546.67,183.5) node [anchor=north west][inner sep=0.75pt]   [align=left] {$\theta_A'$};
    \draw (536.33,85.17) node [anchor=north west][inner sep=0.75pt]   [align=left] {$\IsoPL{\aB}$};

    \fill[fill=black] (494.4,164) circle (2.5pt);
    \end{tikzpicture}
    \caption{Geodesic confidence balls for players A and B.} \label{fig:conf-ball}
\endminipage\hfill
\vspace{-5pt} 
\caption*{\textbf{Diagram Description:} A visualization of the isoplanes $\IsoPL{\aB}$ and $\IsoPL{\bB}$ on a 2-sphere embedded in three dimensions is shown in Fig. \ref{fig:isoplane}. The isoplanes are depicted relative to the normalized objective vectors $\theta_A'$ and $\theta_B'$, which lie on the manifold surface, separated by a divergence angle $\alpha_{Div}$. Figure \ref{fig:conf-ball} illustrates the geodesic confidence balls, positioned on the surface of the spherical manifold. In three dimensions, it becomes evident that $\IsoPL{\aB}$ and $\IsoPL{\bB}$ are orthogonal at any point of intersection. This intersection, denoted by $\IsoPL{\bB_{\aB}}$, is where the joint action emerges, represented by a purple geodesic square indicating the uncertainty region.} 
\vspace{-10pt} 
\end{figure*}

\vspace{-0.3cm}

\subsection{Quantifying Uncertainty on the Manifold} \label{sec:quantify_uncertainty_stackmans}

We now revisit the parameter uncertainty constraints introduced in Sec. \ref{sec:optim_of_stackelberg_games}, which dictate the uncertainty of a given learning algorithm, characterized by an uncertainty radius $\uncerBall{t}$. Given the feature map $\phi(\cdot)$, which adheres to the linear reward assumptions, particularly with respect to the covariance matrix of the regression (as outlined in Sec. \ref{sec:reward_function}), the learning leader can apply any bandit learning algorithm that imposes a high-probability bound on the parameter estimate. This constraint is formalized in Eq. \eqref{eq:param_uncertainty} by the uncertainty region $\uncerBall{t}$. Let us define $\IsoPL{\aB}$ and $\IsoPL{\bB}$ as two subspaces, which we will use to analyze the leader's actions under these uncertainty constraints.
\begin{align}
   \!\!\! \IsoPL{\aB}\! :=\! \{ \phi(\aB, \bB') | \bB' \in \setB \}, \, \IsoPL{\bB}\! :=\! \{ \phi(\aB', \bB) | \aB' \in \setA \}
\end{align}
where $\IsoPL{\aB}$ and $\IsoPL{\bB}$ are the sub-spaces formed when  fixing one of the agents' action, and varying the other action freely. 

\begin{lemma} \label{lem:intersect_submanifold_AB}
    \textbf{Intersection of $\IsoPL{\aB}$ and $\IsoPL{\bB}$:} Given a bipartite spherical map from Def. \ref{def:bipartite_sphere_map}, with $\aB$ parameterizing the azimuthal (latitudinal) coordinates, the cardinality of the intersect between $\IsoPL{\aB}$ and $\IsoPL{\bB}$ will be non-empty. That is, $|\IsoPL{\aB} \cap \IsoPL{\bB}| > 0$. (Proof is in Appendix \ref{prf:intersect_submanifold_AB}.)
\end{lemma}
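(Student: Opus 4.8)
The plan is to prove non-emptiness constructively, by exhibiting an explicit point that lies in both isoplanes, and then to verify that the bipartite structure of Definition~\ref{def:bipartite_sphere_map} is exactly what guarantees this witness is well-defined on the sphere. The natural candidate is the image $\phi(\aB, \bB)$ of the very pair $(\aB, \bB)$ that indexes the two isoplanes. Indeed, from the definitions $\IsoPL{\aB} = \{\phi(\aB, \bB') : \bB' \in \setB\}$ and $\IsoPL{\bB} = \{\phi(\aB', \bB) : \aB' \in \setA\}$, taking $\bB' = \bB$ in the first and $\aB' = \aB$ in the second shows immediately that $\phi(\aB, \bB)$ belongs to each set; hence $\phi(\aB, \bB) \in \IsoPL{\aB} \cap \IsoPL{\bB}$ and the intersection is non-empty.

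First I would make precise why this is not a vacuous statement: two submanifolds of complementary dimension inside $\mathcal{S}^{(D-1)}$ need not meet in general (two distinct parallels are a simple counterexample), so the conclusion genuinely relies on the coordinate bipartition rather than on dimension counting alone. Using Definition~\ref{def:bipartite_sphere_map}, I would rewrite each isoplane as a coordinate level set: fixing $\aB$ pins the block $\nu_\mathbf{a}(\aB)$ while leaving $\nu_\mathbf{b}$ free, so $\IsoPL{\aB}$ is the slice of $\mathcal{S}^{(D-1)}$ on which the $\nu_\mathbf{a}$-coordinates equal $\nu_\mathbf{a}(\aB)$; dually, $\IsoPL{\bB}$ is the slice on which the $\nu_\mathbf{b}$-coordinates equal $\nu_\mathbf{b}(\bB)$. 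A point lies in the intersection precisely when it simultaneously satisfies both coordinate constraints.

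The substantive step is then to confirm that these two constraints are jointly realizable. Because the partitions are disjoint, $\nu_\mathbf{a} \cap \nu_\mathbf{b} = \emptyset$, no angular coordinate is claimed by both $\aB$ and $\bB$, so prescribing $\nu_\mathbf{a}(\aB)$ and $\nu_\mathbf{b}(\bB)$ never imposes conflicting demands on a single coordinate. Since the hyperspherical angle-to-Cartesian transform is surjective onto $\mathcal{S}^{(D-1)}$ --- every admissible tuple of angles maps to a genuine sphere point --- the combined tuple $(\nu_\mathbf{a}(\aB), \nu_\mathbf{b}(\bB))^\intercal$ is realized on the sphere, and by construction that realization is exactly $\phi(\aB, \bB)$. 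With the azimuthal assignment for $\aB$ as stated, this recovers the familiar three-dimensional picture in which $\IsoPL{\aB}$ is a meridian and $\IsoPL{\bB}$ is a parallel, which always cross.

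I expect the main obstacle to be bookkeeping around coordinate degeneracies rather than anything deep: the azimuthal (longitude) coordinate is periodic and collapses at the poles, so one must check that fixing the $\nu_\mathbf{a}$-block does not degenerate the isoplane in a way that removes the candidate point. This is handled by noting that the witness is literally $\phi$ evaluated at an admissible action pair, hence automatically a point of the image $\stEmb \subseteq \mathcal{S}^{(D-1)}$; the disjointness hypothesis then guarantees it satisfies both level-set conditions at once, which closes the argument and, as an aside, identifies the intersection as precisely $\{\phi(\aB,\bB)\}$ when the two blocks together exhaust all $D-1$ angular coordinates.
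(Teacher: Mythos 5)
Your proof is correct, and it takes a genuinely different and considerably more elementary route than the paper's. You observe that, by the very definitions $\IsoPL{\aB} = \{\phi(\aB,\bB') : \bB'\in\setB\}$ and $\IsoPL{\bB} = \{\phi(\aB',\bB) : \aB'\in\setA\}$, the point $\phi(\aB,\bB)$ is a common element (take $\bB'=\bB$ and $\aB'=\aB$), so the intersection is non-empty by exhibiting an explicit witness; you then justify why this witness is well-defined via the disjointness of the coordinate blocks $\nu_\mathbf{a}\cap\nu_\mathbf{b}=\emptyset$ and the surjectivity of the hyperspherical parametrization. The paper instead argues topologically: it invokes the Poincar\'e--Hopf theorem, the Euler characteristic of $\mathcal{S}^D$, and general intersection-theoretic constraints on submanifolds of a compact orientable manifold, visualizing the isoplanes as trajectories of vector fields that cannot avoid one another. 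Your constructive argument is shorter, fully rigorous, and makes transparent exactly which hypothesis does the work (the bipartite, non-conflicting coordinate assignment), whereas the paper's topological machinery is doing far more than is needed for mere non-emptiness and, as the paper itself concedes, the final step is ``not a direct consequence of the Poincar\'e--Hopf theorem.'' The topological viewpoint would only become necessary if the two slices were not both parametrized as images of the same globally defined map $\phi$, or if one wanted transversality or intersection-count information beyond non-emptiness; for the statement as written, your approach is preferable. Your closing remark that the intersection is exactly $\{\phi(\aB,\bB)\}$ when the two blocks exhaust all angular coordinates also anticipates the singleton claim the paper only establishes later via orthogonality in Lemma~\ref{lem:leader_pure_strategy_spherical}.
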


The purpose of Lemma \ref{lem:intersect_submanifold_AB} is to highlight that, given the bipartite map, subspaces are guaranteed to intersect on the manifold. This is easy to visualize on a spherical manifold in the 2-sphere setting (e.g., longitudinal and latitudinal lines) but becomes challenging to perceive in higher dimensions. We rigorously argue that, just as in the 2-sphere case, the same principle holds in a D-sphere setting. The derivation of Lemma \ref{lem:intersect_submanifold_AB} first comes by isolating the subspaces in terms of angular coordinates. Next, due to the \textit{Poincaré-Hopf theorem} \citep{poincare:1885, hopf:1927}, the compactness of the smooth Riemmanian manifold imposes strong geometric constraints such that the two subspaces cannot avoid each other.


\begin{lemma} \label{lem:leader_pure_strategy_spherical}
    \textbf{Pure Strategy of the Leader:} Given a spherical manifold, $\stEmb$, and isoplanar subspace, $\IsoPL{\aB}$ and $\IsoPL{\bB}$ for the longitudinal and lattitudinal subspaces respectively, the optimal strategy of the leader is that of a pure strategy, that is, $\polA^*(\aB) \in \{0, 1\}$. (Proof is provided in Appendix \ref{prf:leader_pure_strategy_spherical}.)
\end{lemma}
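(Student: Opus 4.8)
The plan is to reduce the optimization over mixed leader strategies to the maximization of a scalar function over pure actions, and then exploit the extreme-point structure of the probability simplex. First I would fix the leader's objective vector $\theta_A$ and invoke Appendix Lemma~\ref{lem:pure_strategy_convex_manifold}: since $\stEmb$ is geodesically (locally) convex, the follower's best response $\bR{\aB}$ is a single deterministic point for every $\aB\in\setA$. Consequently the composite map $\aB\mapsto\phi(\aB,\bR{\aB})$ is single-valued, and together with the linearity of the leader's reward in Eq.~\eqref{eq:A_reward_inner_prod} it defines the scalar payoff $h(\aB):=\innerP{\theta_A,\phi(\aB,\bR{\aB})}$. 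The crucial structural point is that in this Stackelberg setting the follower reacts to the \emph{realized} action $\aB^t$ rather than to the committed distribution, so the leader's expected utility under any mixed strategy $\polA$ is $\expeC_{\aB\sim\polA}[h(\aB)]$, which is \emph{affine} in $\polA$ rather than merely concave.

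Second, I would argue attainment and then apply the extreme-point principle. By compactness of $\setA$ (the manifold is compact by construction) together with continuity of $\phi$ and upper semicontinuity of the best-response map, $h$ is u.s.c. on a compact domain and attains its maximum at some $\aB^*$. A linear functional over the probability simplex attains its maximum at an extreme point, and the extreme points of the space of distributions over $\setA$ are exactly the Dirac measures; hence $\expeC_{\aB\sim\polA}[h(\aB)]\le h(\aB^*)$ for every $\polA$, with equality for the Dirac at $\aB^*$. This yields $\polA^*(\aB)\in\{0,1\}$. I would reinforce this with the geometric reading afforded by Lemma~\ref{lem:geodesic_and_closeness_phi}: the optimal pure action is the one whose induced joint point is geodesically closest to $\xi_{\theta_A}$ on $\stEmb$, whereas any genuine mixture pulls the realized feature $\expeC_{\polA}[\phi]$ off the sphere into the interior of its convex hull, which by linearity can only lower the inner product with $\theta_A$.

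The main obstacle is reconciling this with the pessimistic $\min_{\theta_A}$ in Eq.~\eqref{eq:pi_a_star}. Because $\min_{\theta_A\in\Ball(\theta_A^*,\uncerBall{t})}\expeC_{\polA}[\innerP{\theta_A,\phi}]=\innerP{\theta_A^*,\expeC_{\polA}[\phi]}-\uncerBall{t}\norm{\expeC_{\polA}[\phi]}$, the robust objective is strictly concave in $\polA$ (the penalty term is the negative of a norm of a quantity linear in $\polA$), and hedging can genuinely shrink that penalty since $\norm{\expeC_{\polA}[\phi]}=1$ for a pure action but $\norm{\expeC_{\polA}[\phi]}<1$ for a nondegenerate mixture. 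Thus the affine extreme-point argument does not transfer verbatim to the coupled max--min. I would close this gap by fixing the inner worst-case minimizer and re-expressing the leader's problem as a linear one for that realized vector (equivalently, in the oracle SOC regime adopting the per-action conservative reward $\min_{\theta_A}\innerP{\theta_A,\phi(\aB,\bR{\aB})}$ and commuting the minimum inside the expectation), after which affinity is restored and the pure optimum follows from the same argument.

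A secondary technical point I would need to discharge is the semicontinuity of $\aB\mapsto\bR{\aB}$: I would derive it from continuity of $\phi$ and the uniqueness of the follower's maximizer on the convex manifold (Lemma~\ref{lem:pure_strategy_convex_manifold}), which is exactly what guarantees that $h$ is u.s.c. and that its supremum over $\setA$ is achieved rather than merely approached, so that the claimed pure optimizer $\aB^*$ genuinely exists.
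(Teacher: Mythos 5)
Your proof is correct in substance but follows a genuinely different route from the paper's. The paper's Appendix~\ref{prf:leader_pure_strategy_spherical} argument is geometric: it combines Lemma~\ref{lem:intersect_submanifold_AB} (non-empty intersection of the isoplanes) with Lemma~\ref{lem:orthogonality_submanifold_AB} (orthogonality of $\IsoPL{\aB}$ and $\IsoPL{\bB}$) to conclude that the intersection $\Psi_\theta$ is a singleton, and then uses Lemma~\ref{lem:geodesic_and_closeness_phi} to argue that any deviation from $\Psi_\theta$ strictly increases geodesic distance and hence strictly decreases $\utlA$, so no mixing can help. You instead run the classical extreme-point argument: single-valuedness of the follower's best response (via Lemma~\ref{lem:pure_strategy_convex_manifold}) makes the leader's expected payoff affine in $\polA$, and an affine functional over the simplex of distributions on $\setA$ is maximized at a Dirac measure. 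Your route is more elementary and more general --- it needs only linearity of the reward and a well-defined best response, not the intersection/orthogonality machinery or the spherical geometry --- but it yields only the \emph{existence} of an optimal pure strategy (ties among maximizers of $h$ would still admit optimal mixtures), whereas the paper's singleton-intersection argument delivers uniqueness of the optimal joint action, which is also what the GISA algorithm exploits. Your observation about the pessimistic $\min_{\theta_A}$ is a genuine subtlety the paper silently sidesteps: under the robust objective the term $-\uncerBall{t}\norm{\expeC_{\polA}[\phi]}$ makes the problem strictly concave rather than affine in $\polA$ and can reward hedging, and your proposed repair (commuting the minimum inside the expectation) replaces the objective by a lower bound rather than preserving it, so that gap is flagged but not fully closed --- though the paper's own proof likewise treats only the fixed-$\theta$ case.
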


Lemma \ref{lem:leader_pure_strategy_spherical} argues that the intersection between $\IsoPL{\aB}$ and $\IsoPL{\bB}$ contains at most one element due to their orthogonality (see Appendix Lemma \ref{lem:orthogonality_submanifold_AB}). Consequently, no other actions on the manifold can further maximize the leader's reward. Intuitively, the positive curvature of the manifold ensures that once two non-degenerate isoplanes intersect, the intersection is a unique point that maximizes the dot product between the action and the objective vector. 



\begin{figure*}[!htb]
\centering
\subfloat[$\mathbb{R}^1$ Stackelberg Regret.]{%
  \includegraphics[width=41mm]{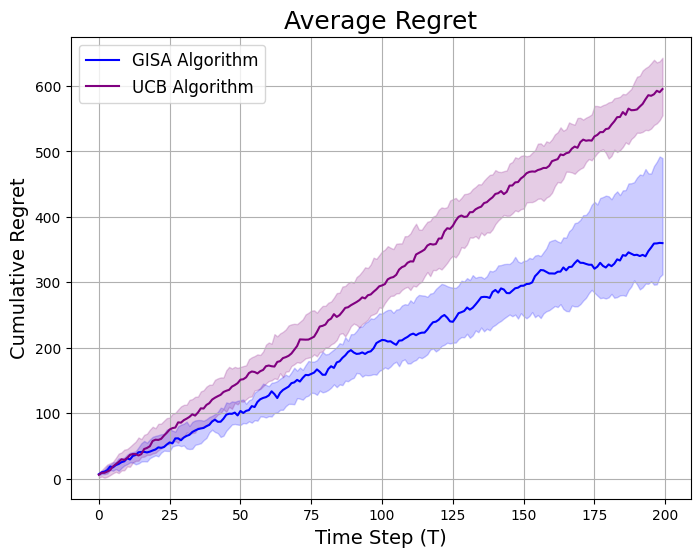}%
  \label{fig:market3}%
}\hspace{10pt}
\subfloat[NPG Regret.]{%
  \includegraphics[width=41mm]{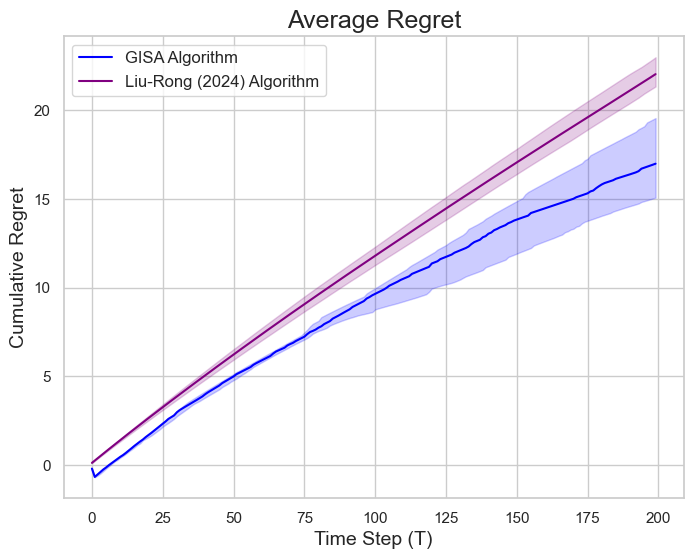}%
  \label{fig:market2}%
}\hspace{10pt}
\subfloat[SSG Regret.]{%
  \includegraphics[width=41mm]{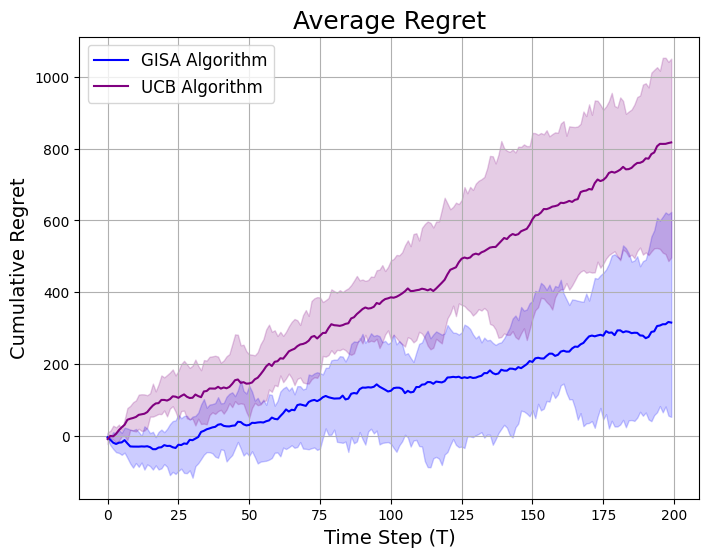}%
  \label{fig:market4}%
}
\caption{Average cumulative regret performance across three Stackelberg games. Uncertainty region denote upper and lower quartile of experimental results. (Parameters of the simulations are outlined in Appendices \ref{sec:r1_game_details} - \ref{sec:multi-dim-ssg}.) } \label{fig:avg_cumm_regret}
\vspace{-15pt}
\end{figure*}

\textbf{Geodesic Isoplanar Subspace Alignment (GISA):} We present an end-to-end procedure (in Algorithm \ref{alg:gisa}) for constructing the Stackelberg manifold, $\stEmb$, and subsequently enabling online learning within it. The process begins with an offline, data-independent, training phase (Step \ref{line:stack_embed} of Algorithm \ref{alg:gisa}) to construct $\stEmb$. This is followed by iterative online learning performed directly on the manifold. As a result, optimal strategy learning reduces to geodesic distance minimization, providing a significantly more computationally efficient alternative to traditional game-theoretic approaches, such as bi-level optimization.

The general methodology in which we can compute the optimal leader strategy is that the leader can anticipate the follower strategy based on knowledge of follower's reward parameters $\xi_{\theta_B}$ and the isoplane $\IsoPL{\aB}$. We denote this homeomorphism as $f_1(\IsoPL{\aB}, \xi_{\theta_B}): \IsoPL{\aB} \mapsto \IsoPL{\bB^*_{\aB}}$. Thereafter, we compute the geodesic distance minimizing distance from $\IsoPL{\bB^*_{\aB}}$ to $\xi_{\theta_A}$ via injective map $f_2(\IsoPL{\bB^*_{\aB}}, \xi_{\theta_A}): \IsoPL{\bB^*_{\aB}} \mapsto \mathbb{R}$. For illustration, referencing Fig. \ref{fig:conf-ball}, the $\IsoPL{\aB}$ represents the subspace induced by the leader, visualized as a red trace, from where the follower selects his best response within. The follower will attempt to act within the subspace, such that it minimizes his geodesic distance to $\theta'_B$. Thus, the leader's objective is to find $\aB \in \setA$ such that it minimizes the composition of $f_1 \circ f_2$, giving us the geodesic distance. This composition is defined as,

\vspace{-0.4cm}

\begin{align}
    \IsoPL{\aB} \underset{f_1(\cdot, \xi_{\theta_A})}{\longmapsto} \IsoPL{\bB^*_{\aB}} \underset{f_2(\cdot, \xi_{\theta_B})}{\longmapsto} \geoDis(\aB, \bB^*_{\aB}) \in \mathbb{R}, \quad   \label{eq:gisa_map_and_norm}
\end{align}
where $\theta' = \frac{\theta}{\norm{\theta}}$ for A and B. 



\begin{theorem} \label{thm:sphere_man_iso_reg}
    \textbf{Isoplane Stackelberg Regret:} For D-dimensional spherical manifolds embedded in $\mathbb{R}^D$ space, where $\phi(\aB, \cdot)$ generates an isoplanes $\IsoPL{\aB}$, and the linear relationship to the reward function in Eq. \eqref{eq:A_reward_inner_prod} and Eq. \eqref{eq:B_reward_inner_prod}, 
    the simple regret, defined in Eq. \eqref{eq:simple_regret_defn}, of any learning algorithm with uncertainty parameter uncertainty  $\uncerBall{t}$, refer to in Eq. \eqref{eq:param_uncertainty}, is bounded by $\mathcal{O}(\arccos(1-\uncerBall{t}^2/2))$. (Proof in Appendix \ref{prf:sphere_man_iso_reg}.)
\end{theorem}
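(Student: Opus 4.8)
The plan is to reduce the simple-regret gap $\bar{\mathcal{H}}(\theta_A^*,t) - \underline{\mathcal{H}}(\theta_A^*,t)$ in Eq. \eqref{eq:simple_regret_defn} to a single angular quantity on the sphere and then evaluate that angle explicitly. First I would exploit that every feature vector $\phi(\aB,\bB)$ lies on the unit sphere $\mathcal{S}^{(D-1)}$, so the reward can be written as $\innerP{\theta_A^*,\phi(\aB,\bB)} = \norm{\theta_A^*}\cos\gamma$, where $\gamma = \geoDis(\xi_{\theta_A^*},\phi(\aB,\bB))$ is the geodesic distance between the projected objective vector $\xi_{\theta_A^*}$ and the action. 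By Lemma \ref{lem:geodesic_and_closeness_phi}, maximizing this inner product over the isoplane $\IsoPL{\aB}$ is equivalent to minimizing $\gamma$, so the optimal realized action $\phi(\aB^*,\bR{\aB^*})$ is the geodesically closest reachable point to $\xi_{\theta_A^*}$, while the pessimistic and optimistic quantities of Eqs. \eqref{eq:min_H}--\eqref{eq:max_H} are the extremal points attained as $\theta_A$ ranges over $\Ball(\theta_A^*,\uncerBall{t})$.

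The crux of the evaluation is a chord-to-angle conversion. Working with the normalized objective vectors (the $\theta'$ convention, so that $\xi_{\theta_A}, \xi_{\theta_A^*} \in \mathcal{S}^{(D-1)}$), the law of cosines on the unit sphere gives $\norm{\xi_{\theta_A} - \xi_{\theta_A^*}}^2 = 2\big(1 - \cos\geoDis(\xi_{\theta_A},\xi_{\theta_A^*})\big)$. Combining this with the confidence bound $\norm{\theta_A - \theta_A^*} \le \uncerBall{t}$ from Eq. \eqref{eq:param_uncertainty} yields $\cos\geoDis(\xi_{\theta_A},\xi_{\theta_A^*}) \ge 1 - \uncerBall{t}^2/2$, and since $\arccos$ is monotone decreasing, the angular radius of the projected confidence region is at most $\arccos(1 - \uncerBall{t}^2/2)$. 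This is exactly the geodesic confidence ball depicted in Fig. \ref{fig:conf-ball}, and it is the single scalar that must control every downstream deviation.

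Next I would propagate this angular uncertainty through the leader's anticipation of the follower. Using the homeomorphism $f_1(\IsoPL{\aB},\xi_{\theta_B})$ of Eq. \eqref{eq:gisa_map_and_norm} together with the orthogonal, single-point intersection guaranteed by Lemmas \ref{lem:intersect_submanifold_AB} and \ref{lem:leader_pure_strategy_spherical}, a perturbation of $\xi_{\theta_A^*}$ or $\xi_{\theta_B^*}$ within the confidence ball displaces the realized joint action $\phi(\aB^t,\bR{\aB^t})$ by a geodesic distance no larger than $\arccos(1 - \uncerBall{t}^2/2)$. Hence the angle $\gamma^t = \geoDis(\xi_{\theta_A^*},\phi(\aB^t,\bR{\aB^t}))$ exceeds the optimal angle $\gamma^*$ by at most this radius. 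Because $\cos$ is $1$-Lipschitz on $[0,\pi]$, the regret then obeys $\innerP{\theta_A^*,\phi(\aB^*,\bR{\aB^*})} - \innerP{\theta_A^*,\phi(\aB^t,\bR{\aB^t})} = \norm{\theta_A^*}(\cos\gamma^* - \cos\gamma^t) \le \norm{\theta_A^*}(\gamma^t - \gamma^*) \le \norm{\theta_A^*}\arccos(1 - \uncerBall{t}^2/2)$, which is $\mathcal{O}(\arccos(1 - \uncerBall{t}^2/2))$ when $\norm{\theta_A^*}$ is treated as a constant; the identical bound transfers to $\bar{\mathcal{H}} - \underline{\mathcal{H}}$ via Eq. \eqref{eq:simple_regret_defn}.

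The step I expect to be the main obstacle is the propagation in the third paragraph: showing that the joint uncertainty in both $\theta_A^*$ and $\theta_B^*$, pushed through the follower's best-response map $f_1$ and the leader's geodesic minimization $f_2$, contracts to a single angular radius rather than compounding additively. This demands a Lipschitz-type control of $f_1 \circ f_2$ on $\mathcal{S}^{(D-1)}$, which I would establish from the orthogonality of $\IsoPL{\aB}$ and $\IsoPL{\bB}$ at their unique intersection (Appendix Lemma \ref{lem:orthogonality_submanifold_AB}) and the bounded positive curvature of the sphere: orthogonality forces the two angular perturbations to act along independent tangent directions, so that the worst-case geodesic displacement of the intersection point is governed by the larger radius rather than their sum, preserving the clean $\arccos(1 - \uncerBall{t}^2/2)$ rate.
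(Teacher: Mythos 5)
Your proposal follows essentially the same route as the paper's proof: convert the Cartesian confidence ball into a geodesic ball of angular radius $\arccos(1-\uncerBall{t}^2/2)$ via the chord--angle relation (the paper's Lemma \ref{lem:geodesic_uncertainty_ball}), then use orthogonality of $\IsoPL{\aB}$ and $\IsoPL{\bB}$ to argue the projection onto the follower's best-response subspace does not inflate that radius (Lemmas \ref{lem:distance_preserving_ortho_proj} and \ref{lem:diameter_preserving_ortho_proj}), and finally identify $\bar{\mathcal{H}}-\underline{\mathcal{H}}$ with the diameter of the projected ball. Your added step bounding the reward gap by $\norm{\theta_A^*}(\gamma^t-\gamma^*)$ via the $1$-Lipschitzness of $\cos$ merely makes explicit a link the paper leaves implicit, so the two arguments are the same in substance.
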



\textit{Proof Sketch:} The proof of Theorem \ref{thm:sphere_man_iso_reg} focuses on analyzing the geodesic distances on $\stEmb$ due to uncertainty. First, we argue that any norm-like confidence ball in Cartesian coordinates, $\Ball(\cdot)$, can be transformed into a confidence bound into a geodesic distance-based confidence ball, $\Ball_{\geoDis}(\cdot)$, in spherical coordinates (discussed in Lemma \ref{lem:geodesic_uncertainty_ball} of the Appendix.) Due to orthogonality between $\IsoPL{\aB}$ and $\IsoPL{\bB}$, we argue that that the geodesic distance either remains the same or decreases when we projected from any $\Ball(\cdot)'$ from $\IsoPL{\aB}$ to $\IsoPL{\bB}$ (discussed in Lemma \ref{lem:distance_preserving_ortho_proj} of the Appendix.) This naturally extends to a bound on the maximum diameter of the projected confidence ball on $\IsoPL{\bB}$. This constitutes the best and worst possible outcomes due to misspecification in accordance with the formulas in Eq. \eqref{eq:min_H} and Eq. \eqref{eq:max_H}. Consequently, one can see that our methodology enables the leader to simplify the follower’s best response region on $\stEmb$, allowing derivation of high-probability worst-case outcomes and theoretical guarantees for simple regret under quantifiable uncertainty.

\begin{algorithm}[h!]
\caption{Geodesic Isoplanar Subspace Alignment (GISA) Algorithm}\label{alg:gisa}
\begin{algorithmic}[1]
\STATE \textbf{Input:} Time horizon $T$, and confidence ball $\uncerBall{\cdot}$.
\STATE \textbf{Output:} Estimated optimal leader action $\hat{\aB}$.
\STATE Initialize $\hat{\theta}_A$ and $\hat{\theta}_B$ uniformly at random.
\STATE Initialize reward and action histories, $\mathcal{U}$ and $\mathcal{H}$ as $\emptyset$.
\STATE Construct a Stackelberg embedding $\stEmb$ and feature map $\phi$ per specifications in Sec. \ref{sec:stack_embedding}. \label{line:stack_embed}
\FOR {$t \in 1 ... T$}
    \IF{$\geoDis(\hat{\theta}_A, \hat{\theta}_B) < 2 \uncerBall{t}$} 
        \STATE \texttt{Phase 1}: Select uniformly an action on the boundary of $A$'s geodesic confidence ball.
        \STATE $\aB \sim \texttt{Uniform}[\partial \, \Ball_{\geoDis}(\hat{\theta}_A,\uncerBall{t})] \,$ 
    \ELSE
        \STATE \texttt{Phase 2}: Select $\aB$ that minimizes the geodesic distance to $\hat{\theta}_B$ from $\Ball_{\geoDis}(\hat{\theta}_A, \uncerBall{t})$.
        \STATE $\aB \gets \underset{\aB \in \Ball_{\geoDis}(\hat{\theta}_A, \uncerBall{t})}{\argmin} \,
 \geoDis(\aB, \hat{\theta}_B)$
    \ENDIF
    \STATE $\bB \gets \underset{\bB \in \IsoPL{\aB}}{\argmin}\geoDis(\bB, \hat{\theta}_B)$
    \STATE $\hat{\aB}^t, \hat{\bB}^t \gets \phi^{-1}(\aB, \bB)$ Perform an inverse map.
    \STATE \textbf{yield} $\hat{\aB}^t, \hat{\bB}^t$, and obtain empirical reward $\mu_A^t, \mu_B^t$.
    \STATE $\mathcal{H} \gets \mathcal{H} \cup (\hat{\aB}^t, \hat{\bB}^t), \quad \mathcal{U} \gets \mathcal{U} \cup (\mu_A^t, \mu_B^t)$.
    \STATE Re-estimate $\hat{\theta}_A$ and $\hat{\theta}_B$ from $\mathcal{H}$ and $\mathcal{U}$, via Eq. \eqref{eq:theta_estimation_xtx}.
\ENDFOR
\RETURN $\hat{\aB}^t$
\end{algorithmic}
\end{algorithm}

\vspace{-0.3cm}

\section{Empirical Experiments} \label{sec:empirical_experiments_main}


We present three practical instances of Stackelberg games and benchmark the GISA algorithm (Alg. \ref{alg:gisa}) against a dual-UCB algorithm, where both agents use UCB-based no-regret learning \citep{blum:2007-external-internal-reg}. The $\stEmb$ transform abstracts away the need for exact reward structure knowledge, enabling well-behaved representations for online learning in new problem settings. GISA starts by sampling the action space until uncertainty intervals become disjoint, then enters a continuous learning phase, subject to the guarantees in Thm. \ref{thm:sphere_man_iso_reg}. This method generalizes beyond problem-specific solutions and addresses Stackelberg game learning methods lacking closed-form solutions and/or computational feasibility.


\textbf{$\mathbb{R}^1$ Stackelberg Game:} In this game, the leader chooses an action, anticipating the follower's best response. Both players have one-dimensional action spaces. Nonlinear rewards and penalties complicate the equilibrium, requiring numerical methods to find optimal strategies. A real-world example is an energy grid management, where a utility company (leader) sets prices or output levels, considering consumers' (followers) usage and nonlinear factors like demand fluctuations or storage constraints. (Details are presented in Appendix \ref{sec:multi-dim-ssg}.)


\textbf{The Newsvendor Pricing Game (NPG):} Modelled from \citep{cesa:2022-supp-chain-games, liu:2024_stacknews_adt}, the NPG models a supply chain consisting of a supplier (leader) and retailer (follower) in a repeated Stackelberg game. The leader's action space is $\mathbf{a} \in \mathbb{R}^1$, and the follower's is $\mathbf{b} \in \mathbb{R}^2$. The supplier dynamically prices the product to maximize reward, while the retailer optimizes pricing and order quantity based on stochastic demand, following classical Newsvendor theory \citep{arrow:1951newsboy, petruzzi:1999newsv}. The asymmetric reward function and stochastic demand complicates online learning significantly. (Details are presented in Appendix \ref{sec:multi-dim-ssg}.)


\textbf{Stackelberg Security Game (SSG) in $\mathbb{R}^5$:} Motivated by \citep{balcan:2015-stack-learn-sec, zhang:2021_bayesian_ssg}, this SSG models a defender (leader) allocating limited resources across multiple targets, anticipating an attacker’s (follower) strategy. Both players select actions from $\mathbb{R}^5$, with rewards driven by the difference between the actions (i.e. $\aB - \bB$) and quadratic penalties for overextension. Resource constraints are imposed via weighted $L_1$-norms, limiting feasible actions. The Stackelberg equilibrium is defined by the leader’s optimal resource allocation, accounting for the adversary’s best response. Nonlinear  constraints increase the problem complexity, conventionally requiring numerical solutions. (Details are presented in Appendix \ref{sec:multi-dim-ssg}.)

\vspace{-0.1cm}
\section{Conclusion}

This work establishes a foundational connection between Stackelberg games and normalizing neural flows, marking a significant advancement in the study of equilibrium learning and manifold learning. By utilizing normalizing flows to map joint action spaces onto Riemannian manifolds, particularly spherical ones, we offer a novel, theoretically grounded framework with formal guarantees on simple regret. This approach represents the first application of normalizing flows in game-theoretic settings, specifically Stackelberg games, thereby opening new avenues for learning on spherical manifolds. Current limitations include the restriction to spherical manifolds, however, the key principles of geodesic distance minimization apply, and warrants future investigation. Our empirical results, grounded in realistic simulation scenarios, highlight promising improvements in both computational efficiency and regret minimization, underscoring the broad potential of this methodology across multiple domains in economics and engineering. Despite potential challenges related to numerical accuracy for the neural flow network, this integration of manifold learning into game theory exhibits strong implications for online learning, positioning neural flows as a promising tool for both machine learning and strategic decision-making.


\clearpage
\appendix

\section*{Acknowledgements}

The affiliated authors thank the departmental research funding from the Technical University of Munich School of Computation and Information Technology for their generous support. We also acknowledge Stefanos Leonardos, Vinzenz Thoma, and the paper reviewers for their the insightful technical feedback helping to refine our work.

\section*{Ethical Statement}
We affirm that this research complies with the accepted ethical standards in scientific research. All simulations and methodologies were conducted with integrity and transparency, without harm to individuals, groups, or the environment. From a perspective of social impact, we ensured that the theoretical and practical contributions of this work are aimed at advancing knowledge in a responsible and ethical manner, with no misuse or malicious application of the techniques proposed. Additionally, no conflicts of interest or external influences have compromised the objectivity or scientific rigour of this work.





\bibliography{main}  
\clearpage

\newpage
\appendix
\onecolumn

\section{Technical Details and Definitions}

\subsection{Compact and Closed Sets}\label{sec:compac_and_closed_set}

In this formal definition, \(\stEmb\) is both compact and closed in the product space \(\mathcal{A} \times \mathcal{B}\). A set \(\stEmb\) is compact if for every open cover \(\{U_i\}_{i \in I}\) of \(\stEmb\), there exists a finite subcover such that \(\stEmb \subseteq \bigcup_{k=1}^n U_{i_k}\), where \(U_{i_k}\) are open sets in \(\mathcal{A} \times \mathcal{B}\). This ensures that \(\stEmb\) is "contained" in a finite manner within the space, even if \(\mathcal{A} \times \mathcal{B}\) is infinite. Furthermore, \(\stEmb\) is closed if its complement, \(\stEmb^c = (\mathcal{A} \times \mathcal{B}) \setminus \stEmb\), is open. This implies that \(\stEmb\) contains all its limit points, making it a complete set within the topological space. Thus, \(\stEmb\) is a compact and closed subset of \(\mathcal{A} \times \mathcal{B}\), meaning that it is both bounded and contains its boundary, providing useful properties for convergence and stability in this space.
\begin{align}
     \forall \{U_i\}_{i \in I}, \quad \stEmb \subseteq \bigcup_{i \in I} U_i \implies \exists \{U_{i_1}, U_{i_2}, \dots, U_{i_n}\} \text{ such that } \stEmb \subseteq \bigcup_{k=1}^{n} U_{i_k}, 
     (\mathcal{A} \times \mathcal{B}) \setminus \stEmb \text{ is open}. \label{eq:compact_closed_definition}
\end{align}

\subsection{Assumptions on Linear Reward Function} \label{sec:ass_lin_reward}

We ground our analysis in standard assumptions that enjoy broad consensus in statistics and online learning literature. First we ensure that the covariance matrix $\precMat$ is well-conditioned and positive semi-definite (PSD), with a regularization parameter $\lambda_{\text{reg}}$ balancing bias and variance, while the norm $||\phi(\aB^t,\bB^t) ||_{\precMat}$ must remain small to facilitate efficient uncertainty reduction. (These assumptions are outlined in detail in Appendix \ref{sec:ass_lin_reward}.) 

\begin{enumerate}

    \item \textbf{Covariance Matrix}: 
    \begin{align}
        \Sigma_T := \sum_{t=1}^{T} \phi(\aB^t,\bB^t) \phi(\aB^t,\bB^t)^\top + \lambda_{\text{reg}} I
    \end{align}
    $\phi(\aB^t,\bB^t)$ must ensure that the covariance matrix $\precMat$ (a.k.a. the inverse of the covariance matrix) is sufficiently large for effective learning. 
    
    \item \textbf{Norm Bounds}:
    \begin{align}
        \| \phi(\aB^t,\bB^t) \|_{\precMat}  \equiv \sqrt{\phi(\aB^t,\bB^t) \precMat \phi(\aB^t,\bB^t)^\top}
    \end{align}

    $\| \phi(\aB^t,\bB^t) \|_{\precMat}$ must be small to ensure efficient uncertainty reduction.
    
    \item \textbf{Regularization Effect}: Regularization parameter \(\lambda_{\text{reg}}\) balances bias and variance, affecting sample complexity.
    \item \textbf{Positive Semi-Definiteness}: $\precMat$ is positive semi-definite (PSD).
\end{enumerate}

\subsection{Discrete Measure Interpretation}
Let $\{x_1, x_2, \ldots, x_n\}$ be a set of discrete points in $\mathbb{R}^n$. We define the measure $\alpha$ on these points as,
\begin{align}
    \alpha = \sum_{i=1}^n \alpha(\{x_i\}) \delta_{x_i}
\end{align}
where $\delta_{x_i}$ is the Dirac measure centered at $x_i$. The integral of a function $f: \mathbb{R}^n \to \mathbb{R}$ with respect to the measure $\alpha$ is given by,
\begin{align}
    \int_{\mathbb{R}^n} f(x) \, d\alpha(x) = \sum_{i=1}^k \alpha(\{x_i\}) f(x_i)
\end{align}

\subsection{Definition of Riemann Manifold} \label{sec:riemmanian_manifold_definition}


A Riemannian manifold, expressed as \( \stEmb \), consists of a smooth manifold \( \stEmb \) equipped with a smoothly varying collection of inner products \( \omega_p \) on each tangent space \( T_p \stEmb \) at every point \( p \in \stEmb \). This assignment \( \omega_p: T_p \stEmb \times T_p \stEmb \rightarrow \mathbb{R} \) is positive-definite, meaning it measures angles and lengths in a consistent and non-degenerate manner. Consequently, each vector \( \mathbf{v} \in T_p \stEmb \) inherits a smoothly defined norm \( \|\mathbf{v}\|_p = \sqrt{\omega_p(\mathbf{v}, \mathbf{v})} \). This structure allows \( \stEmb \) to possess a locally varying yet smoothly coherent geometric framework.

\subsection{Stochastic Perturbation Function} \label{sec:perturb_function_defn}

To model uncertainty in the joint action space, we introduce a stochastic perturbation over the leader and follower actions. Specifically, we define a small, one-step random perturbation function \(\mathfrak{J}(\mathbf{a}, \mathbf{b})\), where \(\mathbf{a} \in \mathbb{R}^m\) and \(\mathbf{b} \in \mathbb{R}^n\) are the actions of the leader and follower, respectively. The perturbed joint action is given by:
\begin{align}
    \mathfrak{J}(\mathbf{a}, \mathbf{b}) &= \left( \mathbf{a}', \mathbf{b}' \right) = \left( \mathbf{a} + \epsilon_a, \mathbf{b} + \epsilon_b \right)
\end{align}
where \(\epsilon_a \in \mathbb{R}^m\) and \(\epsilon_b \in \mathbb{R}^n\) are independent Gaussian perturbations with zero mean and variance \(\sigma_a^2\) and \(\sigma_b^2\), respectively:
\begin{align}
    \epsilon_a &\sim \mathcal{N}(0, \sigma_a^2 I_m), \quad \epsilon_b \sim \mathcal{N}(0, \sigma_b^2 I_n)
\end{align}
Here, \(\sigma_a\) and \(\sigma_b\) are scalar diffusion parameters controlling the magnitude of the perturbation, and \(I_m\) and \(I_n\) are identity matrices of size \(m \times m\) and \(n \times n\), ensuring isotropic perturbations in each component of \(\mathbf{a}\) and \(\mathbf{b}\).

In component form, this perturbation can be written as:
\begin{align}
    a_i' = a_i + \epsilon_{a_i}, \quad \epsilon_{a_i} \sim \mathcal{N}(0, \sigma_a^2), \qquad
    b_j' = b_j + \epsilon_{b_j}, \quad \epsilon_{b_j} \sim \mathcal{N}(0, \sigma_b^2)
\end{align}

This formulation introduces small, independent, and isotropic random deviations from the original actions, modeling the stochastic uncertainty in the decision-making process.

\subsection{Geodesic Repulsion Loss} \label{sec:geodesic_repul_loss_defn}

To encourage an even distribution of points on the spherical manifold, we employ the \textit{Geodesic repulsion loss}, which penalizes pairs of points that are too close in geodesic distance. This loss function facilitates the spreading out of points uniformly over the sphere, preventing clustering.

\textbf{Geodesic Distance:} Let \( \mathbf{y}_i, \mathbf{y}_j \in \mathbb{R}^D \) be points on the surface of a Riemmanian manifold denoted as $\geoDis(\mathbf{y}_i, \mathbf{y}_j)$ in the abstract sense. For a unit sphere it would hold that \( \|\mathbf{y}_i\| = \|\mathbf{y}_j\| = 1 \)). The geodesic distance between two points on the sphere is the angle between them, which can be computed from their dot product,
$    \geoDis(\mathbf{y}_i, \mathbf{y}_j) = \arccos \left( \mathbf{y}_i^\top \mathbf{y}_j \right),
$
where \( \mathbf{y}_i^\top \mathbf{y}_j \) is the dot product of \( \mathbf{y}_i \) and \( \mathbf{y}_j \).

\textbf{Repulsion Term:} To penalize pairs of points that are close in geodesic distance, we use an exponential decay function, which strongly penalizes small distances:
\begin{equation}
    \exp\left(-\frac{\geoDis(\mathbf{y}_i, \mathbf{y}_j)}{\gamma}\right),
\end{equation}
where \( \gamma > 0 \) is a sensitivity parameter controlling how strongly the loss reacts to small distances. A smaller \( \gamma \) enforces stronger repulsion between nearby points.

\textbf{Geodesic Repulsion Loss:} 
The total Geodesic Repulsion Loss is computed as the sum of repulsion terms over all pairs of points, excluding the diagonal (self-repulsion),
\begin{equation}
    \repuLoss = \sum_{i=1}^{n} \sum_{j=1, j \neq i}^{n} \exp\left(-\frac{\arccos \left( \mathbf{y}_i^\top \mathbf{y}_j \right)}{\gamma}\right),
\end{equation}
where \( n \) is the number of points on the manifold. The geodesic distance \( \geoDis(\mathbf{y}_i, \mathbf{y}_j) \) is computed using the angle between \( \mathbf{y}_i \) and \( \mathbf{y}_j \), ensuring that points are uniformly spaced across the spherical manifold.

To avoid penalizing points for being close to themselves, we exclude the self-repulsion terms by masking the diagonal elements in the pairwise distance computation,

\begin{equation}
    \geoDis(\mathbf{y}_i, \mathbf{y}_i) = 0, \quad \text{for all } i.
\end{equation}

This formulation ensures that points are pushed apart when their geodesic distances are too small, leading to a more uniform distribution on the manifold, which is critical for preserving the geometry of the learned representation.

\subsection{Negative Log-Likelihood Loss for Normalizing Flows} \label{sec:nll_loss_defn}

Let \( x \in \mathbb{R}^d \) be an input data point, and let \( f: \mathbb{R}^d \to \mathbb{R}^d \) be an invertible transformation defined by the normalizing flow. The transformation \( f \) maps the input data \( x \) to a latent variable \( z = f(x) \) that follows a simple base distribution \( p_Z(z) \). Assume that the base distribution is a standard normal distribution, \( Z \sim \mathcal{N}(0, I_d) \), with the probability density function (PDF) given by,
\begin{equation}
    p_Z(z) = \frac{1}{(2\pi)^{d/2}} \exp \left( -\frac{1}{2} \|z\|^2 \right).
\end{equation}
The log probability under this distribution is,
$
    \log p_Z(z) = -\frac{1}{2} \|z\|^2 - \frac{d}{2} \log(2\pi).
$
Using the change of variables formula, the probability density of \( x \) under the model is related to the base distribution via the transformation \( f \) as follows,
\begin{equation}
    p_X(x) = p_Z(f(x)) \left| \det \frac{\partial f(x)}{\partial x} \right|.
\end{equation}
Where \( \frac{\partial f(x)}{\partial x} \) is the Jacobian matrix of \( f \) with respect to \( x \), and \( \left| \det \frac{\partial f(x)}{\partial x} \right| \) is the absolute value of the determinant of the Jacobian.

\textbf{NLL Loss:} The negative log-likelihood (NLL) loss for a single data point \( x \) is defined as,

\begin{equation}
    \nfLoss(x) = - \log p_X(x) = - \left[ \log p_Z(f(x)) + \log \left| \det \frac{\partial f(x)}{\partial x} \right| \right].
\end{equation}

Substituting the log probability of \( z = f(x) \) under the base distribution:
\begin{equation}
    \nfLoss(x) = \frac{1}{2} \|f(x)\|^2 + \frac{d}{2} \log(2\pi) - \log \left| \det \frac{\partial f(x)}{\partial x} \right|.
\end{equation}

For a dataset \( \{x_i\}_{i=1}^n \), the total NLL loss is the average over all data points:
\begin{equation}
    \nfLoss = \frac{1}{n} \sum_{i=1}^n \left( \frac{1}{2} \|f(x_i)\|^2 + \frac{d}{2} \log(2\pi) - \log \left| \det \frac{\partial f(x_i)}{\partial x_i} \right| \right).
\end{equation}

The objective of training is to minimize \( \nfLoss \), ensuring that the transformed latent variables \( z = f(x) \) follow the base distribution and the transformation \( f \) appropriately adjusts the volume of space via the Jacobian determinant.

\section{Optimization Algorithms}

\subsection{Optimization of Stackelberg Games} \label{sec:stack_games_opt_app}

\textbf{Optimization under Perfect Information:} We see that regardless of the convexity of $\setA$ or $\setB$, so long as we are dealing with compact spaces, under perfect information, we can solve the Stackelberg equilibrium by solving a bilevel optimization problem expressed as,

\begin{minipage}{0.5\textwidth}
  \begin{equation}
    \pi_A^* = \arg\max_{\polA\in \Pi_A} \, \mathbb{E}\big[\innerP{\theta^*_A, \, \phi(\polA, \pi_B^*(\polA)) }\big],
  \end{equation}
\end{minipage}
\hfill
\begin{minipage}{0.5\textwidth}
  \begin{equation}
    \pi_B^*(\polA) := \argmaX{\polB\in \Pi_B} \, \mathbb{E}\big[\innerP{\theta^*_B, \, \phi(\polA, \polB)}\big],\label{eq:bottom_bilevel}
  \end{equation}
\end{minipage}

with a slight abuse of notation, we use $\phi(\polA, \polB)$ and $\pi_B^*(\polA)$ to denote $\mathbb{E}_{\polA,\polB}[\phi]$ and  the best response function in response to policy $\polA$, respectively. The expectation are taken with respect to the sub-Gaussian noises.  For some no-regret learning algorithm, suppose that after observing $t$ samples, the uncertainty among the parameters $\theta$, is characterized by,
\begin{align}
    \Ball(\theta^*,\uncerBall{t}):=\Big\{\theta: \norm{\theta^* - \theta} \leq \uncerBall{t}\Big\},
\end{align}

with probability at least $1-\delta(t)$. In this formulation, $||\cdot||$ denotes some norm in the space of parameters. 

Assuming a \textit{pessimistic} leader, the optimization problem under parameter uncertainty at round $t$ can be expressed as,
\begin{align}
    &\pi_A^* := \arg\max_{\polA\in \Pi_A} \min_{\theta_A}\, \mathbb{E}\big[\innerP{\theta_A, \, \phi(\polA, \pi_B^*(\polA)) }\big],\quad &\text{s.t.} \quad  \theta_A \in \Ball(\theta^*_A,\uncerBall{t}), \label{eq:pi_a_star_app}
\end{align}
\begin{align}
     &\text{where}\quad \pi_B^*(\polA) := \argmaX{\polB\in \Pi_B} \max_{\theta_B}\,\mathbb{E}\big[\innerP{\theta_B, \, \phi(\polA, \polB)}\big],\, &\text{s.t.} \,  \theta_B \in \Ball(\theta^*_B,\uncerBall{t}). \label{eq:pi_b_star_app}
\end{align}

Given $\pi_B^*(\cdot)$ in Eq. \eqref{eq:pi_b_star}, let us define,
\begin{align}
   & \underline{\mathcal{H}}(\theta_A^*,t) := \max_{\pi_A\in\Pi_A}\min_{\theta_A}\, \mathbb{E}\big[\innerP{\theta_A, \, \phi(\polA, \pi_B^*(\polA)) }\big], \quad &\text{s.t.} \quad  \theta_A \in \Ball(\theta^*_A,\uncerBall{t}), \label{eq:min_H_app} \\
     & \overline{\mathcal{H}}(\theta_A^*,t) := \max_{\pi_A\in\Pi_A}\max_{\theta_A}\, \mathbb{E}\big[\innerP{\theta_A, \, \phi(\polA, \pi_B^*(\polA)) }\big], \quad &\text{s.t.} \quad  \theta_A \in \Ball(\theta^*_A,\uncerBall{t}). \label{eq:max_H_app}
\end{align}

We can see from the structure of Eq. \eqref{eq:pi_a_star_app} to Eq. \eqref{eq:max_H_app}, the resemblance to a bi-level optimization problem, which can be solved both under perfect information and uncertainty. We provide a discussion of such methods in Appendix \ref{sec:bilevel_extra_discuss}.

\subsection{Bi-level Optimization Structure}\label{sec:bilevel_extra_discuss}

\textbf{Bi-level Optimization Structure:} The optimization problems represented by Eqs. \eqref{eq:pi_a_star} and \eqref{eq:pi_b_star} exhibit the structure of a \textit{bi-level optimization} problem \citep{beck:2023_bilevel_survey, sinha:2017_bilevel_review, balling:1995_multilevel_algorithm}. Generally, a bilevel optimization problem comprises an upper-level optimization task with an embedded lower-level problem, where the solution to the upper-level problem depends on the solution to the lower-level one. Two conventional methods have been employed to address the bilevel optimization problem. The first leverages the Karush-Kuhn-Tucker (KKT) conditions to exploit the optimality of the lower-level problem (see Appendix \ref{sec:kkt_stack_formulation}). The second employs gradient-based algorithms like gradient ascent (discussed in Appendix \ref{sec:grad_ascent_stack_formulation}). Both approaches, however, have notable limitations. KKT conditions assume strong convexity or pseudo-convexity, making them unsuitable for many non-convex settings, while gradient-based methods, in addition to being computationally inefficient, often struggle or converge poorly when weak-convexity is not guaranteed. Moreover, these methods typically assume optimization under perfect information, whereas we focus on learning-based frameworks with uncertainty due to sampling.

\subsection{KKT Reformulation for Solving Stackelberg Optimization Problems} \label{sec:kkt_stack_formulation}

 The bi-level optimization structure can be solved via reformulating the problem as a bilevel optimization problem via the Karush-Kuhn-Tucker (KKT) conditions. It assumes convexity and differentiability in the embedded space and transforms the original bilevel problem into a single-stage optimization problem via the KKT conditions. 
\begin{equation}
\begin{aligned}
\max_{\polA,\polB,\lambda} \,& \innerP{\theta_A, \, \phi(\polA, \polB) } \\
\text { s.t. } & \polA \in \Pi_A \\
& \nabla_{\polB} \innerP{\theta_B, \, \phi(\polA, \polB)}+\sum_{i=1}^{\ell} \lambda_i \nabla_{\polB} g_{i}(\polB)=0, \\
& \lambda^{\top} g(\polB)=0, \quad  \lambda \geq 0, \quad  g(\polB) \geq 0,
\end{aligned}
\end{equation}
where $\Pi_B=\{\polB|g(\polB)\geq 0\}$ and $g_i$ represents the $i$-th constraint of $\Pi_B$.
Specifically, it requires the convexity of the lower level problem \eqref{eq:bottom_bilevel}. Otherwise, KKT complementarity conditions turns the problem into a nonconvex and nonlinear problem even $\pi_B$ is a set of linear constraints. And the problem is incapable to solve under normal nonconvex and nonlinear algorithm. In addition, Slater's constraint qualification is required to ensure that the solution under KKT reformulation is the solution of original bilevel problem.\citep{allende2013solving}The reformulation involves converting non-linear constraints into a convex hull, thus simplifying the problem into a linear program (LP). Sensitivity analysis can be then performed to understand how changes in constraints impact the solution, with particular attention to the effects of shrinking parameters on the objective function. The approach is utilizes the application of the Weak Duality Theorem to analyze sensitivity.

\subsection{Gradient Ascent Approach for Solving Bilevel Optimization Problems} \label{sec:grad_ascent_stack_formulation}

Another approaches is transforming Stackelberg game into the the bilevel optimization problem. Namely, we are interested in the following problem,
\begin{equation}
\begin{aligned}
& \min _{x \in \mathbb{R}^d, y \in y^*(x)} f(x, y),\ \text{(Upper-Level)} \\ 
& \text { s.t. } y^*(x) \equiv \arg \min _{y \in Y} g(x, y).\ \text{(Lower-Level)}
\end{aligned}
\end{equation}
The gradient-based algorithms have seen a growing interest in the bilevel problem \citep{ji2021bilevel,sato:2021gradient,xiao2023generalized,liu2021towards,huang2022enhanced,huang2024optimal}. To measure the stationarity of the lower-level problem, Polyak-Lojasiewicz(PL) condition on $g(x,\cdot)$ is widely applied to show the last-iterate convergence of $\|\nabla_x f(x^t,y^\star(x^t))\|$, i.e,
\begin{equation}
    \|\nabla_y g(x,y)\|^2\geq 2\mu(g(x,y)-\min_z g(x,z)).
\end{equation}
where $\mu$ is a positive constant. This condition relaxes the strong convexity but is still not satisfied for the polynomial function $g(x,y)=y^4$. Also, the lower level function $g(x,\cdot)$ needs to be differentiable in $R^d$. For the Stackelberg game, this is not the case since the follower's strategy $\polB\in\Pi_B$.

Interestingly, should the objective functions be differentiable, one strategy to do this optimization is via gradient descent. Of course the gradient descent algorithm would have to be reformulated to accommodate to a finite amount of traversals based on the gradient update \citep{sato:2021gradient} \citep{franceschi:2017forward} \citep{naveiro:2019_gradient_sg}.

\subsection{Technical Note: Conversion of Absolute Value Constraints into Regular LP Constraints} \label{sec:tech_note_absval_const_to_lp}

Suppose there exists $D$ dimensions on the L1 norm. Wnd we have the constraint,
\begin{align}
    \norm{\mathbf{x} - \mathbf{c}}_1 \leq D, \quad \text{expressed as,} \quad \, \sum_{i=1}^D |x_i - c_i| \leq C
\end{align}
This can be expressed as, 
\begin{align}
    &z_i \geq x_i - c_d, \quad \text{for } i = 1, 2, \ldots, D \\
    &z_i \geq -(x_i - c_d), \quad \text{for } i = 1, 2, \ldots, D \\
    &\sum_{i=1}^D z_i \leq C, \quad z_i\geq 0, \quad \text{for } i = 1, 2, \ldots, D
\end{align}
By introducing a new dummy variable $z_i$, we and adding $2D+1$ additional constraints, we can express this now as a standard linear program.

\section{Topology \& Geodesy}

\subsection{Convex manifold Definitions:} \label{sec:convex_manifold_desc}

\begin{definition} \label{def:geodesic_convex_subsets}
    \textbf{Geodesically Convex Sets:} Let $(\stEmb, h)$ be a Riemannian manifold, where $\stEmb$ is a smooth manifold and $h(\aB, \bB)$ is a Riemannian metric on $\stEmb$ (i.e. innner product). A subset $\convexGeoSet \subseteq \stEmb$ is said to be \textit{geodesically convex} if for any two points $\aB, \bB \in \convexGeoSet$, there exists a geodesic $\tau: d \in [0, 1] \to \stEmb$ parameterized by $d$ such that, 
    \begin{align}
        \tau(0) = \aB, \, \, \tau(1) = \bB, \, \qquad \text{and}, \, \, \tau(d) \in \convexGeoSet, \quad \forall d \in [0, 1].
    \end{align}
\end{definition}

Where $d$ can be viewed as a parameter proportional to the distance travelled along the geodesic. In other words, a set $\convexGeoSet$ is geodesically convex if for any two points in $\convexGeoSet$, there exists a geodesic between these points that lies entirely within $\convexGeoSet$. 

\begin{definition} \label{def:convex_manifold}
    \tocheck{\textbf{Locally Convex Manifolds:} A locally convex manifold is a manifold where the geodesic between any two points on the manifold falls within, or constitutes, a geodesically convex set $\mathcal{S}_\stEmb$, as per Definition \ref{def:geodesic_convex_subsets}.}
\end{definition}

\textbf{Remark.} It is key to note that a spherical surface is not, by formal definition, a convex manifold in the intrinsic (Riemannian) sense. The only violating principle is that the geodesic minimizing distance between two points on the manifold is not unique at the antipodal points of a spherical surface. Nevertheless, what is important to convey is that the ambient Euclidean space bounded by the spherical manifold is convex, the sphere itself can thus be viewed as an extrinsically convex hypersurface. Thereafter, the geodesic properties are used for efficient equilibrium computation, and the dimensional properties in the ambient space determine the reward output.

\subsection{Specifications of the Ideal Stackelberg Manifold} \label{sec:ideal_stack_manifold}

The total loss $\mathcal{L}(\phi)$ is composed of multiple loss functions added together in a linear convex combination, as described in Eq. \eqref{eq:manifold_total_loss}. To construct the Stackelberg manifold $\stEmb$, data is first sampled uniformly from the ambient Cartesian space. We then, fit a normalizing flow to $\stEmb$  based on the criteria in Table \ref{tab:stack-emb-dynamics}, minizing $\mathcal{L}(\phi)$. 

\begin{table*}[h]\centering 
    \begin{enumerate}[-, start=1,label={(\bfseries D\arabic*)}, wide, labelwidth=!, labelindent=0pt, topsep=0pt, itemsep=-8pt]
    \small
    \begin{tabular}{p{2.7in}p{3.7in}} \toprule
        \textbf{Definition} & \textbf{Expression} \\ \midrule
        \vspace{-1em} \item $\stEmb$ is measurable and reachable w.r.t. a $\sigma$-algebra over $\mathcal{A} \cross \mathcal{B}$ (denoted as $\mathfrak{E}_{\mathcal{A} \times \mathcal{B}}$).  \label{enu:reachability_dynamic} & \vspace{-2.3em} 
        \begin{align} \stEmb \subseteq \mathcal{A} \times \mathcal{B} \subseteq\mathbb{R}^D, \quad \stEmb \in \mathfrak{E}_{\mathcal{A} \times \mathcal{B}}. 
        \end{align}  \vspace{-2.3em} \\ \midrule 
        \item $\stEmb$ is compact and closed.  & \vspace{-2.3em}
        \begin{align}\text{See Appendix \ref{sec:compac_and_closed_set} for detailed definition.} \end{align} \vspace{-1.8em} \\ \midrule 
        \item $\stEmb$ is Lipschitz in the joint $\setA \cross \setB$. \label{enu:lipschitz_stack_emb}  & \vspace{-2.3em}
        \begin{align}
            \Big| \norm{\nabla_{\aB}\phi}_p + \norm{\nabla_{\bB}\phi}_p- C \Big| \leq L_c \label{eq:lipschitz_stack_emb} 
        \end{align}  \vspace{-1.8em} \\ \midrule 
        \item $\stEmb$ variational sensitivity in $\setA \cross \setB$. \label{enu:variational_sensitivity}  & \vspace{-2.3em}
        \begin{align}
            ||\aB - \aB'|| \leq \epsilon \implies ||\phi(\aB', \bB) - \phi(\aB, \bB)|| \leq \delta, \forall \bB \\
            ||\bB - \bB'|| \leq \epsilon \implies ||\phi(\aB, \bB') - \phi(\aB, \bB)|| \leq \delta, \forall \aB
            \label{eq:variational_sensitivity} 
        \end{align}  \vspace{-1.8em} \\ \midrule 
        \item $\stEmb$ forms a smooth Riemannian manifold. & \vspace{-2.3em} \begin{align} \text{See Appendix \ref{sec:riemmanian_manifold_definition} for detailed definition.}  \end{align}  \vspace{-1.8em} \\ \midrule
        \item $\stEmb$ has an approximate pullback. There exists $\phi^{-1}(\cdot): \stEmb \mapsto \setA \cross \setB$ such that, \label{enu:pullback_dynamic} & \vspace{-2.3em} \begin{align} \norm{\phi^{-1}(\phi(\aB,\bB)) - (\aB,\bB)} \leq \epsilon,\, \forall \aB,\bB 
        \end{align}   \vspace{-2.3em} \\ \midrule
    \end{tabular} \caption{Key criteria of the Stackelberg Manifold $\stEmb$.} \label{tab:stack-emb-dynamics}
\end{enumerate}
\end{table*}

The manifold learning process trains the bipartites normalizing flow architecture (Fig. \ref{fig:bipartite_nf}) to adhere to the desiderata \ref{enu:reachability_dynamic} to \ref{enu:pullback_dynamic} from Table \ref{tab:stack-emb-dynamics} as much as possible. This requires a trade-off between being well behaved on the manifold, as stipulated by \ref{enu:lipschitz_stack_emb} and \ref{enu:variational_sensitivity}, and having an accurate inverse \ref{enu:pullback_dynamic}. 

In general, this is not achievable via simplistic Euclidean maps, such as projections or affine maps. Without normalizing flows, the mapping of the ambient data to the manifold can become overly localized, sporadically clustered, sensitive to perturbation, and/or non-invertible, rendering the embedding ineffective (we present such scenarios in Appendix \ref{sec:bad_stackelberg_embeddings_viz}). Therefore, we must ensure the data is evenly spread across the manifold while maintaining the desired properties. Thus, we train a neural network to \changesMarker{represent} $\phi$, with the loss function,

The aforementioned losses in Eq. \eqref{eq:manifold_total_loss} are linearly combined in a convex combination to form the total loss $\mathcal{L}(\phi)$, denoted as $\alpha_N$, $\alpha_R$, $\alpha_P$, and $\alpha_L$ respectively. The hyperparameters were optimized via a selection process, leveraging empirical validation to identify the settings that maximized performance. Experimental hyperparameters and architecture of the normalizing neural flow network can be found in Appendix \ref{sec:nn_arch_details}. \tocheck{To summarize, we aim to learn a well behaved embedding $\stEmb$ via normalizing flows, which constructs a bijective map between ambient space and manifold. $\mathcal{L}_\phi^N$ is the loss function for the normalizing flow, and the additional loss functions, as outlined in Table \ref{tab:stack-emb-dynamics}, aim to achieve uniform spreading, Lipschitz continuity, and bijective properties across the manifold.}

\subsection{\tocheck{Linear Relationship of the Embedding Space $\stEmb$ to the Reward Space} } \label{sec:note_embed_reward_linear_appendix}

The linear bandit assumption that the expected reward is the inner product between an embedding space and some linear parameters is commonly accepted in the online learning literature \citep{amani:2019linear_bandit_safety, moradipari:2022feature_map, zanette:2021-doe_bandit, lattimore:2020_bandit_book}. In our problem setting, we make the similar assumption that there exists a linear relationship between the embedding space and the reward space. To demonstrate rigorously, let $Y$ represent an expectation over the reward variable. Given a linear relation in the action space \(X\), suppose the reward can be expressed as,
    
    \[
    Y = \mathfrak{F}(X) = \langle \tilde{\theta}, \tilde{\phi}(X) \rangle.
    \]
    $\mathfrak{F}(X)$ constitutes an abstract function, and $\tilde{\phi}(\cdot)$, represents a feature map, a common tool in linear multi-armed bandit learning \citep{zanette:2021-doe_bandit}. Suppose our feature map, $\tilde{\theta}$ can be arbitrarily complex, we can therefore replicate the result of $\mathfrak{F}(X)$ via $\langle \tilde{\theta}, \tilde{\phi}(X) \rangle$ for most well-behaved functions. In one perspective, we can view the linear feature map as the final layer in a neural network output. Instead of learning the action to reward relation directly via a neural network per se, we learn the relation of the ambient space (or the joint action space) to the embedding space. 


\subsection{Proof of Lemma \ref{lem:existence_of_theta}} \label{prf:existence_of_theta}

\textbf{Linear Relation for Smooth Invertible Maps:} Suppose that \(Y\) can be expressed as \(Y = \langle \tilde{\theta}, \tilde{\phi}(X) \rangle\), where \(\tilde{\phi}(X): \mathbb{R}^d \to \mathbb{R}^d\) is smooth and bijective, and \(\tilde{\theta} \in \mathbb{R}^d\). Then  there exists an alternative set of parameters \(\theta \in \mathbb{R}^k\) and corresponding map  \(\phi(X): \mathbb{R}^d \to \mathbb{R}^k\) such that for any \(k \geq d\) such that, $Y = \langle \theta, \phi(X) \rangle$.


\begin{proof}

The proof of this lemma proceeds by first establishing the existence of an equivalent higher-dimensional representation of $\tilde{\phi}(X)$. Next we demonstrate that, should $\tilde{\phi}$ be smooth and bijective, then given another $\phi: X \to \phi(X)$ that is also smooth and bijective, a bijection, $T(\cdot)$, must exist between their respective images $T: \tilde{\phi}(X) \to \phi(X)$. Subsequently, we demonstrate that should a bijective transformation $T(\cdot)$ exist between the proposed $\phi(X)$ and $\tilde{\phi}(X)$, then a corresponding valid parameter set $\theta$ also exists. Thus, $\phi(X)$ and $\theta$ can serve as equivalent representations to $\tilde{\phi}(X)$ and $\tilde{\theta}$, with no  restrictions on the dimension of $\phi(X)$. 


\paragraph{Higher-Dimensional Embedding of \(\tilde{\phi}(X)\):} Let \(\tilde{\phi}'(X): \mathbb{R}^d \to \mathbb{R}^k\) for $k \geq d$ be a higher-dimensional embedding of \(\tilde{\phi}(X)\), defined as,
\[
\tilde{\phi}'(X) = \begin{bmatrix} \tilde{\phi}(X) \\ g(X) \end{bmatrix},
\]
where \(g(X): \mathbb{R}^d \to \mathbb{R}^{k-d}\) is a smooth function. We further define the extended parameter vector as,
\[
\tilde{\theta}' = \begin{bmatrix} \tilde{\theta} \\ 0 \end{bmatrix} \in \mathbb{R}^k.
\]
Then, the original relationship is preserved:
$
Y = \langle \tilde{\theta}, \tilde{\phi}(X) \rangle = \langle \tilde{\theta}', \tilde{\phi}'(X) \rangle.
$
The consequence is that we can always find equivalent higher dimension exact representations of $\tilde{\phi}(X)$ and $\tilde{\theta}$. So long as the intrinsic dimension of the spaces is preserved, $\tilde{\theta}$ can have an equivalent representation in any higher dimensional space. (In our implementation, we apply a series of bijective stereographic transformations in the ambient space $X$ to increase the dimension of the embedding space where necessary.) For convenience moving forward, we shall refer to $\tilde{\phi}'(X)$ as $\tilde{\phi}(X)$ in the $k^{th}$ dimension, as their representations are equivalent.






\paragraph{Existence of \( \phi(X) \):}  Let $\phi: X \to \phi(X)$ be another smooth bijection along with $\tilde{\phi}: X \to \tilde{\phi}(X)$. Then, there exists a bijection between $\tilde{\phi}(X)$ and $\phi(X)$. Since $\tilde{\phi}$ is bijective, it has an inverse function $\tilde{\phi}^{-1}: \tilde{\phi}(X) \to X$. Similarly, $\phi$ has an inverse function $\phi^{-1}: \phi(X) \to X$. Let us now define the function,
\begin{equation}
    T: \tilde{\phi}(X) \to \phi(X), \quad T(y) = \phi(\tilde{\phi}^{-1}(y)).
\end{equation}
Since $\tilde{\phi}^{-1}$ is bijective, it uniquely maps each $y \in \tilde{\phi}(X)$ to some $x \in X$. Then, $\phi$ uniquely maps this $x$ to an element in $\phi(X)$. We next demonstrate that $T(\cdot)$ is both injective and surjective.

\begin{enumerate}[I.]
    \item \textbf{Injectivity:} If $T(y_1) = T(y_2)$, then $\phi(\tilde{\phi}^{-1}(y_1)) = \phi(\tilde{\phi}^{-1}(y_2))$. Since $\phi$ is bijective, it follows that $\tilde{\phi}^{-1}(y_1) = \tilde{\phi}^{-1}(y_2)$, and applying $\tilde{\phi}$ to both sides gives $y_1 = y_2$.

    \item \textbf{Surjectivity:} For every $z \in \phi(X)$, there exists $x \in X$ such that $z = \phi(x)$. Since $\tilde{\phi}$ is bijective, there exists $y = \tilde{\phi}(x) \in \tilde{\phi}(X)$. Thus, $T(y) = \phi(\tilde{\phi}^{-1}(y)) = z$, demonstrating surjectivity.

\end{enumerate}

Therefore, given that $T$ is both an injection and surjection, $T$ constitutes a bijection between $\tilde{\phi}(X)$ and $\phi(X)$.

\paragraph{Existence of $\langle \theta, \phi(X) \rangle$:} Given there there exists a bijection \(T: \mathbb{R}^k \to \mathbb{R}^k\), where for any higher-dimensional smooth and bijective mapping \(\phi(X): \mathbb{R}^d \to \mathbb{R}^k\), the following relation holds,
$
\phi(X) = T(\tilde{\phi}(X)).
$ 
Since \(T\) is bijective, the inverse \(T^{-1}\) exists and is smooth. This implies that
$
\tilde{\phi}(X) = T^{-1}(\phi(X)).$ 
Given that there exists some bijective transformation $T: \tilde{\phi}(X) \mapsto \phi(X)$, and its inverse $T^{-1}: \phi(X) \mapsto \tilde{\phi}(X)$, that is smooth and differentiable w.r.t. X. We can then express, 
    \[
    Y = \langle \tilde{\theta}, T^{-1}(\phi(X)) \rangle
    \]
    
\paragraph{Change of Variables:} To obtain a swap from $\tilde{\theta}$ to $\theta$ ,we select $\theta$ as,
$        \theta = (J_T^\top)^{-1} \tilde{\theta},$     
    where $J_T$ is the Jacobian of $T(\cdot)$ w.r.t. $\tilde{Y}$, where $\tilde{Y} \in \text{Im}(\tilde{\phi})$. Therefore, we show the existence of $\theta$ s.t.,
    \[
    Y = \langle \theta, \phi(X) \rangle,
    \]
    where $\phi(X) \equiv T \circ \tilde{\phi}(X)$. This argument relies on the bijectiveness and smoothness of $T(\cdot)$ consequently being a diffeomorphism, a fair assumption so long as we consider $\tilde{\phi}(X)$ to be smooth and well-behaved.







\end{proof}

\subsection{Notes on Scaling and Approximation Error for Non-Smooth $\stEmb$}

For non-smooth $\tilde{\phi}(X)$ we could settle for a continuous-to-discrete approximation, particularly useful for large discrete action spaces. As there are no limitations to what the learned feature map $\phi$ can be, it gives us great flexibility when it comes to constructing $\phi$ to approximate $T \circ \tilde{\phi}(\cdot)$. For this purpose, we apply normalizing flows  \citep{brehmer:2020_manifold_flows, durkan:2020_nflows, dinh:2016density} a  technology developed specifically for mapping ambient data to a desired manifold, $\stEmb$, of which multiple sufficient approximations for $\phi$ could potentially exist. This enables the construction of $\stEmb$ such that the linear relation between the embedding space and reward space will hold.

\paragraph{Scaling and Approximation of \(T(\cdot)\):} Suppose we are uncertain if $T(\cdot)$ exists in the higher dimension $k$. For any proposal of \(\phi(X)\) that is smooth, bijective, and generally well-behaved, should \(\phi(X)\) be a sufficiently overparameterized mapping (i.e., \(k \gg d\)), we can then approximate any bijective transformation between \(\tilde{\phi}(X)\) and \(\phi(X)\) with arbitrarily small error. Specifically, for any \(\epsilon > 0\), there exists a sufficiently large \(k\) such that:
\begin{align}
    \|T(\tilde{\phi}(X)) - \phi(X)\| < \epsilon, \qquad \text{and similarly,} \qquad \|T^{-1}(\phi(X)) - \tilde{\phi}(X)\| < \epsilon. \label{eq:tmap_est_error_v2}
\end{align}


This follows from the universal approximation property \citep{hornik:1989_universal_approx_multilayer, cybenko:1989_universal_approximation}, which states that increasing dimensionality provides additional degrees of freedom to represent complex transformations, effectively reducing approximation error.

\paragraph{Diminishing Approximation Error:} Let $\epsilon_{\texttt{MAP}}$ denote the approximation error in the representation of \(Y\) arising from the error in \(T(\cdot)\) and is given by:
\begin{align}
    \epsilon_{\texttt{MAP}} = \langle \tilde{\theta}, \tilde{\phi}(X) \rangle - \langle \theta, \phi(X) \rangle. \label{eq:e_map_inner_prod_expr}
\end{align}
Should we then apply the Cauchy-Schwarz inequality over Eq. \eqref{eq:e_map_inner_prod_expr}, and substitute the approximation of \(T(\cdot)\) from Eq. \eqref{eq:tmap_est_error_v2}, the error $\epsilon_{\texttt{MAP}}$ can be bounded by,
\[
\epsilon_{\texttt{MAP}} \leq \|\tilde{\theta}\| \cdot \|T^{-1}(\phi(X)) - \tilde{\phi}(X)\| = \|\tilde{\theta}\| \cdot \epsilon,
\]

which is upper-bounded by a factor of $\epsilon$ w.r.t. $\|T^{-1}(\phi(X)) - \tilde{\phi}(X)\| \leq \epsilon$. By scaling \(k\) sufficiently large, we reduce \(\epsilon\) to an arbitrarily small value, making \(\epsilon_{\texttt{MAP}}\) negligible. Thus, given the initial expression $Y = \langle \tilde{\theta}, \tilde{\phi}(X) \rangle$, there always exists a higher-dimensional \(\phi(X)\) and a parameter vector \(\theta\) such that,
$
Y = \langle \theta, \phi(X) \rangle,$
with negligible approximation error, $\epsilon_{\texttt{MAP}}$.

With respect to Lemma \ref{lem:existence_of_theta}, the absence of explicit knowledge of the transformation \(T: \tilde{\phi}(X) \mapsto \phi(X)\) does not undermine the validity of the result, provided there are no restrictions on the dimensionality of \(\phi(X)\). In our methodology, no such restrictions are imposed. Through \textit{overparameterization}, embedding \(X\) into a higher-dimensional space \(\phi(X)\) enables the mapping to indirectly capture the latent structure of \(\tilde{\phi}(X)\) and \(T(\cdot)\). Additionally, by constructing \(\phi(X)\) as a sufficiently complex approximation (using techniques such as normalizing flows) we ensure it can approximate any smooth, bijective transformation $T: \tilde{\phi}(X) \mapsto \phi(X)$. This flexibility allows \(\phi(X)\) to accommodate a wide range of mappings, ensuring that the representation \(Y = \langle \theta, \phi(X) \rangle\) holds even in the absence of explicit knowledge of \(T(\cdot)\). Simply put, if \(\phi(X)\) is sufficiently complex, an equivalent representation of the original \(Y = \langle \theta, \phi(X) \rangle\) can always be achieved. An alternative interpretation can also be drawn from the \textit{universal approximation theorem} \citep{hornik:1989_universal_approx_multilayer, cybenko:1989_universal_approximation}, which states that a sufficiently large neural network with a nonlinear activation function can approximate any continuous function on a compact domain to arbitrary precision.

\subsection{Proof for Lemma \ref{lem:geodesic_and_closeness_phi}} \label{prf:geodesic_and_closeness_phi}

\tocheck{Optimization on a convex manifold, such as a sphere presents unique advantages to simplify the problem. The first simplification, Lemma \ref{lem:geodesic_and_closeness_phi}, lies in the fact that the dot product optimization problem can first be represented as a geodesic distance minimizing problem, which provides geometric interpretations. Next, from Lemma \ref{lem:pure_strategy_convex_manifold}, the solution to any optimal condition for either player is unique due to convexity.}

\tocheck{Lemma \ref{lem:geodesic_and_closeness_phi} posits that the divergence angle between objective vectors correlates with geodesic distance on the manifold, simplifying the best response computation under uncertainty and enabling characterization of rational follower behaviour in constrained spaces.}

\begin{proof}
    \textbf{Geodesic Distance and Closeness to $\xi_\theta$:} Since $\mathcal{M}$ is a smooth, compact manifold bounding a convex region, the geodesic distance between two points on $\mathcal{M}$, say $\xi_1, \xi_2 \in \mathcal{M}$, is defined as the shortest path along the manifold $\geoDis(\xi_1, \xi_2)$ between $\xi_1$ and $\xi_2$. For convex manifolds, the geodesic distance behaves similarly to the distance on the surface of a sphere: an increase in the geodesic distance from $\xi_\theta$ to another point on the manifold corresponds to an increase in the angle between the tangent vector at $\xi_\theta$ and the vectors corresponding to points on the manifold. Hence, if $\geoDis(\xi_\theta, \xi_{\theta_A}) > \geoDis(\xi_\theta, \xi_{\theta_B})$, the angle between $\xi_\theta$ and $\xi_{\theta_A}$ is larger than the angle between $\xi_\theta$ and $\xi_{\theta_B}$.

    \textbf{Dot Product and Angle:} The dot product $\langle \theta, \xi \rangle$ between a normal vector $\theta$ at $\xi_\theta$ and a point $\xi$ on the manifold is given by:
    $\langle \theta, \xi \rangle = \|\theta\| \|\xi\| \cos(\alpha)$, 
    where $\alpha$ is the angle between the vectors $\theta$ and $\xi$. The angle between $\theta$ and any point $\xi$ on the manifold depends only on the angle between $\xi_\theta$ and $\xi$. Since $\geoDis(\xi_\theta, \xi_{\theta_A}) > \geoDis(\xi_\theta, \xi_{\theta_B})$ implies that the angle between $\xi_\theta$ and $\xi_{\theta_A}$ is larger than the angle between $\xi_\theta$ and $\xi_{\theta_B}$, we have,
    \begin{align}
        \cos(\alpha_{\xi_{\theta_A}}) < \cos(\alpha_{\xi_{\theta_B}}),
    \end{align}
    where $\alpha_{\xi_{\theta_A}}$ is the angle between $\theta$ and $\xi_{\theta_A}$, and $\alpha_{\xi_{\theta_B}}$ is the angle between $\theta$ and $\xi_{\theta_B}$. 
    
    \textbf{Conclusion on Dot Products:} Since the dot product is proportional to the cosine of the angle between the vectors, and $\cos(\alpha_{\xi_{\theta_A}}) < \cos(\alpha_{\xi_{\theta_B}})$, it \tocheck{follows that,}
    \begin{align}
        \langle \xi_{\theta}, \xi_{\theta_A} \rangle = \| \xi_{\theta} \| \| \xi_{\theta_A} \| \cos(\alpha_{\xi_{\theta_A}}) < \langle \xi_{\theta}, \xi_{\theta_B} \rangle = \| \xi_{\theta} \| \| \xi_{\theta_B} \| \cos(\alpha_{\xi_{\theta_B}}).
    \end{align}
    
\end{proof}

\begin{lemma} \label{lem:pure_strategy_convex_manifold}
    \textbf{Pure Strategy of the Follower:} While optimizing over an convex manifold, given any objective vector $\theta$, the linear structure of the reward functions from Eq. \eqref{eq:A_reward_inner_prod} and Eq. \eqref{eq:B_reward_inner_prod}, and that the subspace induced by $\aB \in \setA$ forms a geodesically convex subset of
    the optimal strategy of the follower will be that of a pure strategy, such that $\polA(\bB | \aB) \in \{0, 1\}$. 
\end{lemma}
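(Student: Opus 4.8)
The plan is to reduce the follower's randomized optimization to a deterministic nearest-point problem on the isoplane and then force uniqueness of that nearest point using geodesic convexity. First I would write the follower's objective for a fixed leader action $\aB$: averaging over the randomization $\bB\sim\polB$ and the zero-mean noise $\subB$ gives
\[
\expeC_{\polB}\big[\utlB(\aB,\bB)\,\big|\,\aB\big] = \int_{\setB}\innerP{\theta^*_B,\phi(\aB,\bB)}\,d\polB(\bB) \le \max_{\bB\in\setB}\innerP{\theta^*_B,\phi(\aB,\bB)},
\]
with equality exactly when $\polB$ is supported on the set of maximizers. This already shows a pure strategy is optimal; the claim $\polB^*(\bB\mid\aB)\in\{0,1\}$ is therefore equivalent to the statement that the maximizer of $\bB\mapsto\innerP{\theta^*_B,\phi(\aB,\bB)}$ is \emph{unique}, so all remaining effort goes into uniqueness.

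Next I would convert the inner-product maximization into a geodesic-distance minimization via Lemma~\ref{lem:geodesic_and_closeness_phi}. The feasible images $\{\phi(\aB,\bB'):\bB'\in\setB\}$ are precisely the isoplane $\IsoPL{\aB}\subseteq\stEmb$, and by Lemma~\ref{lem:geodesic_and_closeness_phi} the reward $\innerP{\theta^*_B,\xi}$ is a strictly decreasing function of the geodesic distance $\geoDis(\xi_{\theta_B},\xi)$ from the target projection $\xi_{\theta_B}$ (indeed on the unit sphere $\innerP{\theta^*_B,\xi}=\|\theta^*_B\|\cos\geoDis(\xi_{\theta_B},\xi)$, and $\cos$ is strictly decreasing on $[0,\pi]$). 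Hence maximizing the follower's reward over $\IsoPL{\aB}$ is equivalent to selecting the point of $\IsoPL{\aB}$ that is geodesically closest to $\xi_{\theta_B}$, a constrained projection problem.

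Finally I would invoke the hypothesis that $\IsoPL{\aB}$ is geodesically convex (Definition~\ref{def:geodesic_convex_subsets}) to conclude the nearest point is unique, mirroring the Euclidean fact that projection onto a convex set is single-valued. Concretely, I would argue that on a geodesically convex subset of the sphere the squared geodesic distance to the fixed point $\xi_{\theta_B}$ is strictly convex along geodesics, so it admits at most one minimizer: if two distinct $\xi_1,\xi_2\in\IsoPL{\aB}$ tied for the minimum, the connecting geodesic stays in $\IsoPL{\aB}$ by convexity, and strict convexity would yield an interior point strictly closer to $\xi_{\theta_B}$, contradicting minimality. This rules out the degenerate ties that arise for non-convex feasible sets (e.g.\ a full great circle, every point of which is equidistant from a pole). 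With a unique maximizing action $\bB^*$, the bound above is attained only by the Dirac mass $\delta_{\bB^*}$, so $\polB^*(\bB\mid\aB)=\mathbb{1}[\bB=\bB^*]\in\{0,1\}$.

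The hard part will be the uniqueness step: establishing strict geodesic convexity of the distance-to-$\xi_{\theta_B}$ functional on $\IsoPL{\aB}$, and confirming that the geodesic-convexity hypothesis genuinely excludes the antipodal non-uniqueness noted after Definition~\ref{def:convex_manifold}, where the sphere fails to be intrinsically convex. I would also need to check at the reduction step that $\xi_{\theta_B}$ and its projection onto $\IsoPL{\aB}$ lie within the convex region so that the monotone relation of Lemma~\ref{lem:geodesic_and_closeness_phi} applies unambiguously.
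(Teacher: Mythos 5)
Your proposal follows essentially the same route as the paper's proof: reduce the follower's problem to maximizing $\innerP{\theta^*_B, \phi(\aB,\bB)}$ over the isoplane, convert that maximization into a geodesic-distance minimization via Lemma~\ref{lem:geodesic_and_closeness_phi}, and invoke geodesic convexity to get uniqueness of the minimizer, which forces the optimal $\polB(\cdot\mid\aB)$ to be a point mass. If anything you are more careful than the paper, which omits your explicit mixed-to-pure averaging step and simply asserts ``since $\stEmb$ is convex, this minimizer is unique'' without the strict-convexity-of-distance argument or the antipodal caveat that you correctly identify as the remaining work.
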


\tocheck{Lemma \ref{lem:pure_strategy_convex_manifold} claims that on the convex manifolds, each objective vector and convex subspace has a unique reward-maximizing solution, ensuring a single equilibrium point (pure strategy).} Combined together, the intuition behind Lemmas \tocheck{\ref{lem:geodesic_and_closeness_phi} and \ref{lem:pure_strategy_convex_manifold} is that the maximum dot product between $\xi_{\theta_A}$ and $\xi_{\theta_B}$ on the convex manifold occurs when $\xi_{\theta_A}$ and $\xi_{\theta_B}$ are collinear, ensuring the optimal reward.} In the case of a convex subspace, the follower acting optimally has no viable alternatives other than a single choice. 
\begin{proof}
    The goal is to show that the follower's optimal strategy $\polA(\bB|\aB)$ is a pure strategy, such that $\polA(\bB|\aB) \in \{0, 1\}$. Let the objective vector $\theta \in \mathbb{R}^D$ define the direction of optimization, with the reward function given by, 
    \begin{align}
        \mu(\aB, \bB) = \langle \phi(\aB, \bB), \theta \rangle,
    \end{align}
    where $\phi: \setA \times \setB \to \mathbb{R}^D$ is a feature map. 
    
    
    Since $\stEmb$ is geodesically convex, for any point $\aB \in \setA$, there exists a unique geodesic that connects the subspace formed by fixing $\aB$, denoted as $\stEmb_{\aB} \equiv \phi(\aB, \cdot)$ to any other point $g \in \stEmb$. By Lemma \ref{lem:geodesic_and_closeness_phi} in order to maximize the follower's reward $\utlB$, we must find the shortest geodesic distance, $\geoDis(\cdot)$, to $\xi_{\theta_A}$ within $\convexGeoSet$. We express this as,  
    \begin{align}
        \phi(\aB, \bB^*) = \arg \min_{g \in \convexGeoSet} \geoDis(\aB, \bB),
    \end{align}
    Since $\stEmb$ is convex, this minimizer is unique. The reward function $\utlB(\aB, \bB)$ depends on the inner product $\langle \phi(\aB, \bB), \theta_B \rangle$. As this structure is linear with respect to $\phi(\aB, \bB)$, maximizing the reward is equivalent to minimizing the geodesic distance from $\phi(\aB, \bB)$ to the objective vector $\theta$. Since this minimizer is unique by geodesic convexity, the follower's optimal strategy will correspond to this unique solution $\bB^*$ given $\aB$. As there are no alternative solutions for $\phi(\bB^*, \cdot)$ given $\aB$. Because $\phi(\cdot)$ is a bijective mapping, we conclude that any probablistic mapping function must adhere to $\polA(\bB | \aB) \in \{0, 1\}$.

    
\end{proof}

\subsection{Proof of Lemma \ref{lem:intersect_submanifold_AB}} \label{prf:intersect_submanifold_AB}

\textit{\textbf{Intersection of $\IsoPL{\aB}$ and $\IsoPL{\bB}$:} Given a bipartite spherical map from Definition \ref{def:bipartite_sphere_map}, with $\aB$ parameterizing the azimuthal (latitudinal) coordinates, the cardinality of the intersect between $\IsoPL{\aB}$ and $\IsoPL{\bB}$ will be non-empty. That is, $|\IsoPL{\aB} \cap \IsoPL{\bB}| > 0$.}


\begin{proof}
    Given two distinct points $\xi_{\theta_A}$ and $\xi_{\theta_B}$, we define the \emph{isoplane}, $\IsoPL{\aB}$, as the submanifold formed by fixing a subset of spherical coordinates $(\gamma_1^{(A)}, \dots, \gamma_k^{(A)})$, including the azimuthal angle $\nu^{(A)}$, and allowing the remaining coordinates to vary. Similarly, the isoplane at $\xi_{\theta_B}$ is formed by fixing a different subset of spherical coordinates $(\gamma_{k+1}^{(B)}, \dots, \gamma_{D-2}^{(B)})$, while allowing the rest to vary.

 If $\stEmb$ is a compact, orientable, smooth manifold without boundary, and $\vec{X}$ is a smooth vector field on $\stEmb$ with isolated zeros, the \textit{Poincaré-Hopf theorem} states that,
\begin{align}
    \sum_{\mathbf{P} \in \text{Zeroes}(\vec{X})} \text{Index}(\vec{X}, \mathbf{P}) = \chi(\stEmb),
\end{align}
where $\chi(\stEmb)$ is the Euler characteristic of the manifold, and $\text{Index}(\vec{X}, \mathbf{P})$ denotes the index of the vector field at point $\mathbf{P}$. \tocheck{The Euler characteristic of the $D$-dimensional sphere $S^D$ is given by,}
\[
\chi(\mathcal{S}^D) = 
\begin{cases}
2 & \text{if } D \text{ is even}, \\
0 & \text{if } D \text{ is odd}.
\end{cases}
\]
The compactness of $\mathcal{S}^{D-1}$ imposes strong geometric constraints: subspaces or submanifolds (such as isoplanes) embedded within $\mathcal{S}^{D-1}$ must intersect unless they are specifically configured to avoid each other (e.g., in certain degenerate cases of orthogonality). To provide a more fundamental and intuitive analysis, let $\Psi_\theta$ represent the intersection of isoplanar subspaces,

\begin{align}
    \Psi_\theta = \IsoPL{\aB} \cap \IsoPL{\bB}.
\end{align}

First, the compactness of the unit sphere $\mathcal{S}^{D-1}$ implies that any sufficiently dimensional subspaces embedded in the manifold cannot be disjoint. The intersection may be a single point or a higher-dimensional subset, depending on the number of coordinates fixed and the degrees of freedom allowed for the remaining coordinates. In the case where the isoplanes at $\IsoPL{\aB}$ and $\IsoPL{\bB}$ are orthogonal, the fact that the subspaces are embedded in a compact, orientable manifold forces them to intersect. This intersection result is a consequence of the general principles of intersection theory in compact manifolds, which asserts that two subspaces of sufficient dimension within a compact manifold must intersect unless they are orthogonal in all directions. However, since we are working with constrained isoplanes that do not span the entire manifold, even orthogonal subspaces are forced to intersect due to the lack of space for complete disjointness. Therefore,

\begin{align}
    |\Psi_\theta| > 0.
\end{align} 

The following part of the proof relies on ensuring that the criteria are met on the manifold subspace such that the Poincaré-Hopf theorem can applied. 

\paragraph{Construction of Intersecting Subspaces:} \tocheck{ First, suppose we have a spherical manifold which is compact. The azimuthal subspaces, $\IsoPL{\aB}$, can be represented by a vector field with a fixed index, and the latitudinal subspaces, $\IsoPL{\bB}$, could be represented as trajectories from a vector field circumnavigating small circles, around the sphere, also of a fixed index.} Therefore, we can see that any single trajectory of these two vector fields, representing our subspace $\IsoPL{\aB}$ must intersect with at least one other trajectory in $\IsoPL{\bB}$.
    
\paragraph{Extension to Higher Dimensions:} Next, we extend this argument for manifolds of arbitrarily high dimensions, where this is not so trivial to see. First, by the Poincaré-Hopf theorem, the index remains constant when elevating to a higher dimension (as is the case with the D-sphere), preventing any  modification of the spherical geometry, and the corresponding topological properties of vector flows. Next, we argue that since we are working with constrained isoplanes that do not span the entire manifold (i.e. submanifolds or subspaces), even in the case of orthogonal subspaces, the cardinality of the intersecting subspace must be greater than 0 - this is not a direct consequence of the Poincaré-Hopf theorem rather a conclusion which follows.

\end{proof}

\subsection{Orthogonality of Subspaces $\IsoPL{\aB}$ and $\IsoPL{\bB}$}

\begin{lemma} \label{lem:orthogonality_submanifold_AB}
    \textbf{Orthogonality of Subspaces $\IsoPL{\aB}$ and $\IsoPL{\bB}$:} The two submanifolds $\IsoPL{\aB}$ and $\IsoPL{\bB}$, are orthognal to each other within $\stEmb$.
\end{lemma}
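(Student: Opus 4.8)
The plan is to reduce the statement to the classical orthogonality of the hyperspherical coordinate frame: since the induced metric on $\mathcal{S}^{(D-1)}$ is diagonal in these coordinates, and the two isoplanes are tangent to \emph{complementary, disjoint} families of coordinate directions (by the partition $\nu_{\aB}\cap\nu_{\bB}=\emptyset$ of Def.~\ref{def:bipartite_sphere_map}), their tangent spaces must be mutually orthogonal at every common point. Throughout, ``orthogonality of the submanifolds'' is understood as orthogonality of the tangent spaces $T_p\IsoPL{\aB}$ and $T_p\IsoPL{\bB}$ at each intersection point $p$, which exists by Lemma~\ref{lem:intersect_submanifold_AB}.

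First I would fix the standard hyperspherical parametrization of $\mathcal{S}^{(D-1)}$ by angles $(\gamma_1,\dots,\gamma_{D-1})$, with the azimuthal coordinate governed by $\aB$ and the remaining coordinates by $\bB$, consistent with Def.~\ref{def:bipartite_sphere_map}. I would then invoke the well-known fact that the first fundamental form is diagonal in these coordinates, i.e. $\langle \partial_{\gamma_i}\phi,\ \partial_{\gamma_j}\phi\rangle = 0$ whenever $i\neq j$. This follows by direct differentiation of the nested sine--cosine embedding, where the cross inner products telescope to zero; equivalently, the coordinate hypersurfaces $\{\gamma_i=\mathrm{const}\}$ form an orthogonal foliation because each frame vector $\partial_{\gamma_i}\phi$ lives in the span of a distinct block of ambient axes introduced at level $i$.

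Next, at any intersection point $p=\phi(\aB,\bB)\in\IsoPL{\aB}\cap\IsoPL{\bB}$, I would identify the two tangent spaces via the chain rule. Fixing $\aB$ and varying $\bB$ gives $T_p\IsoPL{\aB}\subseteq \mathrm{span}\{\partial_{\gamma_j}\phi : \gamma_j\in\nu_{\bB}\}$, while fixing $\bB$ and varying $\aB$ gives $T_p\IsoPL{\bB}\subseteq \mathrm{span}\{\partial_{\gamma_i}\phi : \gamma_i\in\nu_{\aB}\}$. Because $\nu_{\aB}\cap\nu_{\bB}=\emptyset$, every pairing $(i,j)$ with $i\in\nu_{\aB}$ and $j\in\nu_{\bB}$ has $i\neq j$, so the diagonal-metric property yields $\langle u,v\rangle = 0$ for all $u\in T_p\IsoPL{\aB}$ and $v\in T_p\IsoPL{\bB}$. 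This is precisely the orthogonality of the two submanifolds at $p$, and since $p$ was arbitrary the conclusion holds at every intersection point.

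The main obstacle I anticipate is establishing the diagonal-metric claim cleanly for arbitrary $D$, rather than merely appealing to the $2$-sphere picture (longitude $\perp$ latitude). I would discharge this either by an induction on $D$ exploiting the recursive structure of hyperspherical coordinates, or by the orthogonal-foliation observation above. A secondary caveat is that the learned $\phi$ is only an approximation of the ideal spherical map; I would note that the orthogonality statement is made for the idealized bipartite spherical coordinates of Def.~\ref{def:bipartite_sphere_map}, with the reconstruction discrepancy controlled separately by the Lipschitz loss $\lipschitzLoss$ and perturbation loss $\perturbLoss$ of Eq.~\eqref{eq:manifold_total_loss}.
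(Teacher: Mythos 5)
Your proposal is correct and follows essentially the same route as the paper: the paper's proof also writes out the nested sine--cosine hyperspherical embedding, differentiates with respect to the azimuthal angle $\gamma$ and a latitude angle $\nu_1$, and shows the cross inner product vanishes, which is exactly your diagonal-first-fundamental-form argument specialized to one coordinate pair. If anything, your version is slightly more complete, since you explicitly extend the vanishing inner product to \emph{all} pairings across the disjoint partition $\nu_{\aB}\cap\nu_{\bB}=\emptyset$ and hence to the full tangent spaces, whereas the paper verifies only the representative pair $(\gamma,\nu_1)$ and asserts the rest.
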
 

Lemma \ref{lem:orthogonality_submanifold_AB} is proven by isolating and taking the partial derivatives of the cartesian coordinates with respect to their spherical coordinates to obtain tangent vectors. Afterwards, by computing the dot product between these two tangents and demonstrating that it equates to 0, we establish their orthogonality.


\begin{proof}
We consider the spherical manifold $S^{D-1}$, embedded in $\mathbb{R}^D$, where points are parameterized using $D-1$ angular coordinates. These coordinates are composed of latitude-like angles $\nu_1, \dots, \nu_{D-2}$ and a longitude-like angle $\gamma$. The Cartesian coordinates, $\mathbf{x} = [x_1, x_2, \dots, x_D]^\intercal$, of a point on $S^{D-1}$ are expressed as:

\[
\begin{aligned}
x_1 &= \prod_{i=1}^{D-2} \sin(\nu_i) \cos(\gamma), \\
x_2 &= \prod_{i=1}^{D-2} \sin(\nu_i) \sin(\gamma), \\
x_3 &= \prod_{i=1}^{D-3} \sin(\nu_i) \cos(\nu_{D-2}), \\
&\vdots \\
x_{D-1} &= \sin(\nu_1) \cos(\nu_2), \\
x_D &= \cos(\nu_1).
\end{aligned}
\]

We aim to show that the subspaces generated by fixing $\xi_{\theta_A}$, the set of latitude-like angles, and fixing $\xi_{\theta_B}$, the longitude-like angle, are orthogonal. To this end, we compute the tangent vectors of the manifold in the directions of these angular coordinates.

First, we compute the partial derivative of each coordinate with respect to $\gamma$. The coordinates $x_1$ and $x_2$ explicitly depend on $\gamma$, while the other coordinates $x_3, \dots, x_D$ do not. Therefore, we have,

\[
\frac{\partial x_1}{\partial \gamma} = \frac{\partial}{\partial \gamma} \left( \prod_{i=1}^{D-2} \sin(\nu_i) \cos(\gamma) \right) = -\prod_{i=1}^{D-2} \sin(\nu_i) \sin(\gamma),
\]

\[
\frac{\partial x_2}{\partial \gamma} = \frac{\partial}{\partial \gamma} \left( \prod_{i=1}^{D-2} \sin(\nu_i) \sin(\gamma) \right) = \prod_{i=1}^{D-2} \sin(\nu_i) \cos(\gamma),
\]

\[
\frac{\partial x_j}{\partial \gamma} = 0, \quad \forall j \geq 3.
\]

Thus, the complete partial derivative with respect to $\gamma$ is,

\[
\frac{\partial}{\partial \gamma} \left( x_1, x_2, \dots, x_D \right) = \left( -\prod_{i=1}^{D-2} \sin(\nu_i) \sin(\gamma), \ \prod_{i=1}^{D-2} \sin(\nu_i) \cos(\gamma), \ 0, \dots, 0 \right).
\]


Next, we compute the partial derivative of the coordinates with respect to $\nu_1$. This affects all coordinates $x_1, x_2, \dots, x_D$. Specifically,

\[
\frac{\partial x_1}{\partial \nu_1} = \frac{\partial}{\partial \nu_1} \left( \prod_{i=1}^{D-2} \sin(\nu_i) \cos(\gamma) \right) = \cos(\nu_1) \prod_{i=2}^{D-2} \sin(\nu_i) \cos(\gamma),
\]

\[
\frac{\partial x_2}{\partial \nu_1} = \frac{\partial}{\partial \nu_1} \left( \prod_{i=1}^{D-2} \sin(\nu_i) \sin(\gamma) \right) = \cos(\nu_1) \prod_{i=2}^{D-2} \sin(\nu_i) \sin(\gamma),
\]

\[
\frac{\partial x_3}{\partial \nu_1} = \frac{\partial}{\partial \nu_1} \left( \prod_{i=1}^{D-3} \sin(\nu_i) \cos(\nu_{D-2}) \right) = \cos(\nu_1) \prod_{i=2}^{D-3} \sin(\nu_i) \cos(\nu_{D-2}),
\]

\[
\frac{\partial x_4}{\partial \nu_1} = \cdots = \frac{\partial x_D}{\partial \nu_1} = -\sin(\nu_1).
\]

Thus, the complete partial derivative with respect to $\nu_1$ is:

\[
\frac{\partial}{\partial \nu_1} \left( x_1, x_2, \dots, x_D \right) = \left( \cos(\nu_1) \prod_{i=2}^{D-2} \sin(\nu_i) \cos(\gamma), \ \cos(\nu_1) \prod_{i=2}^{D-2} \sin(\nu_i) \sin(\gamma), \ -\sin(\nu_1), \dots\right).
\]


\paragraph{Dot Product of Tangent Vectors:} To prove orthogonality of the subspaces spanned by these vectors, we compute the dot product of the tangent vectors $\frac{\partial}{\partial \gamma}$ and $\frac{\partial}{\partial \nu_1}$. The dot product is given by,

\[
\frac{\partial}{\partial \gamma} \cdot \frac{\partial}{\partial \nu_1} = \left( -\prod_{i=1}^{D-2} \sin(\nu_i) \sin(\gamma) \right) \cdot \left( \cos(\nu_1) \prod_{i=2}^{D-2} \sin(\nu_i) \cos(\gamma) \right) + \dots,
\]

which simplifies to zero, as the terms corresponding to the components in $x_1$, $x_2$, and $x_3$ do not align. Consequently, we have,

\[
\frac{\partial}{\partial \gamma} \cdot \frac{\partial}{\partial \nu_1} = 0.
\]

Since the dot product of the tangent vectors is zero, the subspaces spanned by fixing $A$ and fixing $B$ are orthogonal at every point on $S^{D-1}$. This orthogonality arises from the fact that the angular coordinates for latitude and longitude parameterize independent directions in the tangent space of the spherical manifold. Thus, we conclude that the subspaces resulting from fixing $A$ and $B$ are mutually orthogonal.

\end{proof}

\subsection{Proof of Lemma \ref{lem:leader_pure_strategy_spherical}} \label{prf:leader_pure_strategy_spherical}

\textbf{Pure Strategy of the Leader:} \textit{Given a spherical manifold, $\stEmb$, and isoplanar subspace, $\IsoPL{\aB}$ and $\IsoPL{\bB}$ for the longitudinal and lattitudinal subspaces respectively, the optimal strategy of the leader is that of a pure strategy, that is, $\polA^*(\aB) \in \{0, 1\}$.}

\begin{proof}
Let $\mathcal{S}^{D-1} \subset \mathbb{R}^D$ be the unit sphere embedded in $D$-dimensional Euclidean space. Consider two distinct points $\xi_{\theta_A}$ and $\xi_{\theta_B}$ on the manifold, each with spherical coordinates $(\gamma_1^{(A)}, \gamma_2^{(A)}, \dots, \gamma_{D-2}^{(A)}, \nu^{(A)})$ and $(\gamma_1^{(B)}, \gamma_2^{(B)}, \dots, \gamma_{D-2}^{(B)}, \nu^{(B)})$, respectively. We aim to demonstrate that the isoplanes formed by fixing half of the spherical coordinates at $\xi_{\theta_A}$ and $\xi_{\theta_B}$ must intersect, and this intersection $\Psi_\theta$ is a singleton. By Lemma \ref{lem:intersect_submanifold_AB} we infer that $\IsoPL{\aB}$ and $\IsoPL{\bB}$ must form a non-empty intersect in $\stEmb$. Follower by Lemma \ref{lem:orthogonality_submanifold_AB}, $\IsoPL{\aB}$ and $\IsoPL{\bB}$ are orthogonal to each other in $\stEmb$.




\paragraph{Singleton Intersection due to Orthogonality:} Consider the isoplanes formed by fixing the angular coordinates $\xi_{\theta_A}$ (latitude-like) and $\xi_{\theta_B}$ (longitude-like) on the unit sphere $S^{D-1}$. These isoplanes correspond to submanifolds of the sphere, which are defined by holding certain angular coordinates constant while allowing others to vary. In the special case where the isoplanes at $\xi_{\theta_A}$ and $\xi_{\theta_B}$ are orthogonal, we argue that the intersection set of these submanifolds is reduced to a single element (singleton). Let $\mathbf{P}$ be the point where the isoplanes associated with fixed $\xi_{\theta_A}$ and $\xi_{\theta_B}$ intersect. The tangent space at $\mathbf{P}$, denoted as $T_{\mathbf{P}} S^{D-1}$, consists of vectors tangent to the sphere at $\mathbf{P}$.


The isoplane formed by fixing $\xi_{\theta_A}$ corresponds to a submanifold $\IsoPL{\aB}$ whose tangent space at $\mathbf{p}$, denoted $T_{\mathbf{p}} \IsoPL{\aB}$, is spanned by the partial derivatives with respect to the longitude-like angular coordinates $\gamma_i$. Similarly, the isoplane formed by fixing $\xi_{\theta_B}$ corresponds to a submanifold $\IsoPL{\bB}$, and the tangent space $T_{\mathbf{p}} \IsoPL{\bB}$ is spanned by the partial derivatives with respect to the latitude-like angular coordinates $\nu_j$. Orthogonality between the isoplanes at $\xi_{\theta_A}$ and $\xi_{\theta_B}$ implies that the tangent spaces $T_{\mathbf{p}} \IsoPL{\aB}$ and $T_{\mathbf{p}} \IsoPL{\bB}$ are mutually orthogonal. This means that the dot product of any vector from $T_{\mathbf{p}} \IsoPL{\aB}$ with any vector from $T_{\mathbf{p}} \IsoPL{\bB}$ is zero:
\[
\mathbf{v}_A \cdot \mathbf{v}_B = 0, \quad \forall \mathbf{v}_A \in T_{\mathbf{p}} \IsoPL{\aB}, \quad \mathbf{v}_B \in T_{\mathbf{p}} \IsoPL{\bB}.
\]
Geometrically, this implies that the submanifolds $\IsoPL{\aB}$ and $\IsoPL{\bB}$ intersect at a right angle at $\mathbf{P}$. Since the submanifolds are orthogonal, no other points of intersection can occur, and the intersection set is reduced to the single point $\mathbf{P}$. Therefore,
\begin{align}
    |\Psi_\theta| = 1. \label{eq:cardinality_intersec_subspaces}
\end{align}

\paragraph{Minimal Geodesic Distance from $\Psi_\theta$:} Let $\Psi_\theta = (x_1^{(\text{int})}, x_2^{(\text{int})}, \dots, x_D^{(\text{int})})$ be the unique intersection point of the two isoplanes. Now, we consider the geodesic distance from this intersection point to any other point on the sphere. The geodesic distance between two points $\mathbf{P}_1 = (x_1^{(1)}, x_2^{(1)}, \dots, x_D^{(1)})$ and $\mathbf{P}_2 = (x_1^{(2)}, x_2^{(2)}, \dots, x_D^{(2)})$ on the unit sphere is given by,
\[
\geoDis(\mathbf{P}_1, \mathbf{P}_2) = \arccos(\innerP{\mathbf{P}_1,  \mathbf{P}_2}).
\]
At the intersection point $\Psi_\theta$, the geodesic distance is minimized, thus, 
\[
\mathbf{P}_1 = \Psi_\theta \implies \geoDis(\mathbf{P}_1, \Psi_\theta) = 0.
\]
Suppose we move away from $\Psi_\theta$ along either the longitude isoplanes (by changing $x_1$) or the latitude isoplanes (by changing $x_2, x_3, \dots, x_D$). Any such deviation implies a change in the dot product $\mathbf{P}_1 \cdot \mathbf{P}_2$, which results in an increase in the geodesic distance. Specifically, if we move along the longitude isoplanes, we are changing $x_1$, while the other coordinates remain constant, resulting in a decrease in the dot product. Similarly, if we move along the latitude isoplanes, we are changing $x_2, x_3, \dots, x_D$, again causing a decrease in the dot product. Since the geodesic distance is a monotonically increasing function of the angular separation, any deviation from $\Psi_\theta$ leads to an increase in the geodesic distance,
\[
\geoDis( \mathbf{P}_2, \mathbf{P}_1) > \geoDis(\mathbf{P}_1, \Psi_\theta) = 0.
\]
Thus, any deviation from the intersection point of the longitude and latitude isoplaness must result in an increase in the geodesic distance, $\geoDis(\cdot)$. By Lemma \ref{lem:geodesic_and_closeness_phi}, this increase in the geodesic distance will decrease the expected reward $\utlA$. As $|\Psi_\theta| = 1$ from Eq. \eqref{eq:cardinality_intersec_subspaces}, then no optimal mixed strategies exist for the leader, and thus, $\polA^*(\aB) \in \{0, 1\}$.

\end{proof}

\subsection{Conversion of Cartesian Uncertainty to Spherical}

\begin{lemma} \label{lem:geodesic_uncertainty_ball}
    Given two points \(\theta_A, \tilde{\theta}_A \in \mathbb{R}^D\), denoting points on the surface of a unit spherical manifold, the uncertainty in Cartesian coordinates expressed as $\|\theta_A- \tilde{\theta}_A\| < \uncerBall{t}$ can be expressed as uncertainty in geodesic distance as $\geoDis(A, \tilde{\theta}_A) < \cos^{-1}\left(1 - \frac{\uncerBall{t}^2}{2}\right)$.
\end{lemma}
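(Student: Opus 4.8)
The plan is to relate the Euclidean chord length between the two unit vectors to the central angle they subtend at the origin, which on the unit sphere coincides exactly with the geodesic distance. First I would expand the squared Euclidean norm using bilinearity of the inner product, namely $\norm{\theta_A - \tilde{\theta}_A}^2 = \norm{\theta_A}^2 + \norm{\tilde{\theta}_A}^2 - 2\innerP{\theta_A, \tilde{\theta}_A}$. Since both points lie on the surface of the unit sphere, $\norm{\theta_A} = \norm{\tilde{\theta}_A} = 1$, so this collapses to $\norm{\theta_A - \tilde{\theta}_A}^2 = 2 - 2\innerP{\theta_A, \tilde{\theta}_A} = 2\bigl(1 - \innerP{\theta_A, \tilde{\theta}_A}\bigr)$.

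Next I would invoke the definition of geodesic distance on the unit sphere from Eq. \eqref{eq:div_angle_cos_defn}: for unit vectors $\geoDis(\theta_A, \tilde{\theta}_A) = \arccos \innerP{\theta_A, \tilde{\theta}_A}$, equivalently $\innerP{\theta_A, \tilde{\theta}_A} = \cos \geoDis(\theta_A, \tilde{\theta}_A)$. Substituting this into the chord identity yields $\norm{\theta_A - \tilde{\theta}_A}^2 = 2\bigl(1 - \cos \geoDis(\theta_A, \tilde{\theta}_A)\bigr)$, and rearranging gives the exact relation $\cos \geoDis(\theta_A, \tilde{\theta}_A) = 1 - \tfrac{1}{2}\norm{\theta_A - \tilde{\theta}_A}^2$, or equivalently $\geoDis(\theta_A, \tilde{\theta}_A) = \arccos\bigl(1 - \tfrac{1}{2}\norm{\theta_A - \tilde{\theta}_A}^2\bigr)$.

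Finally I would convert this equality into the claimed inequality by a monotonicity argument. Define $h(r) = \arccos(1 - r^2/2)$ on $r \in [0, 2]$; since $1 - r^2/2$ is strictly decreasing in $r$ and $\arccos$ is strictly decreasing on its domain, the composition $h$ is strictly \emph{increasing}. Consequently the hypothesis $\norm{\theta_A - \tilde{\theta}_A} < \uncerBall{t}$ implies $\geoDis(\theta_A, \tilde{\theta}_A) = h\bigl(\norm{\theta_A - \tilde{\theta}_A}\bigr) < h(\uncerBall{t}) = \arccos\bigl(1 - \uncerBall{t}^2/2\bigr)$, which is exactly the desired bound.

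The computation is elementary; the only point requiring care—what I regard as the main, albeit minor, obstacle—is verifying that the argument of $\arccos$ stays within $[-1, 1]$ so that the composition and its monotonicity are well-defined. This is guaranteed because for two unit vectors the triangle inequality forces $\norm{\theta_A - \tilde{\theta}_A} \in [0, 2]$, which keeps $1 - \tfrac{1}{2}\norm{\theta_A - \tilde{\theta}_A}^2 \in [-1, 1]$; I would additionally assume $\uncerBall{t} \le 2$ so that the right-hand bound is itself well-defined and the final inequality is meaningful.
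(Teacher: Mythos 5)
Your proposal is correct and follows essentially the same route as the paper's proof: both rest on the chord--angle identity $\|\theta_A-\tilde{\theta}_A\|^2 = 2(1-\cos\nu)$ for unit vectors and then invert the cosine to pass from the Euclidean bound to the angular one. Your derivation via the inner-product expansion is somewhat more streamlined than the paper's explicit spherical-coordinate parametrization, and your explicit check that the argument of $\arccos$ stays in $[-1,1]$ (and that $\uncerBall{t}\le 2$ is needed for the bound to be meaningful) is a small point of care the paper leaves implicit.
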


\begin{proof}
    Given two points \(\theta_A, \tilde{\theta}_A \in \mathbb{R}^D\), with \(\|A\| = \|\tilde{\theta}_A\| = 1\), denoting points on the surface of a unit sphere, the uncertainty in Cartesian coordinates is expressed as:
 $
    \|\theta_A- \tilde{\theta}_A\| < \uncerBall{t}, 
  $
    where \(\uncerBall{t} \in \mathbb{R}^+\) is the uncertainty bound. We aim to translate this uncertainty into spherical coordinates.
    
    \paragraph{Cartesian Coordinates on the Unit Sphere:} In \( \mathbb{R}^D \), the spherical coordinates of a point \(\theta_A\) on the surface of the unit sphere can be represented as:
    \[
    \theta_A^{(1)} = \cos(\nu_1),
    \]
    \[
    \theta_A^{(2)} = \sin(\nu_1)\cos(\nu_2),
    \]
    \[
    \theta_A^{(3)} = \sin(\nu_1)\sin(\nu_2)\cos(\nu_3),
    \]
    \[
    \vdots
    \]
    \[
    \theta_A^{(D-1)} = \sin(\nu_1) \sin(\nu_2) \ldots \sin(\nu_{D-2}) \cos(\gamma),
    \]
    \[
    \theta_A^{(D)} = \sin(\nu_1) \sin(\nu_2) \ldots \sin(\nu_{D-2}) \sin(\gamma),
    \]
    where \(\nu_1, \nu_2, \ldots, \nu_{D-2}\) represent the latitude angles, and \(\gamma\) represents the longitude angle. Similarly, the point \(\tilde{\theta}_A\) can be written in terms of spherical angles \(\nu'_1, \nu'_2, \ldots, \gamma'\).
    
    \paragraph{Uncertainty in Cartesian Coordinates:}
    
    The uncertainty in Cartesian space is given by:
    \[
    \|\theta_A- \tilde{\theta}_A\|^2 = (\theta_A^{(1)} - \tilde{\theta}_A^{(1)})^2 + (\theta_A^{(2)} - \tilde{\theta}_A^{(2)})^2 + \ldots + (\theta_A^{(D)} - \tilde{\theta}_A^{(D)})^2 < \uncerBall{t}^2.
    \]
    However, it is more efficient to relate this uncertainty directly to spherical angular distance.
    
    \paragraph{Spherical Angular Distance:}
    
    The squared Euclidean distance between two points \(\theta_A\) and \(\tilde{\theta}_A\) on the surface of the unit sphere is related to their angular distance \(\nu\) by the spherical law of cosines:
  $
    \|\theta_A- \tilde{\theta}_A\|^2 = 2(1 - \cos(\nu)),
   $
    where \(\nu\) is the angular distance between the two points, and \(\cos(\nu)\) is given by:
    \[
    \cos(\nu) = \cos(\nu_1)\cos(\nu'_1) + \sin(\nu_1)\sin(\nu'_1) \Big( \cos(\nu_2)\cos(\nu'_2) + \sin(\nu_2)\sin(\nu'_2) \cdots \Big).
    \]
    This expression provides the exact angular distance between points \(\theta_A\) and \(\tilde{\theta}_A\) on the unit sphere.
    
    \paragraph{Uncertainty in Spherical Coordinates:}
    
    The inequality \(\|\theta_A- \tilde{\theta}_A\| < \uncerBall{t}\) implies that the angular distance \(\nu\) between the two points satisfies:
    \[
    2(1 - \cos(\nu)) < \uncerBall{t}^2\implies
    \cos(\nu) > 1 - \frac{\uncerBall{t}^2}{2}.
    \]
    Since \(\cos(\nu)\) ranges from 1 (when \(\theta_A = \tilde{\theta}_A\)) to -1 (for antipodal points), the angular distance \(\nu\) is bounded by:
    \[
    \nu < \cos^{-1}\left(1 - \frac{\uncerBall{t}^2}{2}\right).
    \]
    This inequality describes the exact spherical uncertainty region. Thus, the uncertainty \(\|\theta_A- \tilde{\theta}_A\| < \uncerBall{t}\) in Cartesian space corresponds to an angular uncertainty \(\nu < \cos^{-1}\left(1 - \frac{\uncerBall{t}^2}{2}\right)\) on the unit sphere.
\end{proof}
 
\subsection{Distance Preserving Orthogonal Projection:}

\begin{lemma} \label{lem:distance_preserving_ortho_proj}
    Consider a unit sphere \( S^{D-1} \subset \mathbb{R}^D \). Given a point \(\theta_A \in S^{D-1}\) and a geodesic ball \( B_J \subset S^{D-1} \) centered at \(\theta_A\), we are interested in the behaviour of this ball under orthogonal projection onto a subspace of \(\mathbb{R}^D\). Specifically, we aim to rigorously show that the diameter of the orthogonally projected ball does not exceed the diameter of the original geodesic ball. 
\end{lemma}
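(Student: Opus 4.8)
The plan is to reduce the statement to the elementary fact that orthogonal projection onto a linear subspace is non-expansive in the Euclidean norm, and then lift this contraction property to the level of diameters. First I would fix notation: let $P: \mathbb{R}^D \to \mathbb{R}^D$ denote orthogonal projection onto the target subspace $V \subseteq \mathbb{R}^D$, and measure the diameter of any set $S \subseteq \mathbb{R}^D$ chordally, $\mathrm{diam}(S) := \sup_{x,y \in S}\norm{x-y}$. Since $B_J$ is a geodesic ball of the unit sphere $S^{D-1}$, all its points lie in $\mathbb{R}^D$, so both $B_J$ and its image $P(B_J)$ carry a well-defined Euclidean diameter, and the projected set $P(B_J)$ is simply the image, with no need to verify that it is itself a geometric ball.

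The key step is the non-expansiveness of $P$. For any $v \in \mathbb{R}^D$, the orthogonal decomposition $v = Pv + (v - Pv)$ with $Pv \perp (v - Pv)$ gives the Pythagorean identity $\norm{v}^2 = \norm{Pv}^2 + \norm{v - Pv}^2 \ge \norm{Pv}^2$, hence $\norm{Pv} \le \norm{v}$. By linearity of $P$, for any $x, y \in B_J$,
\[
\norm{Px - Py} = \norm{P(x-y)} \le \norm{x-y}.
\]
Taking the supremum over all pairs $x, y \in B_J$ yields $\mathrm{diam}(P(B_J)) \le \mathrm{diam}(B_J)$, which is precisely the claim when both diameters are measured in the chordal metric.

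To reconcile this with the \emph{geodesic} radius of $B_J$, in case the diameter of the source ball is read intrinsically, I would invoke the standard comparison between chordal and geodesic distance on the unit sphere: two points at angular separation $\theta$ satisfy $\norm{x - y} = 2\sin(\theta/2) \le \theta = \geoDis(x,y)$. Consequently the chordal diameter of $B_J$ never exceeds its geodesic diameter, and the chain
\[
\mathrm{diam}_{\mathrm{Eucl}}(P(B_J)) \le \mathrm{diam}_{\mathrm{Eucl}}(B_J) \le \mathrm{diam}_{\mathrm{geo}}(B_J)
\]
establishes the conclusion under either interpretation of the original ball's diameter.

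I do not anticipate a genuine obstacle: the entire content is the $1$-Lipschitz property of the orthogonal projection. The only point that requires care is being explicit about which metric defines ``diameter'' for the curved source set versus its flat projection, so that the inequality is asserted between directly comparable quantities; the sine comparison above removes that ambiguity and makes the bound unconditional.
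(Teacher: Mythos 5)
Your proposal is correct and rests on the same key fact as the paper's proof: orthogonal projection onto a subspace is $1$-Lipschitz (via the Pythagorean decomposition), so pairwise distances, and hence the diameter, cannot increase. You are in fact slightly more careful than the paper on one point: the projected set $P(B_J)$ does not lie on $S^{D-1}$, so the paper's claim that the \emph{geodesic} distance between projected points is bounded is not well-posed as stated, whereas your choice to measure the image chordally and then compare chordal to geodesic distance on the source via $\|x-y\| = 2\sin(\theta/2) \le \theta$ makes the inequality between directly comparable quantities.
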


\begin{proof}
    
    \textbf{Geodesic Uncertainty Balls:} Let \( \theta_A, \tilde{\theta}_A \in \mathbb{R}^D \) be two points on the unit sphere, i.e., \( \| \theta_A\| = \| \tilde{\theta_A}\| = 1 \), and let the geodesic distance between \(\theta_A\) and \(\tilde{\theta}_A\) be denoted by \(\gamma(\theta_A, \tilde{\theta}_A)\). The geodesic distance between any two points on \(S^{D-1}\) is given by,
    \[
    \gamma(\theta_A, \tilde{\theta}_A) = \arccos(\theta_A \cdot \tilde{\theta}_A),
    \]
    where \( \theta_A \cdot \tilde{\theta}_A \) is the Euclidean dot product between \(\theta_A\) and \(\tilde{\theta}_A\). A geodesic ball \( B_J(\theta_A) \) centered at \(\theta_A\) with radius \(J\) is defined as the set of points on the unit sphere such that their geodesic distance from \(\theta_A\) is less than or equal to \(J\):
    \[
    B_J(\theta_A) = \{ \xi_{\theta_A} \in S^{D-1} \mid \gamma(\theta_A, \tilde{\theta}_A) \leq J \}.
    \]
    We are particularly interested in the case where \( J \leq \arccos\left(1 - \frac{\uncerBall{t}^2}{2}\right) \), where \(\uncerBall{t}\) is a positive value corresponding to the uncertainty radius in the Euclidean distance.
    
    \textbf{Orthogonal Projection and Geodesic Distance:} Given a subspace \( V \subset \mathbb{R}^D \), let \( P_V: \mathbb{R}^D \to V \) denote the orthogonal projection onto \(V\). For any points \( \theta_A, \tilde{\theta}_A \in \mathbb{R}^D \), the Euclidean distance between their projections is bounded by:
    \[
    \| P_V(\theta_A) - P_V(\tilde{\theta}_A) \| \leq \| \theta_A - \tilde{\theta}_A \|.
    \]
    Since the geodesic distance on the unit sphere is a measure of arc length between points, it follows that the geodesic distance between two points is non-increasing under orthogonal projection. We aim to show that the diameter of the projected geodesic ball onto the subspace \(V\) does not exceed the diameter of the original ball.
    
    \textbf{Diameter of a Geodesic Ball:} The diameter of a set \( S \subset S^{D-1} \) is defined as the greatest geodesic distance between any two points in \(S\):
    \[
    \text{diam}(S) = \sup_{x, y \in S} \gamma(x, y).
    \]
    For a geodesic ball \( B_J(\theta_A) \), the maximum geodesic distance occurs between two antipodal points on the boundary of the ball. Therefore, the diameter of the geodesic ball is:
    \[
    \text{diam}(B_J(\theta_A)) = 2J.
    \]
    In particular, for \( J = \arccos\left(1 - \frac{\uncerBall{t}^2}{2}\right) \), we have:
    \[
    \text{diam}(B_J(\theta_A)) = 2 \arccos\left(1 - \frac{\uncerBall{t}^2}{2}\right).
    \]
\end{proof}

\subsection{Diameter Preserving Orthogonal Projection}

We now formalize the behaviour of the geodesic ball under orthogonal projection.

\begin{lemma} \label{lem:diameter_preserving_ortho_proj}
Let \( B_J(\theta_A) \) be a geodesic ball of radius \(J \leq \arccos\left(1 - \frac{\uncerBall{t}^2}{2}\right) \) on the unit sphere \( S^{D-1} \subset \mathbb{R}^D \). Let \( V \subset \mathbb{R}^D \) be a subspace, and let \( P_V: \mathbb{R}^D \to V \) be the orthogonal projection onto \(V\). Then, the diameter of the orthogonally projected ball \( P_V(B_J(\theta_A)) \) satisfies:
\[
\text{diam}(P_V(B_J(\theta_A))) \leq \text{diam}(B_J(\theta_A)) = 2J.
\]
\end{lemma}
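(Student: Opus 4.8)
The plan is to reduce the statement to two elementary facts: that orthogonal projection never increases Euclidean distances, and that on the unit sphere the Euclidean chord between two points is never longer than the geodesic arc joining them. Throughout, the diameter of the projected set $P_V(B_J(\theta_A)) \subset V$ is measured in the ambient Euclidean metric, whereas $\mathrm{diam}(B_J(\theta_A)) = 2J$ refers to the intrinsic geodesic diameter established in Lemma \ref{lem:distance_preserving_ortho_proj}; the crux of the argument is precisely to bridge these two metrics.

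First I would fix two arbitrary points $x, y \in B_J(\theta_A)$ and bound their geodesic separation. Since $\gamma(\cdot,\cdot)$ is a genuine metric on $S^{D-1}$, the triangle inequality gives $\gamma(x,y) \leq \gamma(x,\theta_A) + \gamma(\theta_A, y) \leq 2J$, which recovers the diameter $2J$ of the ball. Next I would pass from the geodesic arc to the Euclidean chord: writing $\gamma(x,y) = \nu$, the chord length on the unit sphere satisfies $\|x - y\| = 2\sin(\nu/2) \leq \nu = \gamma(x,y)$, using $\sin t \leq t$ for $t \geq 0$. Combining the two gives $\|x - y\| \leq 2J$ for every pair of points in the ball.

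Then I would invoke the non-expansiveness of orthogonal projection, established in Lemma \ref{lem:distance_preserving_ortho_proj}, namely $\|P_V(x) - P_V(y)\| \leq \|x - y\|$. Chaining the inequalities yields $\|P_V(x) - P_V(y)\| \leq \|x - y\| \leq \gamma(x,y) \leq 2J$, and taking the supremum over $x, y \in B_J(\theta_A)$ delivers $\mathrm{diam}(P_V(B_J(\theta_A))) \leq 2J = \mathrm{diam}(B_J(\theta_A))$, as required.

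The main obstacle is conceptual rather than computational: the two diameters are taken with respect to different metrics (intrinsic geodesic distance before projection, ambient Euclidean distance after), so the comparison becomes meaningful only once the chord--arc inequality is in place to dominate the Euclidean separation by the geodesic one. I would also remark that the radius bound $J \leq \arccos\!\big(1 - \uncerBall{t}^2/2\big)$ confines the ball to a geodesically convex region, so that the geodesic triangle inequality and the identification $\mathrm{diam}(B_J(\theta_A)) = 2J$ from Lemma \ref{lem:distance_preserving_ortho_proj} remain valid without any concern for non-minimizing geodesics or antipodal degeneracies.
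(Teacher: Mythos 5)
Your proof is correct, and it follows the same basic strategy as the paper's: fix two arbitrary points of the ball, bound their separation by $2J$, and invoke the non-expansiveness of orthogonal projection. The genuine difference is in the metric bookkeeping after projection. The paper's proof asserts that ``the geodesic distance between the projected points'' is bounded by $\gamma(\theta_A,\tilde{\theta}_A)$, which is slightly loose: $P_V(x)$ and $P_V(y)$ generally do not lie on $S^{D-1}$, so their geodesic distance is not defined without a further normalization step that the paper does not carry out. You sidestep this entirely by measuring the projected diameter in the ambient Euclidean metric and inserting the explicit chord--arc inequality $\|x-y\| = 2\sin(\gamma(x,y)/2) \leq \gamma(x,y)$, which makes the cross-metric comparison airtight and yields the chain $\|P_V(x)-P_V(y)\| \leq \|x-y\| \leq \gamma(x,y) \leq 2J$. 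Your version is therefore the more rigorous of the two on this point; what it gives up is only that the conclusion is stated for the Euclidean diameter of the projected set rather than for a geodesic one, but since the lemma is used downstream only to control the size of the projected confidence region, that is the natural reading of the statement anyway. Your closing remark about the radius bound keeping the ball away from antipodal degeneracies is also a useful observation that the paper leaves implicit.
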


\begin{proof}
    Consider two points \( \theta_A, \tilde{\theta}_A \in B_J(\theta_A) \). By the definition of a geodesic ball, we know that:
    \[
    \gamma(\theta_A, \tilde{\theta}_A) \leq 2J.
    \]
    Next, project \( \theta_A \) and \( \tilde{\theta}_A \) orthogonally onto the subspace \(V\), yielding the points \( P_V(\theta_A) \) and \( P_V(\tilde{\theta}_A) \). Since orthogonal projection reduces or preserves Euclidean distances, we have:
    \[
    \| P_V(\theta_A) - P_V(\tilde{\theta}_A) \| \leq \| \theta_A - \tilde{\theta}_A \|.
    \]
    Moreover, since the geodesic distance between points on the sphere is a function of their Euclidean distance, it follows that the geodesic distance between the projected points \( P_V(\theta_A) \) and \( P_V(\tilde{\theta}_A) \) is also bounded by:
    \[
    \gamma(P_V(\theta_A), P_V(\tilde{\theta}_A)) \leq \gamma(\theta_A, \tilde{\theta}_A).
    \]
    Thus, for all pairs \( \theta_A, \tilde{\theta}_A \in B_J(\theta_A) \), we have:
    \[
    \gamma(P_V(\theta_A), P_V(\tilde{\theta}_A)) \leq 2J.
    \]
    This shows that the diameter of the projected geodesic ball \( P_V(B_J(\theta_A)) \) is at most \( 2J \), i.e.,
    \[
    \text{diam}(P_V(B_J(\theta_A))) \leq \text{diam}(B_J(\theta_A)) = 2J.
    \]
\end{proof}

\subsection{Proof of Theorem \ref{thm:sphere_man_iso_reg}} \label{prf:sphere_man_iso_reg}

\textit{\textbf{Isoplane Stackelberg Regret:} For D-dimensional spherical manifolds embedded in $\mathbb{R}^D$ space, where $\phi(\aB, \cdot)$ generates an isoplanes $\IsoPL{\aB}$, and the linear relationship to the reward function in Eq. \eqref{eq:A_reward_inner_prod} and Eq. \eqref{eq:B_reward_inner_prod} holds, the simple regret, defined in Eq. \eqref{eq:simple_regret_defn}, of any learning algorithm with uncertainty parameter uncertainty  $\uncerBall{t}$, refer to in Eq. \eqref{eq:param_uncertainty}, is bounded by $\mathcal{O}(2\arccos(1-\uncerBall{t}^2/2))$.}

\begin{proof}
    The proof of Theorem \ref{thm:sphere_man_iso_reg} hinges on the aforementioned arguments in Lemma \ref{lem:geodesic_uncertainty_ball}, Lemma \ref{lem:distance_preserving_ortho_proj}, and Lemma \ref{lem:diameter_preserving_ortho_proj} sequentially, but in the context of parameter estimation.

    First, Lemma \ref{lem:geodesic_uncertainty_ball} argues that one can transform a confidence bound $|\theta_A - \hat{\theta}_A| \leq \uncerBall{t}$ into a confidence bound on geodesic distance $\geoDis(\theta_A, \hat{\theta}_A) \leq \cos^{-1}\left(1 - \frac{\uncerBall{t}^2}{2}\right)$. Let us denote this as the geodesic confidence ball $\Ball_{\geoDis}(\theta^*, \uncerBall{t})$. Nevertheless, due to the separation of subspaces $\IsoPL{\aB}$ and $\IsoPL{\bB}$, we must find the projection of $\Ball_{\geoDis}(\theta^*, \uncerBall{t})$ onto $\IsoPL{\bB}$ such that we can obtain a diameter measure on the new intersecting subspace $\IsoPL{\aB} \cap \IsoPL{\bB}$. Next, Lemma \ref{lem:distance_preserving_ortho_proj} argues that geodesic distances will either be preserved or reduced when making a projection to an orthogonal subspace $\IsoPL{\bB}$, the orthogonality of this subspace was previously established in Lemma \ref{lem:orthogonality_submanifold_AB}. Thereafter, Lemma \ref{lem:diameter_preserving_ortho_proj} specifies that the maximum diameter of this new confidence ball $\Ball'_{\geoDis}(\theta^*, \uncerBall{t})$ that is projected onto $\IsoPL{\bB}$ is confined to a maximum diameter of $2\cos^{-1}\left(1 - \frac{\uncerBall{t}^2}{2}\right)$.

    Thus, this constitutes the best and worst possible outcomes due to misspecification in accordance with the formulation in Eq. \eqref{eq:min_H} and Eq. \eqref{eq:max_H}, denoted as $\bar{\mathcal{H}}(\theta_A^*,t) - \underline{\mathcal{H}}(\theta_A^*,t)$, also expressed in Eq. \eqref{eq:simple_regret_defn}, which upper bounds the simple regret.
    
\end{proof}

\section{Neural Flow Architectural Specifications} \label{sec:nn_arch_details}

We present the mathematical foundations of the normalizing flow architecture used to model spherical mappings. Our method combines a spherical coordinate transformation with normalizing flows to provide an invertible mapping between input features and a latent space, with applications to tasks requiring smooth transformations on a manifold. 

\textbf{Mapping to a Spherical Manifold:} The transformation from Cartesian coordinates to spherical coordinates is used to map input features onto an $D$-dimensional spherical manifold. In order to freely select $D$, we can optionally apply a series of bijective stereographic projections from the ambient space in order to increase the dimension of the embedding space. We define two heads in the neural network input, the head from A specifically controls the azimuthal spherical coordinate and additional coordinates, and  the head from B specifically controls other coordinates. The output sizes of the neural network that transforms the inputs are $\floor{\frac{D-1}{2}}+1$ for A and $\floor{\frac{D-1}{2}}$ for B. The conversion from Cartesian coordinates to spherical coordinates, and vice-versa, $\mathbf{x} \in \mathbb{R}^D$, is defined in Appendix \ref{sec:sphere_cart_conv_algo}. 



\textbf{Affine Coupling Layers:} A normalizing flow consists of a series of invertible transformations, including affine coupling layers, which divide the input into two parts and transform one part conditioned on the other. Let the input be $\mathbf{x} = [\mathbf{x}_1, \mathbf{x}_2]$, where $\mathbf{x}_1$ and $\mathbf{x}_2$ are disjoint subsets of the input. The affine coupling transformation is defined as,
\begin{align}
    \mathbf{y}_1 = \mathbf{x}_1, \quad  \mathbf{y}_2 = \mathbf{x}_2 \odot \exp(s(\mathbf{x}_1)) + t(\mathbf{x}_1),
\end{align}
where $\odot$ denotes element-wise multiplication, and $s(\mathbf{x}_1)$ and $t(\mathbf{x}_1)$ are the scaling and translation functions, respectively, parameterized by a neural network. The inverse of this transformation is straightforward:
\begin{align}
    \mathbf{x}_1 = \mathbf{y}_1, \quad
    \mathbf{x}_2 = \left( \mathbf{y}_2 - t(\mathbf{y}_1) \right) \odot \exp(-s(\mathbf{y}_1)).
\end{align}
This transformation is invertible by design, making it suitable for use in flow-based models.

\textbf{Log Determinant of the Jacobian:} The log-likelihood calculation requires computing the log determinant of the Jacobian matrix for the transformation. For the affine coupling layer, the Jacobian matrix is triangular, and the log determinant is simply the sum of the scaling terms:
\begin{equation}
    \log \big| \det \frac{\partial \mathbf{y}}{\partial \mathbf{x}} \big| = \sum_{i} s(\mathbf{x}_1).
\end{equation}
This term contributes to the overall log probability during training.

\textbf{Normalizing Flow Forward Transform:} A normalizing flow is constructed by stacking several affine coupling layers and random permutation layers. Let $\mathbf{x} \in \mathbb{R}^d$ be the input, and $\mathbf{z} \in \mathbb{R}^d$ be the transformed latent variable after $L$ layers of flow. Each layer applies a transformation $f_l$ such that: $\mathbf{z}^{(l+1)} = f_l(\mathbf{z}^{(l)}),$ where $f_l$ represents either an affine coupling transformation or a random permutation. After $L$ layers, the final output is denoted as $\mathbf{z} = \mathbf{z}^{(L)}$. The forward transformation can thus be written as:
\begin{equation}
    \mathbf{z}, \log \det J = f_{\text{flow}}(\mathbf{x}),
\end{equation}
where $\log \det J$ is the log determinant of the Jacobian matrix for the entire flow. 

To compute the log-likelihood of the input $\mathbf{x}$, we map it to the latent space $\mathbf{z}$ under the flow transformation. The probability of $\mathbf{x}$ is computed as:
\begin{equation}
    p(\mathbf{x}) = p(\mathbf{z}) \big| \det \frac{\partial \mathbf{z}}{\partial \mathbf{x}} \big|,
\end{equation}
where $p(\mathbf{z})$ is the probability of $\mathbf{z}$ under the base distribution (typically a standard normal distribution):
$
    p(\mathbf{z}) = \mathcal{N}(\mathbf{z}; 0, I).
$
The log probability is then given by:
\begin{equation}
    \log p(\mathbf{x}) = \log p(\mathbf{z}) + \log \big| \det \frac{\partial \mathbf{z}}{\partial \mathbf{x}} \big|.
\end{equation}
\textbf{Inverse Transform:} The invertibility of the flow allows for both density estimation and sampling. To sample from the model, we draw samples $\mathbf{z} \sim \mathcal{N}(0, I)$ from the base distribution and apply the inverse transformation:
$
    \mathbf{x} = f_{\text{flow}}^{-1}(\mathbf{z}).
$ 
Each affine coupling layer and random permutation is applied in reverse order to recover the original inputs.

\textbf{Random Permutation Layer:} The random permutation layer permutes the features of the input vector to ensure that different parts of the input are transformed at each layer. Let $\mathbf{x} \in \mathbb{R}^d$ be the input, and let $P$ be a permutation matrix. The permutation transformation is defined as:
$    \mathbf{x}' = P \mathbf{x}.$

Since permutation matrices are orthogonal, the Jacobian determinant of this transformation is always $1$, and it does not contribute to the log determinant calculation.

\begin{table}[h!]
\centering
\begin{tabular}{@{}>{\raggedright\arraybackslash}p{4.3cm} 
                >{\raggedright\arraybackslash}p{6cm} 
                >{\raggedright\arraybackslash}p{3cm}@{}}
\toprule
\textbf{Layer}         & \textbf{Description}                                     & \textbf{Output Size}          \\ 
\midrule
\texttt{Input Head A}         & Input head A.                                           & $N_B \times |\setA|$         \\ 
\midrule
\texttt{Input Head B}         & Input head B.                                           & $N_B \times |\setB|$         \\ 
\midrule
\texttt{Input Features}         & Input features.                                           & $N_B \times D$         \\ 
\midrule
\texttt{Affine Coupling Layer} & No. of Affine Coupling layers.                         & $N_B \times 64$         \\ 
\midrule
\texttt{fc\_A1 Hidden Dim.}    & Number of hidden dimensions in first fully connected layer A.       & $B \times 1024$ \\ 
\midrule
\texttt{fc\_B1 Hidden Dim.}    & Number of hidden dimensions in first fully connected layer B.       & $B \times 1024$ \\ 
\midrule
\texttt{Hidden Dim.}    & No. of hidden layers for A and B.       & $N_B \times 16$ \\ 
\midrule
\texttt{fc\_A1 Final Layer Dim.}    & Number of hidden dimensions in final layer A.      & $N_B \times \Big( \floor{\frac{D-1}{2}} + 1 \Big)$\\ 
\midrule
\texttt{fc\_B1 Final Layer Dim.}    & Number of hidden dimensions in final layer B.      & $N_B \times \Big( \floor{\frac{D-1}{2}} \Big)$\\ 
\midrule
\texttt{Output}        & Output features after flow transformation.                & $N_B \times D$         \\ 
\bottomrule
\end{tabular}
\caption{Normalizing Flows Neural Architecture Specifications.}
\end{table}

\textbf{Overview:} In summary, the normalizing flow architecture combines spherical mapping, affine coupling transformations, and random permutations to form a powerful framework for invertible transformations. The model leverages the flexibility of normalizing flows to map inputs to a spherical manifold, enabling efficient density estimation and sampling from a base Gaussian distribution.

\begin{table}[H]
    \centering
    \begin{tabular}{||c c||} 
     \hline
     Parameter & Value \\ [0.5ex] 
     \hline\hline
     $N_B$ Batch Size & 2048 \\
     $\alpha_N $ (Negative Log Liklihood Loss Coef.)  & 0.5 \\
     $\alpha_R $ (Repulsion Loss Coef.)  & 1.0 \\
     $\alpha_P $ (Perturb. Loss Coef.)  & 0.5 \\
     $\alpha_L $ (Lipschitz Loss Coef.)  & 1.5 \\
     No. Epochs & 20,000 \\
     $\alpha_{LR}$ (Learning Rate) & 0.05 \\
     $C_L$ (Lipschitz Constant) & 0.5 \\
     \hline
    \end{tabular}
    \caption{Hyper parameters used for normalizing neural flow network training. }
    \label{table:hyperparam}
\end{table}

\clearpage

\section{Visualizations}

\subsection{Computational Results of Isoplane Behaviour} \label{sec:isoplanes_computational_viz}

\begin{figure}[!htb]
\minipage{0.47\textwidth}
  \includegraphics[width=60mm]{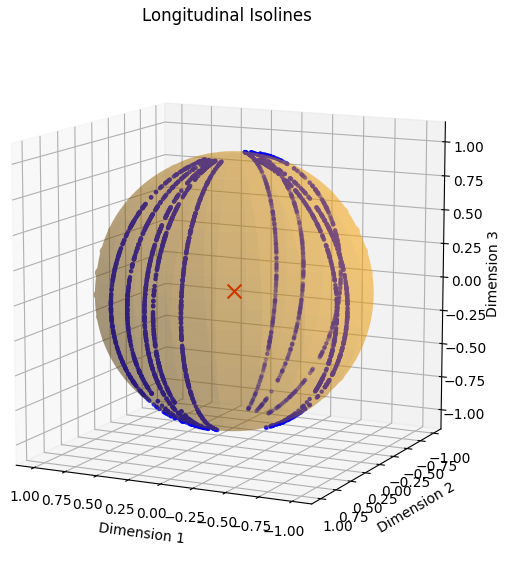}
  \caption*{\textbf{Longitudinal Isolines: } Visualization of longitudinal isolines generated by the normalizing neural flow network.}
\endminipage\hfill
\minipage{0.47\textwidth}
  \includegraphics[width=60mm]{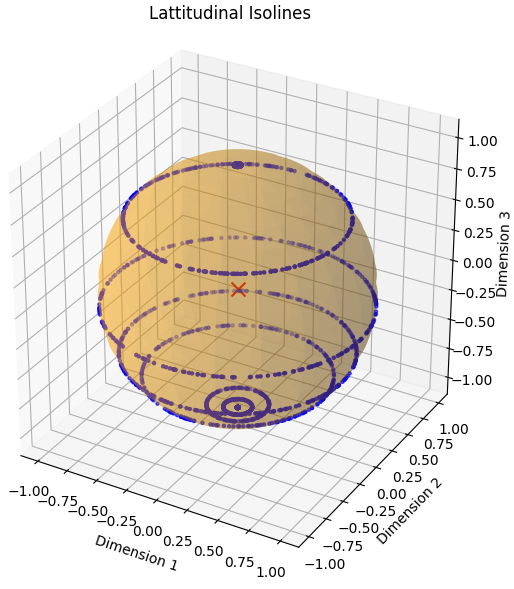}
  \caption*{\textbf{Latitudinal Isoplanes:} Visualization of lattitudinal isolines generated by the normalizing neural flow network.}
\endminipage\hfill
\caption{Formation of isolines (or isoplanes in higher dimensions) forming on the spherical manifold $\stEmb$ as we fix $\aB$ and vary $\bB$ (longitudinal), and fix $\bB$ and vary $\aB$ (lattitudinal). } \label{fig:isoplanes_line_sphere_regret}
\end{figure}

\subsection{Visualization of Ambient Space to Stackelberg Embedding mapping} \label{sec:ambient_to_stemb_viz}

\begin{figure}[!htb]
\minipage{0.47\textwidth}
  \includegraphics[width=70mm]{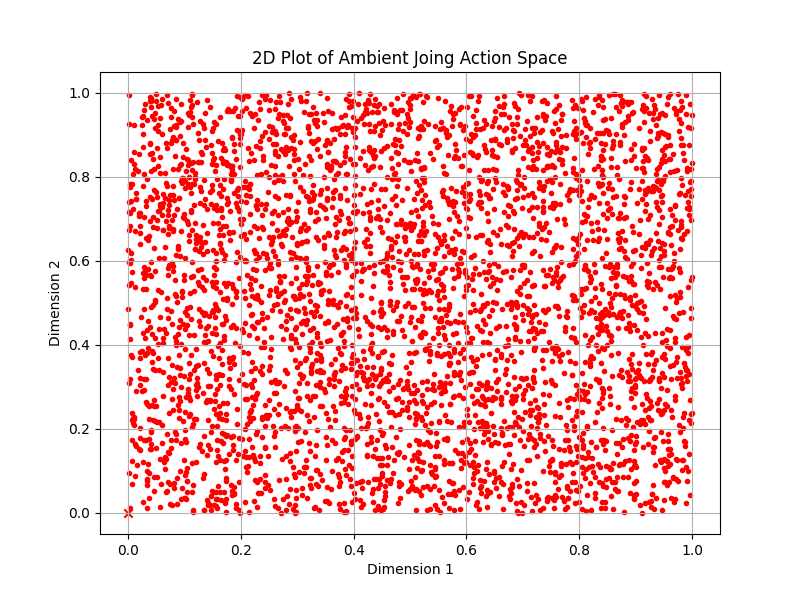}
  \caption*{\textbf{Ambient Space: } Visualization uniformly distributed data in ambient space (native joint action space).}
\endminipage\hfill
\minipage{0.47\textwidth}
  \includegraphics[width=60mm]{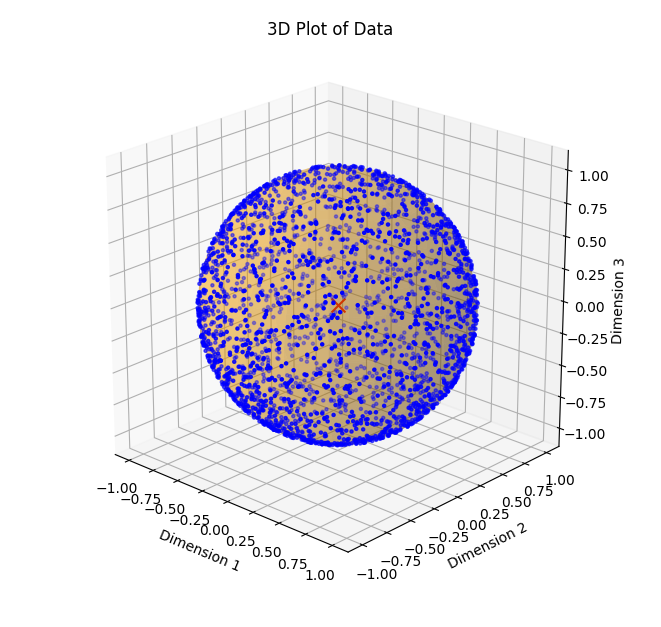}
  \caption*{\textbf{Stackelberg Embedding $\stEmb$:} Visualization of data mapped from the ambient joint action space to a learned Stackelberg embedding $\stEmb$.} 
\endminipage\hfill
\caption{For purely visualization purposes, we present the mapping from the ambient data 3D space to a 2-Sphere manifold $\stEmb$. In addition to being sufficiently biijective, the Lipschitz and spreading properties from Table \ref{tab:stack-emb-dynamics} are maximized via normalizing neural flows.} \label{fig:stackelberg_embedding_mapping_2fig_viz}
\end{figure}

\subsection{Poor Examples of Stackelberg Embeddings} \label{sec:bad_stackelberg_embeddings_viz}

\begin{figure}[H]
\minipage{0.47\textwidth}
  \includegraphics[width=60mm]{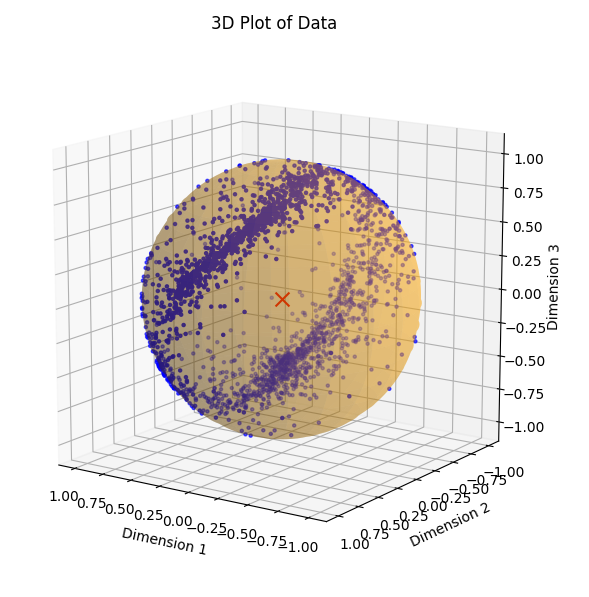}
  \caption*{\textbf{Arbitrary Projection: } The ambient data passes through the normalizing flow network which contains untrained model weights resulting in arbitrary conversion of Cartesian to spherical coordinates.}
\endminipage\hfill
\minipage{0.47\textwidth}
  \includegraphics[width=60mm]{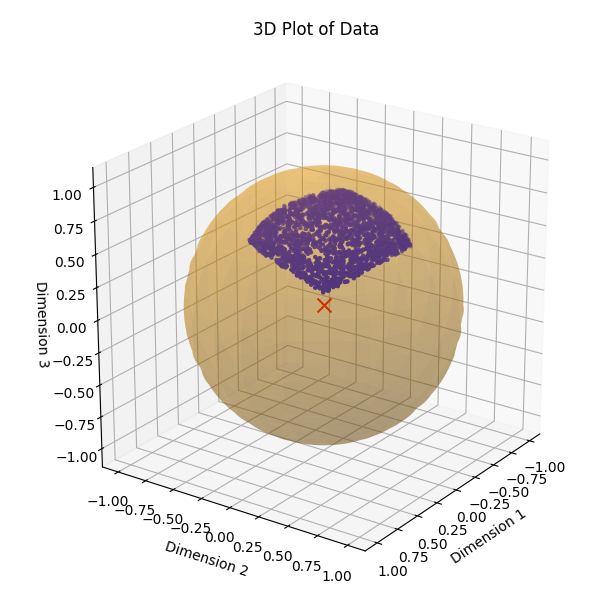}
  \caption*{\textbf{Affine Map:} The ambient data is subject to an arbitrary affine map followed by a projection to $\stEmb$.} 
\endminipage\hfill
\caption{We present two examples of when improper and/or simplistic Euclidean transformations are not sufficient to fulfill the requirements outlined in Table \ref{tab:stack-emb-dynamics}. The transformed data lies in arbitrarily clustered regions, and/or are insufficiently spread across the 2-sphere despite the ambient data being uniformly distributed. Moreover, sufficient bijection may not always be possible.} \label{fig:bad_stackelberg_embedding_2fig_viz}
\end{figure}

\clearpage

\section{Algorithms}

\subsection{Mapping between Spherical and Cartesian Coordinates} \label{sec:sphere_cart_conv_algo}


\begin{algorithm}[h!]
\caption{Spherical to Cartesian Conversion in $n$-Dimensions}\label{alg:spherical_to_cartesian}
\begin{algorithmic}[1]
    \STATE \textbf{Input:} $r$ (radius), $\nu$ (Spherical coordinates $D - 1$ dimensions.)
    \STATE \textbf{Output:} Cartesian coordinates $p = [x_1, x_2, \dots, x_D]$
    \STATE $x_1 \gets r \cdot \cos(\nu_1)$
    \FOR{$i = 2$ to $D-1$}
        \STATE $x_i \gets r \cdot \sin(\nu_1) \cdot \sin(\nu_2) \dots \cdot \sin(\nu_{i-1}) \cdot \cos(\nu_i)$
    \ENDFOR
    \STATE $x_n \gets r \cdot \sin(\nu_1) \cdot \dots \cdot \sin(\nu_{D-1})$
    \RETURN $[x_1, x_2, \dots, x_D]$
    \end{algorithmic}
\end{algorithm}


\begin{algorithm}[h!]
\caption{Cartesian to Spherical Conversion in $n$-Dimensions}\label{alg:cartesian_to_spherical}
\begin{algorithmic}[1]
    \STATE \textbf{Input:} Cartesian coordinates $p = [x_1, x_2, \dots, x_D]$
    \STATE \textbf{Output:} $r$ (radius), $\nu = [\nu_1, \nu_2, \dots, \nu_{D-1}]$ (Spherical coordinates $D - 1$ dimensions.)
    \STATE $r \gets \sqrt{x_1^2 + x_2^2 + \dots + x_D^2}$ \COMMENT{Compute the radius}
    \STATE $\nu_1 \gets \arccos\left( \frac{x_1}{r} \right)$ \COMMENT{First spherical angle}
    \FOR{$i = 2$ to $D-1$}
        \STATE $\nu_i \gets \arctan2\left( \sqrt{x_1^2 + x_2^2 + \dots + x_{i}^2}, x_{i+1} \right)$ \COMMENT{Spherical angles for $i = 2$ to $D-1$}
    \ENDFOR
    \STATE \textbf{Return} $r$, $\nu = [\nu_1, \nu_2, \dots, \nu_{D-1}]$
\end{algorithmic}
\end{algorithm}

\clearpage

\section{Experimental Results}

\subsection{$\mathbb{R}^1$ Stackelberg Game} \label{sec:r1_game_details}

\textbf{Problem Setup:} We consider a Stackelberg game with a leader \(\theta_A\) and a follower \(B\), both operating in continuous action spaces \(a, b \in \mathbb{R}^1\). The leader chooses an action $a$, and the follower responds by choosing an action $b$ based on the leader's decision. The reward functions for both players are linear in structure but include nonlinear components to model real-world constraints and interactions.

\textbf{Leader’s Reward Function:} The leader’s reward function \( \mu_A(a, b) \) is defined as follows:
\begin{align}
    \mu_A(a, b) = \theta_1 a + \theta_2 \log(1 + b^2) - \frac{\theta_3}{2} a^2 + \epsilon, \quad \epsilon \in \mathcal{N}(0, \sigma) 
\end{align}
where $\theta_1, \theta_2 > 0$ are weight parameters that control the trade-off between the leader’s direct action \(\theta_A\) and the follower’s response \(b\), \( \log(1 + b^2) \) introduces nonlinearity with respect to the follower’s action \(b\), and \( -\frac{\theta_3}{2} a^2 \) is a quadratic penalty on large leader actions to avoid extreme behaviour by the leader.

\textbf{Follower’s Reward Function:} The follower’s reward function \( \mu_B(\aB, \bB) \) is given by:
\begin{align}
    \mu_B(a, b) = \alpha_1 (-b^2) + \alpha_2 a b + \epsilon, \quad \epsilon \in \mathcal{N}(0, \sigma) 
\end{align}

where, $\alpha_1, \alpha_2 > 0$ are parameters that determine the influence of the follower’s own action \(b\) and the leader’s action \(\theta_A\) on the follower’s reward, $-b^2$ represents a concave cost function for the follower, preferring smaller values of $b$, and $ab$ introduces an interaction term between the leader’s action and the follower’s action.

\textbf{Follower's Best Response:} The follower maximizes their reward function \( \mu_B(\aB, \bB) \) by choosing \(b\) given \(a\). To determine the follower's best response \( \bR{a} \), we compute the first-order condition with respect to \(b\):
\begin{align}
\frac{\partial \expeC[\mu_B(\aB, \bB)]}{\partial b} = -2 \alpha_1 b + \alpha_2 a = 0
\end{align}
Solving for \(b\), the follower's best response is:
\begin{align}
\bR{a} = \frac{\alpha_2 a}{2 \alpha_1}
\end{align}

\textbf{Leader’s Optimization Problem:} Given that the follower’s best response is \( \bR{a} = \frac{\alpha_2 a}{2 \alpha_1} \), the leader maximizes their reward function \( \mu_A(a, \bR{a}) \) as,
\begin{align}
\expeC[\mu_A(a, \bR{a})] = \theta_1 a + \theta_2 \log\left(1 + \left(\frac{\alpha_2 a}{2 \alpha_1}\right)^2\right) - \frac{\theta_3}{2} a^2.
\end{align}
This results in the following optimization problem for the leader,
\begin{align}
\max_a \, \left( \theta_1 a + \theta_2 \log\left(1 + \frac{\alpha_2^2 a^2}{4 \alpha_1^2}\right) - \frac{\theta_3}{2} a^2 \right).
\end{align}

\textbf{Non-Trivial Solution for the Leader:} To solve for the leader’s optimal action \(a^*\), we take the derivative of the leader's reward function with respect to \(a\) and set it equal to zero,
\begin{align}
\frac{d}{da} \left( \theta_1 a + \theta_2 \log\left(1 + \frac{\alpha_2^2 a^2}{4 \alpha_1^2}\right) - \frac{\theta_3}{2} a^2 \right) &= 0 \\
\theta_1 - \theta_3 a + \theta_2 \cdot \frac{2 \cdot \left(\frac{\alpha_2 a}{2 \alpha_1}\right) \cdot \left(\frac{\alpha_2}{2 \alpha_1}\right)}{1 + \frac{\alpha_2^2 a^2}{4 \alpha_1^2}} &= 0
\end{align}
Which simplifies to,
\begin{align}
\theta_1 - \theta_3 a + \frac{\theta_2 \cdot \frac{\alpha_2^2 a}{\alpha_1^2}}{1 + \frac{\alpha_2^2 a^2}{4 \alpha_1^2}} = 0.
\end{align}
This equation has no simple closed-form analytical solution and must be solved numerically. The interplay between the nonlinear logarithmic term and the quadratic penalty introduces complexity into the leader's optimization, making the optimal value of \(a^*\) non-trivial.

\raggedbottom

\subsubsection{$\mathbb{R}^1$ Stackelberg Game} \label{sec:r2_game_results}

\begin{figure}[t]
\minipage{0.43\textwidth}
  \includegraphics[width=\linewidth]{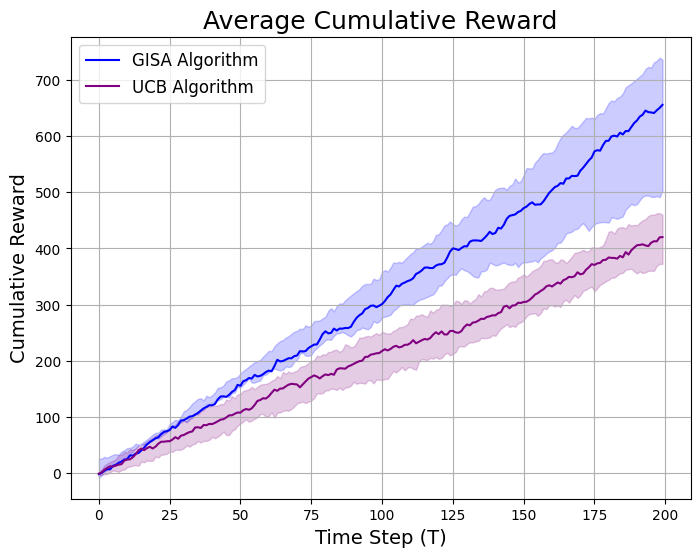}
\endminipage\hfill
\minipage{0.43\textwidth}
  \includegraphics[width=\linewidth]{figures/s1_reg_ex1_v3.png}
\endminipage\hfill
\caption*{Parameters: $\theta_1 = 4.0, \theta_2 = 1.0, \theta_3 = 0.9, \alpha_1 = 1.0, \alpha_2 = 2.0, \sigma=6.0$.}
\minipage{0.43\textwidth}
  \includegraphics[width=\linewidth]{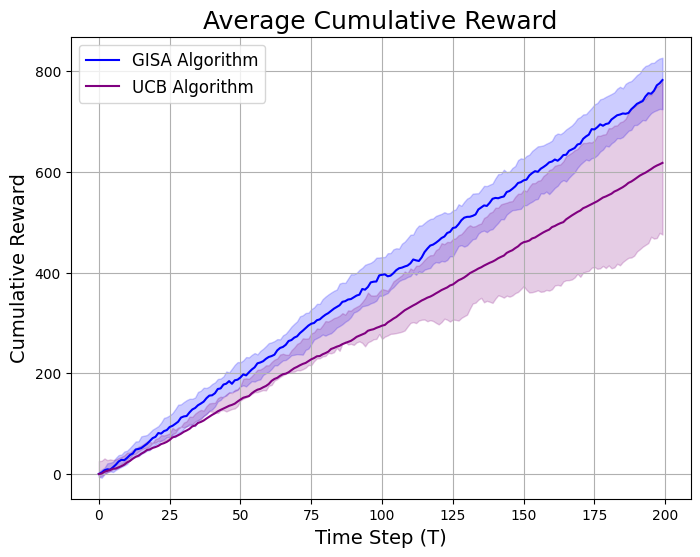}
\endminipage\hfill
\minipage{0.43\textwidth}
  \includegraphics[width=\linewidth]{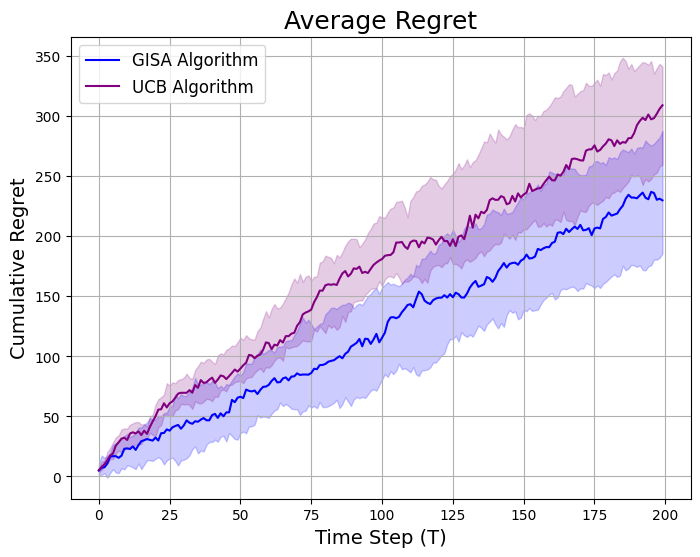}
\endminipage\hfill
\caption*{Parameters: $\theta_1 = 4.0, \theta_2 = 2.0, \theta_3 = 0.9, \alpha_1 = 4.0, \alpha_2 = 2.0, \sigma=6.0$.}
\minipage{0.43\textwidth}
  \includegraphics[width=\linewidth]{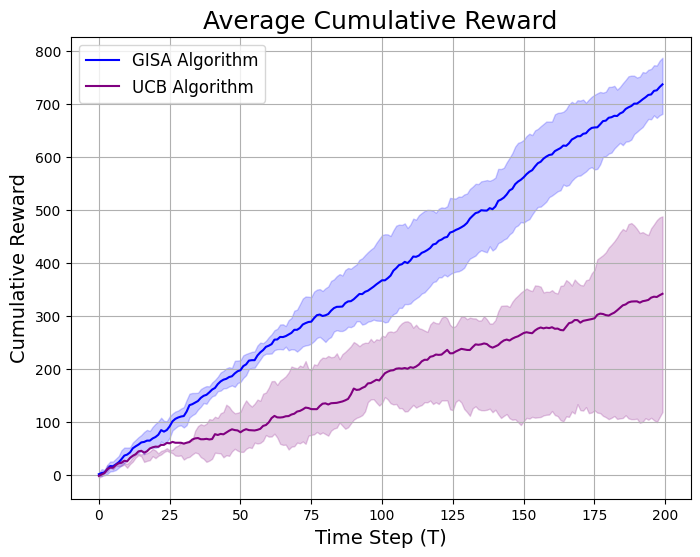}
\endminipage\hfill
\minipage{0.43\textwidth}
  \includegraphics[width=\linewidth]{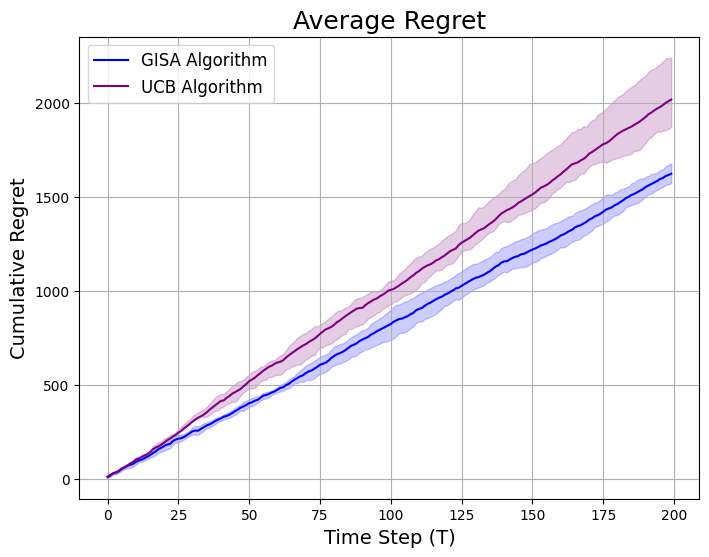}
\endminipage\hfill
\caption*{Parameters: $\theta_1 = 3.0, \theta_2 = 3.0, \theta_3 = 0.9, \alpha_1 = 1.0, \alpha_2 = 2.0, \sigma=4.0$.}
\caption{Mean values are calculated over 1,000 trials, with shaded regions representing confidence intervals, all of which fall within the first quartile. $\stEmb$ dimension is set to $D = 10$.} \label{fig:r1_game_summary_plots}
\end{figure}

\clearpage

\subsection{The Newsvendor Pricing Game Specifcations (NPG)} \label{sec:npg-appendix}

We model the two learning agents in a \textit{Newsvendor pricing game}, involving a supplier $A$ and a retailer $B$. The leader, a supplier, is learning to dynamically price the product for the follower, a retailer, aiming to maximize his reward. To achieve this, the follower adheres to classical Newsvendor theory, which involves finding the optimal order quantity given a known demand distribution before the realization of the demand.

\textbf{Rules of the Newsvendor Pricing Game:} We explicitly denote $a \equiv \mathbf{a} \in \mathbb{R}^1$, and $\mathbf{b} \equiv [b, p]^\intercal \in \mathbb{R}^2$. Where $a$ denotes wholesale price from the supplier firm, $p$ and $b$ denote the retail price and order amount of the retail firm.

\vspace{-0.2cm}

\begin{enumerate}[itemsep=2pt, parsep=2pt]
    \setlength\itemsep{-0.1em}
    \item The supplier selects wholesale price $a$, and provides it to the retailer. \label{item:supplier-contract-price}
    \item Given wholesale cost $a$, the retailer reacts with his best response $[b, p]^\intercal$, consisting of retail price $p$, and order amount $b$.
    \item As the retailer determines the optimal order amount $b$, he pays $\mathcal{G}_A(a, b) = ab$ to the supplier.
    \item At time $t$, nature draws demand $d^t \sim d_\rho(p)$, and it is revealed to the retailer.
    \item The retailer makes a profit of $\mathcal{G}_B(a, b) = p \ \min \{ d^t, b \} - ab$. \label{item:retail-profit}
    \item Steps \ref{item:supplier-contract-price} to \ref{item:retail-profit} are repeated for $t \in 1 ... T$ iterations.
\end{enumerate}

\begin{figure}[ht!]
    \centering
    \tikzset{every picture/.style={line width=0.75pt}} 
    
        \begin{tikzpicture}[x=0.75pt,y=0.75pt,yscale=-0.6,xscale=0.6]
        
        \draw   (60,214) .. controls (60,200.19) and (71.19,189) .. (85,189) -- (95,189) .. controls (108.81,189) and (120,200.19) .. (120,214) -- (120,241) .. controls (120,241) and (120,241) .. (120,241) -- (60,241) .. controls (60,241) and (60,241) .. (60,241) -- cycle ;
        \draw   (65,163) .. controls (65,149.19) and (76.19,138) .. (90,138) .. controls (103.81,138) and (115,149.19) .. (115,163) .. controls (115,176.81) and (103.81,188) .. (90,188) .. controls (76.19,188) and (65,176.81) .. (65,163) -- cycle ;
        \draw    (141.2,190.6) -- (228.2,190.6) ;
        \draw [shift={(230.2,190.6)}, rotate = 180] [color={rgb, 255:red, 0; green, 0; blue, 0 }  ][line width=0.75]    (10.93,-3.29) .. controls (6.95,-1.4) and (3.31,-0.3) .. (0,0) .. controls (3.31,0.3) and (6.95,1.4) .. (10.93,3.29)   ;
        \draw   (261,215) .. controls (261,201.19) and (272.19,190) .. (286,190) -- (296,190) .. controls (309.81,190) and (321,201.19) .. (321,215) -- (321,242) .. controls (321,242) and (321,242) .. (321,242) -- (261,242) .. controls (261,242) and (261,242) .. (261,242) -- cycle ;
        \draw   (266,164) .. controls (266,150.19) and (277.19,139) .. (291,139) .. controls (304.81,139) and (316,150.19) .. (316,164) .. controls (316,177.81) and (304.81,189) .. (291,189) .. controls (277.19,189) and (266,177.81) .. (266,164) -- cycle ;
        \draw    (231.2,210.6) -- (212.4,210.6) -- (143.2,210.6) ;
        \draw [shift={(141.2,210.6)}, rotate = 360] [color={rgb, 255:red, 0; green, 0; blue, 0 }  ][line width=0.75]    (10.93,-3.29) .. controls (6.95,-1.4) and (3.31,-0.3) .. (0,0) .. controls (3.31,0.3) and (6.95,1.4) .. (10.93,3.29)   ;
        \draw    (342.2,190.6) -- (429.2,190.6) ;
        \draw [shift={(431.2,190.6)}, rotate = 180] [color={rgb, 255:red, 0; green, 0; blue, 0 }  ][line width=0.75]    (10.93,-3.29) .. controls (6.95,-1.4) and (3.31,-0.3) .. (0,0) .. controls (3.31,0.3) and (6.95,1.4) .. (10.93,3.29)   ;
        \draw   (457,179.79) .. controls (457,172.38) and (463.01,166.37) .. (470.42,166.37) -- (475.78,166.37) .. controls (483.19,166.37) and (489.2,172.38) .. (489.2,179.79) -- (489.2,194.28) .. controls (489.2,194.28) and (489.2,194.28) .. (489.2,194.28) -- (457,194.28) .. controls (457,194.28) and (457,194.28) .. (457,194.28) -- cycle ;
        \draw   (459.68,152.42) .. controls (459.68,145.01) and (465.69,139) .. (473.1,139) .. controls (480.51,139) and (486.52,145.01) .. (486.52,152.42) .. controls (486.52,159.83) and (480.51,165.83) .. (473.1,165.83) .. controls (465.69,165.83) and (459.68,159.83) .. (459.68,152.42) -- cycle ;
        \draw   (487,235.79) .. controls (487,228.38) and (493.01,222.37) .. (500.42,222.37) -- (505.78,222.37) .. controls (513.19,222.37) and (519.2,228.38) .. (519.2,235.79) -- (519.2,250.28) .. controls (519.2,250.28) and (519.2,250.28) .. (519.2,250.28) -- (487,250.28) .. controls (487,250.28) and (487,250.28) .. (487,250.28) -- cycle ;
        \draw   (489.68,208.42) .. controls (489.68,201.01) and (495.69,195) .. (503.1,195) .. controls (510.51,195) and (516.52,201.01) .. (516.52,208.42) .. controls (516.52,215.83) and (510.51,221.83) .. (503.1,221.83) .. controls (495.69,221.83) and (489.68,215.83) .. (489.68,208.42) -- cycle ;
        \draw   (520,179.79) .. controls (520,172.38) and (526.01,166.37) .. (533.42,166.37) -- (538.78,166.37) .. controls (546.19,166.37) and (552.2,172.38) .. (552.2,179.79) -- (552.2,194.28) .. controls (552.2,194.28) and (552.2,194.28) .. (552.2,194.28) -- (520,194.28) .. controls (520,194.28) and (520,194.28) .. (520,194.28) -- cycle ;
        \draw   (522.68,152.42) .. controls (522.68,145.01) and (528.69,139) .. (536.1,139) .. controls (543.51,139) and (549.52,145.01) .. (549.52,152.42) .. controls (549.52,159.83) and (543.51,165.83) .. (536.1,165.83) .. controls (528.69,165.83) and (522.68,159.83) .. (522.68,152.42) -- cycle ;
        \draw    (430.2,209.6) -- (342.2,209.6) ;
        \draw [shift={(340.2,209.6)}, rotate = 360] [color={rgb, 255:red, 0; green, 0; blue, 0 }  ][line width=0.75]    (10.93,-3.29) .. controls (6.95,-1.4) and (3.31,-0.3) .. (0,0) .. controls (3.31,0.3) and (6.95,1.4) .. (10.93,3.29)   ;
        
        \draw (176,165) node [anchor=north west][inner sep=0.75pt]   [align=left] {$a$};
        \draw (171,214) node [anchor=north west][inner sep=0.75pt]   [align=left] {$b_a$};
        \draw (376,164) node [anchor=north west][inner sep=0.75pt]   [align=left] {$p_a$};
        \draw (371,215) node [anchor=north west][inner sep=0.75pt]   [align=left] {$d(p_a)$};
        \draw (183,252) node [anchor=north west][inner sep=0.75pt]   [align=left] {$\mathcal{G}_B(p_a, b_a) = p_a \min \{ d(p_a), b_a \}$};
        \draw (20,251) node [anchor=north west][inner sep=0.75pt]   [align=left] {$\mathcal{G}_A(a) = a b_a$};
        \draw (15,110) node [anchor=north west][inner sep=0.75pt]   [align=left] {Leader (Supplier)};
        \draw (209,108) node [anchor=north west][inner sep=0.75pt]   [align=left] {Follower (Retailer)};
        \draw (464,108) node [anchor=north west][inner sep=0.75pt]   [align=left] {Market};
    \end{tikzpicture}
    \caption{\textbf{The Newsvendor Pricing Game.} From \citep{liu:2024_stacknews_adt}, in this Stackelberg game, there a logistics network between a supplier (leader) and retailer (follower), where utility functions are not necessarily supermodular, the supplier issues a wholesale price $a$, and the retailer issues a purchase quantity $b$, and a retail price $p$ in response.} \label{fig:supplier-retailer-game}
\end{figure}

\vspace{-0.4cm}

\textbf{Demand Function:} Stochastic demand is represented in Eq. \ref{eq:additive-exp-demand}, which is governed by a linear additive demand function $\Gamma_\rho(p)$ representing the expected demand, $\mathbb{E}[d(p)]$, as a function of $p$ in Eq. \ref{eq:additive-exp-demand}. The demand function is governed by parameters $\rho$.

\begin{align}
    \Gamma_\rho(p) &= \max \{0, \rho_0 - \rho_1 p \}, \quad \rho_0 \geq 0, \ \rho_1 \geq 0 \label{eq:demand-theta-func} \\
    d_\rho(p) &= \Gamma_\rho(p) + \epsilon, \quad \epsilon \in \mathcal{N}(0, \sigma) \label{eq:additive-exp-demand} 
\end{align}

This problem combines the problem of the \textit{price-setting Newsvendor} \citep{petruzzi:1999newsv} \citep{arrow:1951newsboy}, with that of a bilateral Stackelberg game under imperfect information. Even in the scenario of perfect information, the \textit{price-setting Newsvendor} has no closed-form solution, therefore no exact solution to the Stackelberg equilibrium. We apply the algorithm from \citep{liu:2024_stacknews_adt} to learn a Stackelberg equilibrium under a \textit{risk-free pricing} strategy assumption, and apply Algorithm \ref{alg:se-newsv} from \citep{liu:2024_stacknews_adt}, with scaling factor set as $\kappa = 1.0$, as a baseline against Algorithm \ref{alg:gisa} (GISA).

\begin{algorithm}[h!]
\caption{Learning Algorithm for Newsvendor Pricing Game from \citep{liu:2024_stacknews_adt}}\label{alg:se-newsv}
\begin{algorithmic}[1]
    \FOR{$t \in 1 \dots T$}
        \STATE Leader and follower estimates a confidence interval $\uncerBall{t}$ from available data.
        \STATE $\mathcal{H}(\rho) = \ \hat{\rho}_0/\hat{\rho}_1$.
        \STATE Leader plays action $a$, where $a = \underset{a \in \mathcal{A}, \rho \in \mathcal{C}^t} {\mathrm{argmax}} \ a F^{-1}_{\bar{\rho}_a} \Big( 1 - \frac{2a}{ \mathcal{H}(\rho) + a} \Big)$ from Eq. (3.8) in \citep{liu:2024_stacknews_adt}.
        \STATE Follower sets price $p = (\mathcal{H}(\rho) + a)/2$.
        \STATE Follower estimates their optimistic parameters $\bar{\rho}_a$, and best response $\bar{b}_a$ from Eq. (3.4) and (3.5a) respectively in \citep{liu:2024_stacknews_adt}.
        \STATE Leader obtains reward, $\mathcal{G}_A = ab$.
        \STATE Follower obtains reward, $\mathcal{G}_B = p \min \{ b, d(p) \}$.
    \ENDFOR
\end{algorithmic}
\end{algorithm}

\raggedbottom


\subsubsection{NPG Results} \label{sec:npg_results}

\begin{figure}[t]
\minipage{0.43\textwidth}
  \includegraphics[width=\linewidth]{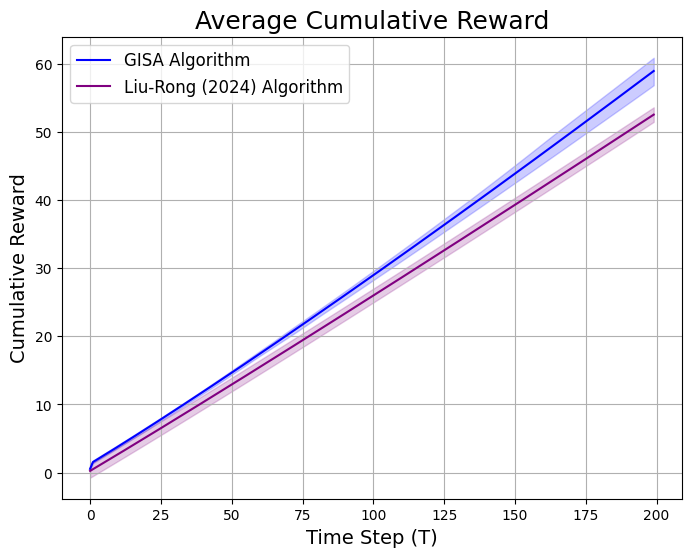}
\endminipage\hfill
\minipage{0.43\textwidth}
  \includegraphics[width=\linewidth]{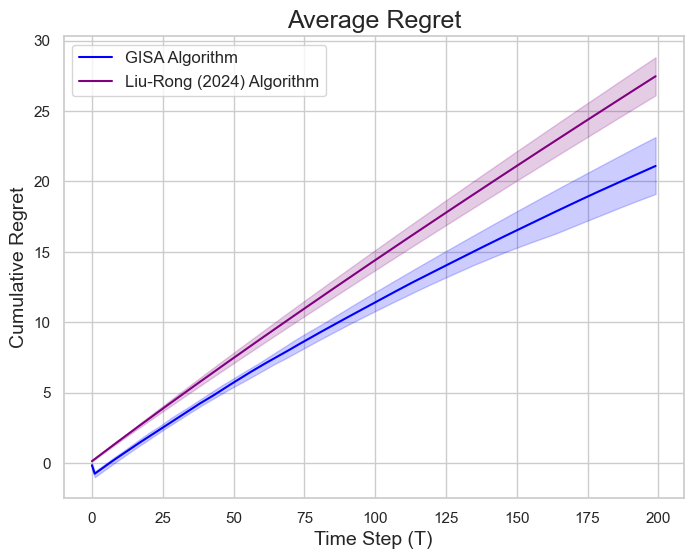}
\endminipage\hfill
\caption*{Parameters: $\rho_0 = 1, \rho_1 = -0.1, \sigma=0.1$.}
\minipage{0.43\textwidth}
  \includegraphics[width=\linewidth]{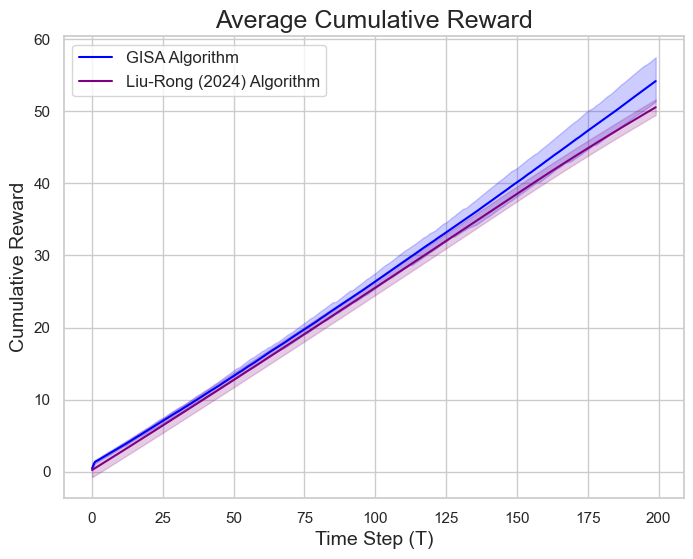}
\endminipage\hfill
\minipage{0.43\textwidth}
  \includegraphics[width=\linewidth]{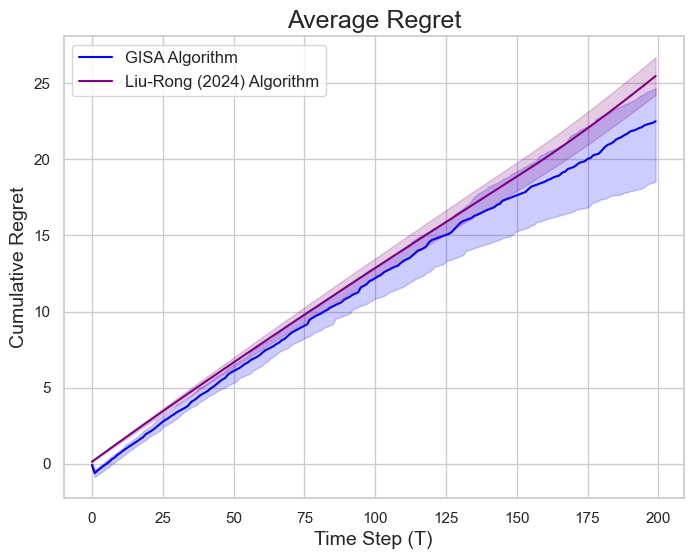}
\endminipage\hfill
\caption*{Parameters: $\rho_0 = 1, \rho_1 = -0.5, \sigma=0.1$.}
\minipage{0.43\textwidth}
  \includegraphics[width=\linewidth]{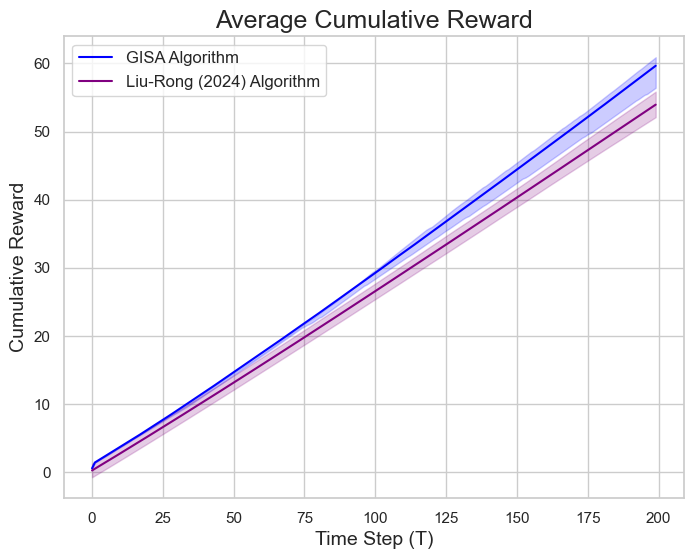}
\endminipage\hfill
\minipage{0.43\textwidth}
  \includegraphics[width=\linewidth]{figures/npg_reg_ex3_v2.png}
\endminipage\hfill
\caption*{Parameters: $\rho_0 = 1, \rho_1 = -0.3, \sigma=0.5$.}
\caption{Mean values are calculated over 1,000 trials, with shaded regions representing confidence intervals, all of which fall within the first quartile. $\stEmb$ dimension is set to $D = 6$.} \label{fig:loss_plot}
\end{figure}

\clearpage

\subsection{Multi-Dimensional Stackelberg Game (SSG)} \label{sec:multi-dim-ssg}

We consider a two-player Stackelberg game where the leader $ A $ and the follower $ B $ choose their actions from a shared action space $ \mathbb{R}^n $. The leader chooses an action $ \aB \in \mathbb{R}^n $, anticipating the follower’s response $ \bB \in \mathbb{R}^n $, where $n=5$. Both players' rewards are influenced by a combination of the difference in their actions and quadratic penalties on their individual actions. The problem is constrained by weighted $ L_1 $-norm bounds on both $ \aB $ and $ \bB $, which limit the magnitude of their respective actions.

The leader’s reward function $ \mu_A $ is defined as:
\begin{align}
\mu_A(\aB, \bB) = \theta_A^\top (\aB - \bB) - \theta_A^\top f(\aB) + \epsilon, \quad \epsilon \in \mathcal{N}(0, \sigma) 
\end{align}
where:
\begin{itemize}
    \item $ \aB \in \mathbb{R}^n $ is the leader’s action,
    \item $ \bB \in \mathbb{R}^n $ is the follower’s action,
    \item $ \theta_A \in \mathbb{R}^n $ is a weight vector for the leader,
    \item $ f(\aB)$ is the quadratic penalty function applied elementwise, such that $ f(\aB) = [\aB_1^2, \aB_2^2, \dots, \aB_n^2] $.
\end{itemize}
The leader seeks to maximize $ \mu_A(a, b) $ by selecting $ \aB $, knowing that the follower will respond optimally.

The follower’s reward function $ \mu_B $ is defined as:
\begin{align}
\mu_B(\aB, \bB) = \theta_B^\top (\aB - \bB) - \theta_B^\top g(\bB)
\end{align}
where:
\begin{itemize}
    \item $ \aB \in \mathbb{R}^n $ is the leader’s action,
    \item $ \bB \in \mathbb{R}^n $ is the follower’s action,
    \item $ \theta_B \in \mathbb{R}^n $ is a weight vector for the follower,
    \item $ g(\bB) $ is the quadratic penalty function applied elementwise, such that $ g(\bB) = [\bB_1^2, \bB_2^2, \dots, b_n^2] $.
\end{itemize}
The follower seeks to maximize $ \mu_B(\aB, \bB) $ by choosing $ \bB $, given the leader’s action $ \aB $.

Both players are subject to weighted $ L_1 $-norm constraints on their actions:
\begin{align}
\sum_{i=1}^{n} |\theta_{A,i} a_i| \leq C_A \quad \text{for the leader} \\
\sum_{i=1}^{n} |\theta_{B,i} b_i| \leq C_B \quad \text{for the follower}
\end{align}
where $ C_A $ and $ C_B $ are constants that limit the magnitude of the actions $ \aB $ and $ \bB $, respectively, and $ \theta_{A,i} $, $ \theta_{B,i} $ are the elements of $ \theta_A $ and $ \theta_B $.

\textbf{Follower’s Optimization Problem (Best Response):} Given the leader’s action $ \aB $, the follower solves the following optimization problem:
\begin{align}
b^*(\aB) = \arg\max_b \left( \theta_B^\top (\aB - \bB) - \theta_B^\top g(\bB) \right)
\end{align}
subject to:
\begin{align}
\sum_{i=1}^{n} |\theta_{B,i} b_i| \leq C_B
\end{align}
This is a quadratic optimization problem due to the quadratic penalty $ g(\bB) $, and the constraint enforces that the weighted $ L_1 $-norm of the follower's action does not exceed $ C_B $.

\textbf{Leader’s Optimization Problem:} Given the follower’s best response $ \bB^*(\aB) $, the leader solves the following optimization problem:
\begin{align}
a^* = \arg\max_a \left( \theta_A^\top (\aB - \bB^*(\aB)) - \theta_A^\top f(\aB) \right)
\end{align}
subject to:
\begin{align}
\sum_{i=1}^{n} |\theta_{A,i} a_i| \leq C_A
\end{align}
This is also a quadratic optimization problem due to the quadratic penalty $ f(a) $, and the constraint enforces that the weighted $ L_1 $-norm of the leader’s action does not exceed $ C_A $.

\textbf{Stackelberg equilibrium: }The Stackelberg equilibrium is reached when:
\begin{align}
a^* = \arg\max_{\aB} \left( \theta_A^\top (\aB - \bB^*(\aB)) - \theta_A^\top f(\aB) \right), \quad b^*(\aB) = \arg\max_{\bB} \left( \theta_B^\top (\aB - \bB) - \theta_B^\top g(\bB) \right)
\end{align}
subject to the respective $ L_1 $-norm constraints. At equilibrium, the leader chooses $ \aB^* $ that maximizes their reward given the follower’s optimal response $ \bB^*(\aB) $, and the follower chooses $ \bB^*(\aB) $ that maximizes their reward given the leader’s action.

\raggedbottom

\subsubsection{SSG Empirical Results} \label{sec:ssg_empirical}

\begin{figure}[t]
\minipage{0.43\textwidth}
  \includegraphics[width=\linewidth]{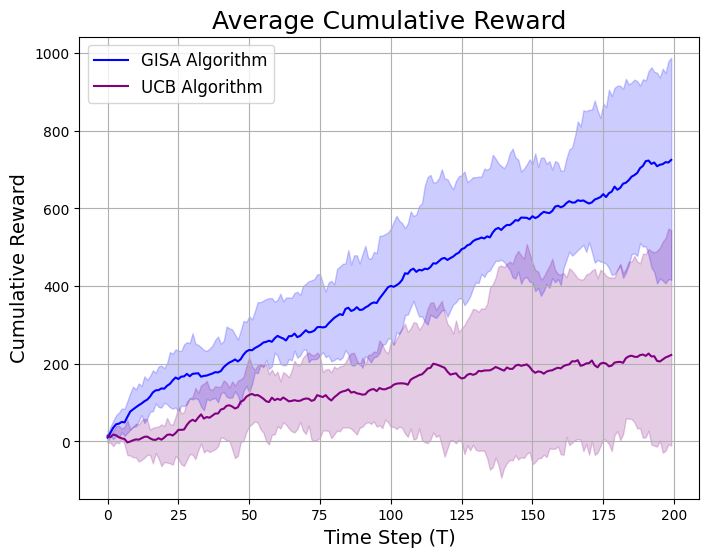}
\endminipage\hfill
\minipage{0.43\textwidth}
  \includegraphics[width=\linewidth]{figures/ssg_reg_ex1_v2.png}
\endminipage\hfill
\caption*{Parameters: $\theta_{A} = \left[ -0.850, -0.049, 0.620, -0.535, -0.313 \right], \quad \theta_{B} = \left[ -1.554, -0.176, 0.576, 0.803, 0.358 \right], \sigma = 0.1$}
\minipage{0.43\textwidth}
  \includegraphics[width=\linewidth]{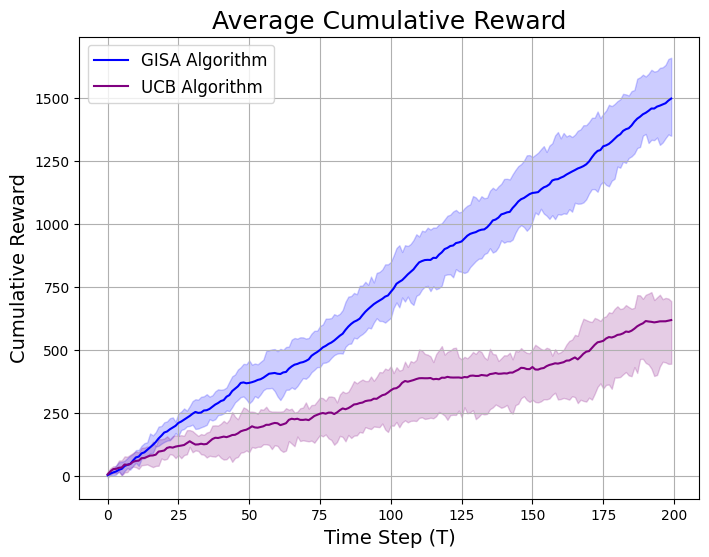}
\endminipage\hfill
\minipage{0.43\textwidth}
  \includegraphics[width=\linewidth]{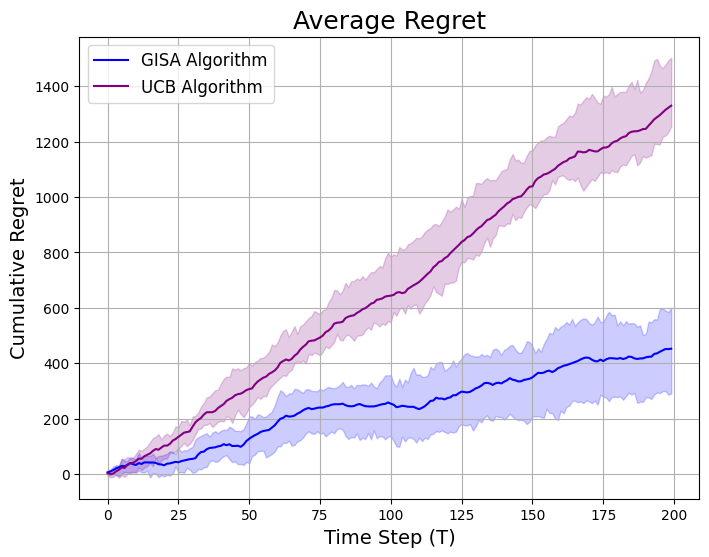}
\endminipage\hfill
\caption*{Parameters: $\theta_{A} = \left[ -1.557, -0.011, 0.821, -1.307, -0.262 \right], \quad \theta_{B} = \left[ -1.499, 0.317, -0.106, 0.465, -0.476 \right], \sigma = 0.1$}
\minipage{0.43\textwidth}
  \includegraphics[width=\linewidth]{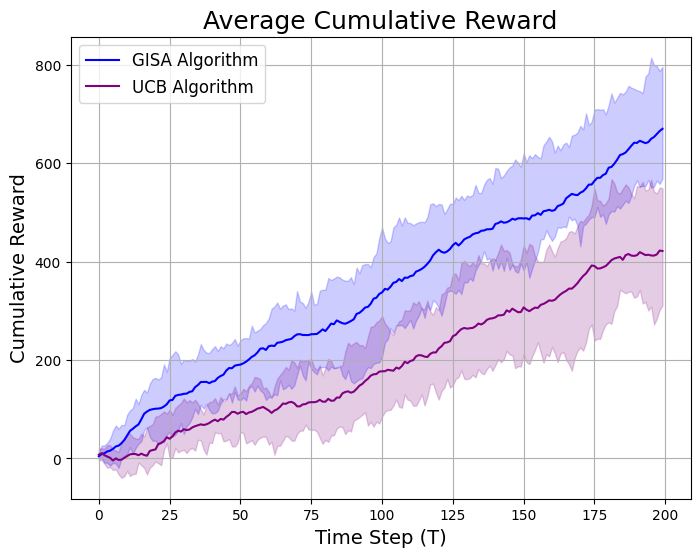}
\endminipage\hfill
\minipage{0.43\textwidth}
  \includegraphics[width=\linewidth]{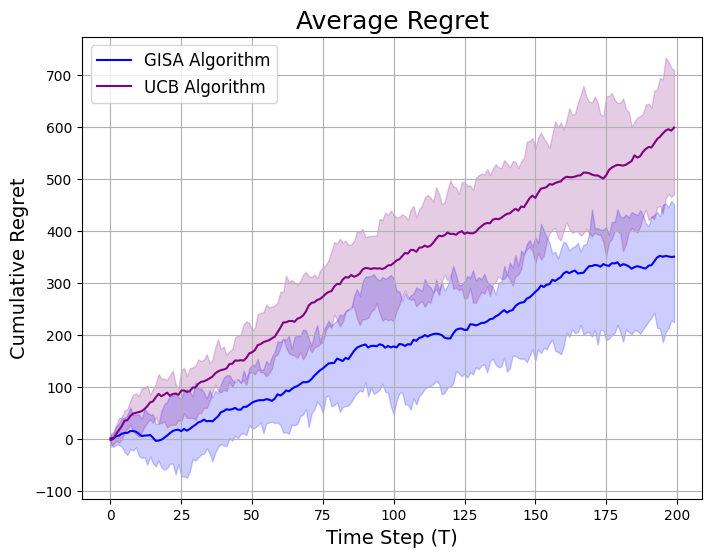}
\endminipage\hfill
\caption*{Parameters: $\theta_{A} = \left[ -0.599, -0.951, 0.156, -0.732, 0.375 \right], \quad \theta_{B} = \left[ -0.866, 0.708, -0.156, 0.601, -0.058 \right], \sigma = 0.1$}
\caption{Mean values are computed over 1,000 trials. All shaded areas, denoting confidence intervals, are within a quarter quantile. UCB arms were discretized to increments of 200, with an exploration constant $\alpha_{UCB} = 0.01.$ $\stEmb$ dimension is set to $D = 10$.} \label{fig:r5_experimental_results}
\end{figure}

\clearpage





\end{document}